\theoremstyle{remark}
\theoremstyle{plain}
\newtheorem{theorem}{Theorem}
\newtheorem{corollary}{Corollary}
\newtheorem{lemma}{Lemma}
\theoremstyle{definition}
\newtheorem{definition}{Definition}
\newtheorem{assumption}{Assumption}
\crefname{equation}{Eq.}{Eqs.}
\crefname{assumption}{Assumption}{Assumptions}
\crefname{definition}{Definition}{Definitions}
\crefname{theorem}{Theorem}{Theorems}
\crefname{lemma}{Lemma}{Lemmas}
\crefname{proposition}{Proposition}{Propositions}
\crefname{remark}{Remark}{Remarks}
\crefname{corollary}{Corollary}{Corollaries}
\providecommand{\ind}[1]{\mathds{1}\cbrc{ #1 }}
\providecommand{\Ham}[1]{\mathrm{Ham}\brc{ #1 }}
\providecommand{\E}{\mathbb{E}}
\newcommand{\floor}[1]{\left\lfloor #1 \right\rfloor}
\newcommand{\abs}[1]{\left| #1 \right|}
\DeclareRobustCommand{\rev}[1]{%
  \mathpalette\do@rev{#1}%
}
\newcommand{\do@rev}[2]{%
  \fix@rev{#1}{+}%
  \reflectbox{$\m@th#1\vec{\reflectbox{$\fix@rev{#1}{-}\m@th#1#2\fix@rev{#1}{+}$}}$}%
  \fix@rev{#1}{-}%
}
\newcommand{\fix@rev}[2]{%
  \ifx#1\displaystyle
    \mkern#23mu
  \else
    \ifx#1\textstyle
      \mkern#23mu
    \else
      \ifx#1\scriptstyle
        \mkern#22mu
      \else
        \mkern#22mu
      \fi
    \fi
  \fi
}
\providecommand{\KL}[2]{\mathrm{KL}( #1 || #2 )}
\providecommand{\TV}[2]{\mathrm{TV}( #1 , #2 )}
\newcommand{\brc}[1]{\left( #1 \right)}
\newcommand{\sbrc}[1]{\left[ #1 \right]}
\newcommand{\cbrc}[1]{\left\{ #1 \right\}}
\providecommand{\eps}{\varepsilon}
\providecommand{\T}{\intercal}
\renewcommand{\d}{\mathrm{d}}
\providecommand{\Tr}{\mathrm{Tr}}
\providecommand{\mbR}{\mathbb{R}}
\providecommand{\calX}{\mathcal{X}}
\providecommand{\calL}{\mathcal{L}}
\providecommand{\calO}{\mathcal{O}}
\providecommand{\yuchen}[1]{#1}
\providecommand{\correction}[1]{#1}
\let\citep\cite
\title{Discrete Diffusion Models: Novel Analysis and New Sampler Guarantees}
\author{Yuchen Liang$^\dagger$, \quad Yingbin Liang$^\dagger$, \quad Lifeng Lai$^\ddagger$, \quad Ness Shroff$^\dagger$ \\
$^\dagger$The Ohio State University \qquad $^\ddagger$University of California, Davis
}
\begin{document}

\maketitle
\def\thefootnote{}\footnotetext{This work has been accepted to NeurIPS 2025.}\def\thefootnote{\arabic{footnote}}

\begin{abstract}
Discrete diffusion models have recently gained significant prominence in applications involving natural language and graph data. A key factor influencing their effectiveness is the efficiency of discretized samplers. Among these, $\tau$-leaping samplers have become particularly popular due to their theoretical and empirical success. However, existing theoretical analyses of $\tau$-leaping often rely on somewhat restrictive and difficult-to-verify regularity assumptions, and their convergence bounds contain quadratic dependence on the vocabulary size. In this work, we introduce a new analytical approach for discrete diffusion models that removes the need for such assumptions. For the standard $\tau$-leaping method, we establish convergence guarantees in KL divergence that scale linearly with vocabulary size, improving upon prior results with quadratic dependence. Our approach is also more broadly applicable: it provides the first convergence guarantees for other widely used samplers, including the Euler method and Tweedie $\tau$-leaping. Central to our approach is a novel technique based on differential inequalities, offering a more flexible alternative to the traditional Girsanov change-of-measure methods. This technique may also be of independent interest for the analysis of other stochastic processes.
\end{abstract}

\section{Introduction}
\label{sec:intro}

Generative modeling is a core component of deep learning, aiming to generate samples that closely approximate distributions of training data. 
Recently, diffusion models \cite{sohldickstein2015,ho2020ddpm,austin2021discrete} have gained significant attention. These models really work well in various generative tasks, particularly in image and video generation \cite{dalle2,rombach2022stable-diffusion}. Their effectiveness has been extensively documented in several comprehensive surveys \citep{diffusion-survey-song,diffusion-survey-croitoru,diffusion-survey-medical}. 

{\em Discrete (i.e., discrete sample space)} diffusion models form a specialized subclass within the broader family of diffusion models and have gained increasing prominence in generative modeling. Similar to their continuous (i.e., continuous sample space) counterparts, they adopt the standard diffusion framework comprising of a forward and a reverse process. Differently, by operating over {\em discrete} sample spaces, both processes are formulated as discrete-state Markov chains, first under discrete-time \cite{austin2021discrete} and later under continuous-time \cite{campbell2022discrete}.
Since the seminal work \cite{austin2021discrete}, discrete diffusion models have been remarkably useful for a variety of discrete-data applications, achieving excellent performance in natural language processing (NLP) \cite{lou2024entropy}, graph generation \cite{vignac2023digress,diffusion-survey-graph}, and drug design \cite{diffusion-survey-drug-design}. Recent advances, such as SEDD and RADD, have further demonstrated language generation capabilities that rival those of traditional autoregressive models such as GPTs \cite{lou2024entropy,ou2025absorb}.


Despite their empirical success, the theoretical understanding of discrete diffusion models remains limited. Existing theoretical studies on discrete-state diffusion models have mainly focused on various sampling methods. One focus has been on the \emph{random}-step-size samplers, which sample both the next-state and the transition-times in the reverse process. This includes the Gillespie's algorithm \cite{gillespie1976general} and the uniformization algorithm \cite{uniformization1992}. Specifically, the {\em uniformization} algorithm has been analyzed in \cite{chen2024uniformization,ren2025stoc-int}. While these algorithms are able to simulate the reverse process exactly, their convergence guarantee is characterized only in a {\em stochastic} manner. In other words, there is no guarantee of fixed and finite number of iterations to achieve a target accuracy due to the algorithm’s inherent randomness in the number of iterations, which, in the worst case, can be unbounded. 

Another theoretical focus is on \emph{deterministic}-step-size samplers, which enjoy finite-iteration guarantee for achieving a target accuracy. One type of such samplers is what we call as {\em Kolmogorov} sampler, which directly solves the piece-wise Kolmogorov equation at each discretized step, for which convergence guarantees were provided in \cite{zhang2025conv-disc,conforti2025markov}. Such an approach may not be practically feasible, because it involves high computational costs in practice.
At each step, the Kolmogorov sampler needs to solve a matrix exponential, which typically requires eigen-decomposition and multiplication of the reverse rate matrices (of size $S^d \times S^d$, where $S$ is the vocabulary size and $d$ is the data dimension).
This renders the per-step computational complexity to be exponential in $d$.


A more practical deterministic-step-size sampler is \emph{$\tau$-leaping} \cite{gillespie2001}, which has drawn a lot of attention \cite{campbell2022discrete,lou2024entropy,ren2025stoc-int}. Rather than solving for the matrix exponential exactly, the algorithm applies all transitions within a single step simultaneously for each next-state and dimension. The existing theoretical studies \cite{campbell2022discrete,ren2025stoc-int} on $\tau$-leaping have some fundamental limitations that need to be resolved. (1) \textbf{(Strong or hard-to-check assumptions)} Existing results require either strong assumptions on the estimation error or somewhat hard-to-check assumptions due to the analysis technique. Specifically, \cite{campbell2022discrete} assumes that the reverse rate matrix is well estimated under the $L^\infty$ error for each data sample and for each diffusion time, which is stronger compared to that only in expectation. 
\cite{ren2025stoc-int} requires additional regularity assumptions on the diffusion path in order to invoke the Girsanov change-of-measure framework, which are usually hard to check in practice. (2) \textbf{(High dependence on vocabulary size)} Existing error bounds have high dependence on the vocabulary size $S$. In particular, the iteration complexity grows in fourth-power in both $d$ and $S$ for \cite{campbell2022discrete}, and it grows quadratically in $S$ for \cite{ren2025stoc-int}.\footnote{The error bound in \cite{ren2025stoc-int} does not explicitly characterize the dependence on $S$. However, it is straightforward to derive the quadratic dependence on $S$ from their proof steps.} This might be unsatisfactory in practice where $S$ is large (e.g., $S = 50257$ for GPT-2 tasks \cite{lou2024entropy}). To this end, it is important to obtain a tighter bound on $S$. Thus, these open challenges can be summarized into the following intriguing question:

{\em Question 1: Can we establish convergence guarantees for $\tau$-leaping under more relaxed assumptions? Meanwhile, can we achieve a better dependency on $S$?} 

While $\tau$-leaping is a practical sampler, it also has several weaknesses. For each step, the sampler requires sampling from a Poisson random variable \emph{for each dimension and each token}, which becomes per-step sampling heavy especially when the vocabulary size $S$ is large. Also, especially for non-ordinal or categorical data, there could occur unmeaningful jumps such as multiple jumps within the same dimension or out-of-range jumps. For practical implementations, one usually needs additional constraints to restrict only one change to only one target location \cite{campbell2022discrete}, which might further increase the sampling complexity. In comparison, empirical studies usually employ the Euler method or Tweedie $\tau$-leaping \cite{lou2024entropy,nisonoff2025dcfg}, which are more sampling efficient than vanilla $\tau$-leaping. However, existing analytical tools are not directly applicable to these samplers. 

{\em Question 2: Can we provide convergence guarantees for \correction{practically} more efficient samplers having deterministic step-sizes, such as the Euler method and Tweedie $\tau$-leaping?}

This paper will provide affirmative answers to both of the above questions.

\subsection{Our Contributions}

Our main contribution in this paper lies in developing a novel analysis framework to analyze discrete diffusion models to improve/advance the current theory. 
Our detailed contributions are as follows.


\textbf{Novel Analysis Technique:} We develop a novel framework for analyzing discrete diffusion models.
In particular, we directly analyze the rate-of-change of the KL-divergence between the true posterior and the sampling distribution, for which we provide an upper bound in terms of the respective rate matrices in the two processes by directly invoking the Kolmogorov equations.
Our analysis (i) provides convergence guarantee without any regularity conditions, i.e., without requiring that the likelihood function on the sampling path is a local martingale, which is typically required for the Girsanov change-of-measure technique; (ii) enables to analyze a broader class of practical samplers, such as the Euler method and Tweedie $\tau$-leaping \cite{lou2024entropy}, for which it is challenging to apply the analysis based on the Girsanov change-of-measure framework.

\textbf{Improved Result for $\tau$-leaping:} Based our analysis tools, we show that $\tau$-leaping generates a sample distribution that is close to the target distribution within an $\eps$-KL accuracy with $\Tilde{O}(d^2 S / \eps)$ iteration steps. 
In particular, the iteration guarantee is {\em linear} in $S$ under the fully discrete score-entropy estimation loss, which improves the {\em quadratic} dependency on $S$ in \cite{ren2025stoc-int}. This order-level improvement has important practical implications, especially because $S$ is often very large in NLP tasks (e.g., $S = 50257$ in \cite{lou2024entropy}).


\textbf{New Convergence on Euler method, and Tweedie $\tau$-leaping:} Our analysis framework further provides convergence guarantees for practically efficient samplers: the Euler method and Tweedie $\tau$-leaping. The existing analysis tools do not seem to be applicable directly for lack of a path-wise measure defined for both samplers. Our result shows that these two samplers enjoy the same performance guarantee as $\tau$-leaping even with smaller sampling complexity at each step. 
Our approach involves constructing an approximate sampler which is asymptotically equivalent to both samplers, and then establishing the convergence guarantees for the constructed sampler. This constructed sampler might be of independent interest to future theoretical investigations of these two samplers.

\begin{table}
    \centering
    \begin{tabular}{c|c|c|c|c|c|c}
        \textbf{Sampler} & \textbf{Space} & \textbf{Assump} & \textbf{\makecell{Comp \\ per-step}} & \textbf{\makecell{Sample \\ per-step}} &  \textbf{\makecell{Results: \\Num of steps}} & \textbf{Reference} \\ \hline

        \multirow{2}{*}{\makecell{Uniform-\\ization}} & $\{0,1\}^d$ & int-path & $\calO(d)$ & $\calO(d)$  & $\mathrm{Pois}\brc{\calO\brc{d}}$ & \cite[Thm~6]{chen2024uniformization} \\ \cline{2-7}

        & $[S]^d$ & int-path & $\calO(S d)$ & $\calO(d)$ & $\mathrm{Pois}\brc{\calO\brc{d S}}$ & \cite[Thm~4.9]{ren2025stoc-int} \\ \hline \hline
        
        \makecell{Kolmogo-\\rov} & $[S]^d$ & int-path & $\calO(S^{3 d})$ & $\calO(d)$ & $\calO\brc{\frac{\sqrt{d M} S}{\sqrt{\eps}}}$ & \cite[Cor~1]{zhang2025conv-disc} \\ \hline

        DMPM & $\{0,1\}^d$ & disc-max & $\calO(d)$ & $\calO(1)$ & $\calO\brc{\frac{2^{d} d}{\eps^2 \delta^d}}$ & \cite[Cor~2.7]{conforti2025markov} \\ \hline
        
        \multirow{3}{*}{\makecell{$\tau$-\\leaping}} & $[S]^d$ & cont-max & $\calO(S d)$ & $\calO(S d)$ & $\calO\brc{\frac{d^4 S^4 + C d^2}{\sqrt{\eps}}}$ & \cite[Thm~1]{campbell2022discrete} \\ \cline{2-7}
        
        & $[S]^d$ & disc-sum & $\calO(S d)$ & $\calO(S d)^{**}$ & $\calO\brc{\frac{d^2 S^2}{\eps}}$ & \cite[Thm~4.7]{ren2025stoc-int} \\ \cline{2-7}
        
        & $[S]^d$ & disc-sum & $\calO(S d)$ & $\calO(S d)$ & \color{blue} $\calO\brc{\frac{d^2 S}{\eps}}$ & \color{blue} Thm~\ref{thm:conv_disc_diff} (here) \\ \hline

        \color{blue} Euler & $[S]^d$ & disc-sum & $\calO(S d)$ & $\calO(d)$ & \color{blue} $\calO\brc{\frac{d^2 S}{\eps}}$ & \color{blue} Thm~\ref{thm:euler_tweedie} (here) \\ \hline
        
        \color{blue} Tweedie & $[S]^d$ & disc-sum & $\calO(S d)$ & $\calO(d)$ & \color{blue} $\calO\brc{\frac{d^2 S}{\eps}}$ & \color{blue} Thm~\ref{thm:euler_tweedie} (here) 
    \end{tabular}
    \caption{\small{\em Summary of results for discrete diffusion samplers in terms of the number of steps needed to achieve $\eps$-accuracy in $\KL{q_\delta}{p_{T-\delta}}$ and the per-step computation and sampling complexity. Note that all $\log$-dependencies are not shown. Here $d$ is the dimension (window-length for generative tasks), $S$ is the vocabulary size, $M$ is the upper bound of the score estimates (which grows in $\calO(S)$), $\mathrm{Pois}(\lambda)$ is a Poisson r.v. with rate $\lambda$, and $q_\delta$ is such that $\TV{q_0}{q_\delta} \lesssim d \delta$.
    \textbf{Comparison of results:} (i) Uniformization samplers suffer from {\em random} number of steps guarantee, whose actual iterations to convergence might grow unbounded. (ii) Kolmogorov samplers enjoy fixed and finite step guarantee, but suffer from exponential in $d$ per-step computational complexity, making it generally not practical. (iii) DMPM is an Euler-type method that differs from the standard Euler schemes \cite{lou2024entropy,nisonoff2025dcfg} studied in this paper. In particular, at most one coordinate is updated at each step, whereas the standard Euler sampler \cite{lou2024entropy,nisonoff2025dcfg} first constructs sampling probabilities for all coordinates and then performs a simultaneous categorical draw across all of them. After our initial submission of the paper, \cite{conforti2025markov} provided an updated result, which is $\calO(d/\eps^2)$. (iv) The result on $\tau$-leaping in \cite{campbell2022discrete} has high dependence on $d$ and $S$ (note that $C = \Omega(S^2)$ is an implicit function of $S$), although is more efficient in $\epsilon$. The number of steps in \cite{campbell2022discrete} is calculated in order to reach $\sqrt{\eps}$ total-variation error, which is weaker than the KL-divergence error considered in other guarantees in the table. The ``cont-max" assumption is strong, which requires an upper bound for each time on each sample. The result on $\tau$-leaping in \cite{ren2025stoc-int} has dependence on $S^2$. Note that $S$ is quite large for many NLP tasks. (v) Our result on $\tau$-leaping improves that of \cite{ren2025stoc-int} by a factor of $S$. Our analysis also removes the regularity assumptions required in \cite{ren2025stoc-int}. (vi) Our result on the Euler method and Tweedie $\tau$-leaping enjoys the same convergence rate as vanilla $\tau$-leaping, but these two samplers have smaller per-step sampling complexity than $\tau$-leaping by a factor of $S$.
    }}
    \label{tab:literature}
    \vspace{-6mm}
\end{table}

\subsection{Related Works}

We have provided more prior works in \Cref{sec:intro-works}.

\textbf{Empirical Studies on Discrete Diffusion Models:} Unlike continuous-space diffusion models, discrete-space diffusion models are emerging as a strong contender in generative modeling, particularly for tasks involving discrete data \cite{campbell2022discrete,lou2024entropy} (see some surveys in \cite{diffusion-survey-graph,diffusion-survey-drug-design}). The continuous-time discrete diffusion formulation was first developed in \cite{campbell2022discrete}. Recently, \cite{lou2024entropy} first proposed the score-entropy estimation error and achieved empirical success in text generation tasks. They also proposed a new discrete diffusion sampler by approximately solving the Tweedie's formula, which they call Tweedie $\tau$-leaping. For per-step sampling, note that most of these works use categorical sampling algorithms, yielding good empirical performances.

\textbf{Theory on Discrete Diffusion Models:} While there are numerous results for continuous-space diffusion models, the theoretical understanding of discrete diffusion models remains limited. Among them, \cite{campbell2022discrete} provided an early convergence analysis under the TV metric using $\tau$-leaping. However, the estimation error is quite strong, and the parameter dependencies are also high. More recently, under the score-entropy estimation errors, \cite{chen2024uniformization} provided the convergence result using the uniformization sampler on a $d$-dimensional hypercube, which was subsequently extended to general $[S]^d$ space in \cite{ren2025stoc-int}. For deterministic-step-size samplers, \cite{zhang2025conv-disc,conforti2025markov} performed analyses by assuming the accessibility of a perfect per-step sampler via solving the Kolmogorov equation, and \cite{ren2025stoc-int} investigated the more practical $\tau$-leaping sampler. Among these works, \cite{zhang2025conv-disc} required the score-entropy loss to be evaluated on the continuous sampling path, whereas \cite{ren2025stoc-int,conforti2025markov} only required it to be on the discrete sampling grid. Notably, all of these works \cite{chen2024uniformization,ren2025stoc-int,zhang2025conv-disc,conforti2025markov} employed the Girsanov change-of-measure framework, which requires such regularity conditions (that the likelihood function is a path-wise local martingale) that are hard to check in practice.

\section{Preliminaries of Discrete Diffusion Samplers}
\label{sec:prelim}

In this section, we provide the background of continuous-time discrete-space diffusion sampler.

\subsection{The Forward Process}

Let the initial (discrete) data $x_0=\{x_0^1,\ldots,x_0^d\}$ consist of $d$ tokens, where each token $x_0^i \in [S]$ with $S$ being the cardinality of the token space. Hence, $x_0 \in [S]^d$. Let $q_0^i$ be the probability mass function (p.m.f.) of $x_0^i$, which is the probability simplex over $[0,1]^{S}$. We further let $q_0 \in [0,1]^{S^d}$ be entire p.m.f. of the initial data $x_0$.  

The forward process can be characterized by a Continuous-Time Markov Chain (CTMC) from $t=0$ to $t=T$, which is defined by an initial distribution $q_0$ and a \textit{transition rate matrix} $R_t \in \mbR^{S^d \times S^d}$. Intuitively, each entry $R_t(x,y)$ in the rate matrix $R_t$ characterizes how fast state $x$ transitions to state $y$ at time $t$, where $x,y \in [S]^d$. Thus, for a sufficiently small time duration $\Delta t$, the conditional probability $q_{t+\Delta t|t}(y|x)$ of state $y$ at time $t+\Delta$ given state $x$ at time $t$ is given by
\begin{equation} \label{eq:def_forward}
    q_{t+\Delta t|t}(y|x) = \ind{y=x} + R_t(x,y) \Delta t + o(\Delta t),\quad \forall x,y \in [S]^d.
\end{equation}
\correction{Here $\ind{A}$ is the indicator function of an event $A$.} Equivalently, $q_t$ satisfies the Kolmogorov forward equation: $\frac{\d}{\d t}q_t(y) := \sum_{x \in [S]^d} q_t(x) R_t(x,y)$.

We now discuss several properties for the rate matrix $R_t$. For a valid CTMC, $R_t$ needs to satisfy that: 1) for all $x,y \in [S]^d$, $R_t(x,y) \geq 0$ if $x \neq y$; 2) $R_t(x,x) \leq 0$; and 3) $\sum_{y \in [S]^d} R_t(x,y) = 0$. Also, to make the computation tractable for large $S$ and $d$, a common practice is to make each token propagate \textit{independently} and \textit{homogeneously} (i.e., over the dimension) \cite{campbell2022discrete,lou2024entropy}. Then, $R_t$ necessarily satisfies that, for all $ x \neq y$, \cite[Prop. 3]{campbell2022discrete}
\begin{equation} \label{eq:def_forward_rate}
    R_t(x,y) = \begin{cases}
    R^\text{tok}_t(x^i, y^i) & \text{if}~\Ham{x,y}=1, \\
    0 & \text{otherwise}.
\end{cases}
\end{equation}
Here $R^\text{tok}_t \in \mbR^{S \times S}$ is the token transition rate matrix (corresponding to $q^i_t$), and $\Ham{x,y}$ denotes the number of unequal tokens between $x$ and $y$. We follow \cite{campbell2022discrete} and let $R^\text{tok}_t = \beta_t R_{\text{base}}$ for some noise schedule $\beta_t > 0$. In this paper, we are primarily interested in the constant noise schedule (i.e., $\beta_t \equiv 1$) as in the previous studies (e.g., \cite{ren2025stoc-int,zhang2025conv-disc}).
With such $R_t$, we can obtain an analytical solution for $q_{t|0}$ useful for training. Further, we primarily focus on the case where 
\[ R_{\text{base}} := \frac{1}{S} \bm{1}_S \bm{1}_S^\T - I_S, \]
which is common in many applications \cite{austin2021discrete,campbell2022discrete,lou2024entropy}.\footnote{Most of our results can be extended for general $R_t$'s that satisfy \cite[Assumption 4.3]{ren2025stoc-int}.} An immediate implication is that for all $x \in [S]^d$, 
$R_t(x,x) = -\sum_{y \neq x} R_t(x,y) = - \frac{S-1}{S} d$.
Note that $q_T \approx \text{Uniform}([S]^d)$ for the chosen $R_{\text{base}}$.

\subsection{The Reverse Process}

The forward process has a corresponding reverse process whose marginal distribution matches that of the forward process \cite{kelly2011reverse}. By \cite[Prop. 1]{campbell2022discrete}, such a reverse process is a CTMC with initial distribution $\rev{q}_0 := q_T$ and transition rate $\rev{R}_t$, where $\rev{R}_t$ satisfies
\begin{equation} \label{eq:def_rev_proc}
    \rev{R}_t(x,y) := R_{T-t}(y,x) \frac{q_{T-t}(y)}{q_{T-t}(x)},\quad \forall x \neq y,\quad \text{and}~~\rev{R}_t(x,x) = -\sum_{y \neq x} \rev{R}_t(x,y).
\end{equation}
We let the reverse process stop at $t = T-\delta$ with some small constant $\delta$. This technique is called early-stopping to prevent irregularities in the score when $t \to 0^+$. 
One immediate consequence is that with the $R_t$ in \eqref{eq:def_forward_rate}, whenever $\Ham{x,y} \geq 2$, $\rev{R}_t(x,y) = 0$.
Note that $\rev{q}_t := q_{T-t}$ for all $t \in [0,T-\delta]$.

\subsection{The Sampling Process}

In order to implement the reverse process, several approximations need to be made for sampling. Let $p_t$ be the marginal p.m.f. at time $t \in [0,T-\delta]$ in the sampling process. 
First, since $q_T$ is unavailable,
we start the sampling process with $p_0 := \text{Uniform}([S]^d)$, which is the stationary distribution of the CTMC. Second, when $y \neq x$, we estimate the intractable ratio $\frac{q_{t}(y)}{q_{t}(x)}$ with $s_{t}(y,x)$, which is known as the \textit{concrete score function}. Here we adopt the score-entropy (SE) loss \cite{lou2024entropy} to train the score function, which is defined as
\begin{equation} \label{eq:def_score_ent}
    \calL_{SE}(t;s_{t}) := \E_{x_t \sim q_{t}} \sum_{y \neq x_t} R_{t} (y,x_t) \brc{s_{t}(y,x_t)-\frac{q_{t}(y)}{q_{t}(x_t)} - \frac{q_{t}(y)}{q_{t}(x_t)} \log \frac{s_{t}(y,x_t)}{q_{t}(y) / q_{t}(x_t)}}.
\end{equation}
Third, the continuous-path needs to be discretized for practical algorithms. 
Let $\{t_k\}_{k\in [N]}$ be the discretization points on which $s_{T-t_k}$ is accessible, where $t_0 = 0$ and $t_N = T-\delta$. Define the estimated rate on the sampling grid as 
\begin{equation} \label{eq:def_est_reverse_rate}
    \hat{R}_{t_k}(x,y) := R_{T-t_k}(y,x) s_{T-t_k}(y,x).
\end{equation}

{\bf $\tau$-leaping:} A popular approximate sampler is called \textit{$\tau$-leaping}, which has a deterministic number of sampling steps with polynomial per-step computation in $S$ and $d$. Given $x_{t_k}$, the next state is obtained as
\begin{equation} \label{eq:def_tau_leap}
\textstyle    x_{t_{k+1}} = x_{t_k} + \sum_{i=1}^d e^i \sum_{y^i \in [S]} (y^i-x_{t_k}^i) \mathrm{Pois}\brc{\hat{R}_{t_k}(x_{t_k}, x_{t_k}^{-i} \oplus_i y^i)(t_{k+1}-t_{k})},
\end{equation}
where $e^i$ is a unit (one-hot) vector on token $i \in [d]$, $\mathrm{Pois}(\lambda)$ is a Poisson random variable with rate $\lambda$, and $v^{-i} \oplus_i a$ is a vector that replaces the $i$-th element of $v$ as $a \in [S]$.
Intuitively, the sampler applies all transitions within $[t_k,t_{k+1})$ to a single component simultaneously, where the transition rate comes from the estimated reverse rate at the initial time.
As shown in \cite[Appendix~B.5]{campbell2022discrete}, the $\tau$-leaping process is equivalent to a CTMC with a piece-wise constant rate matrix given by
\begin{equation} \label{eq:def_tau_leap_rate}
    \hat{R}^{\tau\text{-leap}}_t(x,y) := \hat{R}_{t_k}(x_{t_k}, y-x+x_{t_k}),\quad \forall x \neq y,~\forall t \in [t_k, t_{k+1}).
\end{equation}
While $\tau$-leaping is popular in theoretical studies \cite{campbell2022discrete,ren2025stoc-int}, its sampling complexity is quite high because it requires drawing $\calO(Sd)$ Poisson r.v.s at each sampling step. Rather, many empirical samplers, such as the Euler method and Tweedie $\tau$-leaping, draw only $\calO(d)$ categorical r.v.s \cite{lou2024entropy,ou2025absorb,nisonoff2025dcfg}. 

{\bf Euler method:} The Euler method is given by, for each $k=0,\dots,N-1$,
\begin{equation} \label{eq:def_euler}
x^i_{t_{k+1}} =
\begin{cases}
    a, & \text{w.p.}~\hat{R}_{k}^i(x_{t_k}^i,a) (t_{k+1}-t_k),~\forall a \neq x^i_{t_{k}}\\
    x^i_{t_{k}}, & \text{w.p.}~1 + \hat{R}_{k}^i(x_{t_k}^i,x_{t_k}^i) (t_{k+1}-t_k)
\end{cases},
\end{equation}
where $\hat{R}_k^i(z,a)$ is the token-wise rate defined as
\begin{equation} \label{eq:def_truncated_tau_leap_token_rate}
    \hat{R}_k^i(z,a) := \hat{R}_{t_k}(x_{t_k}, x_{t_k}^{-i} \oplus_i a) \ind{z = x_{t_k}^i},\quad \forall a \neq x_{t_k}^i.
\end{equation}
{\bf Tweedie $\tau$-leaping:} The Tweedie $\tau$-leaping sampler is given by, for each $k=0,\dots,N-1$,
\begin{equation} \label{eq:def_tweedie_tau_leap}
x^i_{t_{k+1}} = \begin{cases}
    a, & \text{w.p.}~ \brc{ \sbrc{e^{-(t_{k+1}-t_k) R_{\text{base}}} }^{a:} s_{T-t_k}(x_{t_k}^{-i} \oplus_i \cdot, x_{t_k}) } \times \\
    & \qquad \qquad \sbrc{e^{(t_{k+1}-t_k) R_{\text{base}}} }^{a,x^i_{t_{k}}},\quad \forall a \neq x^i_{t_{k}} \\
    x^i_{t_{k}}, & \text{otherwise}
\end{cases}.
\end{equation}
Note that neither sampler has such an $\hat{R}_t$ defined on the continuous-path $(t_k,t_{k+1})$. This renders the theoretical analysis for these two samplers to be still lacking, since existing analytical tools require the path-wise measure of the sampling process to be well-defined (for all $t \in [0,T-\delta]$).

\subsection{Key Notations}

Let $x^i (1\leq i \leq d)$ denote the $i$-th element of a vector $x \in [S]^d$ and $x^{-i} \in [S]^{d-1}$ denote the $i$-th element removed.
Define $\Ham{x,y}$ as the Hamming distance between two vectors $x$ and $y$.
For a positive integer $n$, $[n] := \{1,\dots,n\}$. 
Write $\bm{1}_S$ as a vector of length $S$ that contains all 1's, and $I_S$ as an identity matrix of size $S \times S$.
See a full list of notations in \Cref{app:notations}.

\section{Main Convergence Results}
\label{sec:converg-disc-diff}

The $\tau$-leaping sampler has been analyzed in \cite{campbell2022discrete,ren2025stoc-int} with convergence guarantees. However, the convergence rate in \cite{campbell2022discrete} has high dependence on $S$ and $d$ (which grows in fourth-power in both $d$ and $S$). Further, \cite{campbell2022discrete} assumes that the reverse rate matrix is well estimated under $L^\infty$ error. The study in \cite{ren2025stoc-int} relaxed such an assumption to be in expectation, and the convergence rate in \cite{ren2025stoc-int} grows only quadratically in $S$. However, the study in \cite{ren2025stoc-int} requires additional difficult-to-check assumptions on the diffusion path in order to invoke the Girsanov change-of-measure framework. 

In this section, we develop a new analytical framework, which removes the need for the regularity assumptions used in \cite{ren2025stoc-int}, and to further improve their convergence rate with a lower-order dependence on $S$. 


\subsection{Convergence Guarantees for General Sampling Processes}

We first propose the following KL-divergence decomposition for general reverse rates, without any regularity assumptions for the sampling path. Indeed, the result is applicable to any CTMC with general forward rate $R_t$ as long as the reverse rate $\rev{R}_t$ is well-defined \cite[Chapter~1]{kelly2011reverse}.

\begin{theorem} \label{thm:gen_conv}
    Recall the true reverse process with rate defined in \eqref{eq:def_rev_proc}. Suppose that $p_t$ also follows a CTMC, with initial distribution $p_0$ and rate $\hat{R}_t$. Then,
    \begin{align}
        &\KL{\rev{q}_{T-\delta}}{p_{T-\delta}} \leq \KL{\rev{q}_0}{p_0} + \sum_{k=0}^{N-1} \E_{x_{t_k} \sim \rev{q}_{t_k}} \sbrc{ \KL{\rev{q}_{t_{k+1}|t_k}(\cdot|x_{t_k})}{p_{t_{k+1}|t_k}(\cdot|x_{t_k})} } \nonumber\\
        &\leq \KL{\rev{q}_0}{p_0} + \sum_{k=0}^{N-1} \int_{t_k}^{t_{k+1}} \E_{x_t \sim \rev{q}_t} \sbrc{ \sum_{y \neq x_t} \hat{R}_t(x_t,y) - \rev{R}_t(x_t,y) + \rev{R}_t(x_t,y) \log \frac{\rev{R}_t(x_t,y)}{\hat{R}_t(x_t,y)} } \d t.
    \end{align}
\end{theorem}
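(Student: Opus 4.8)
The plan is to prove the two displayed inequalities in turn. The first is a ``chain rule plus data processing'' argument that does not involve the rate matrices at all; the second, which bounds each per-step conditional $\KL$, is where the differential-inequality technique enters. For the first inequality, restrict both the true reverse CTMC and the sampling CTMC to the grid $t_0<t_1<\cdots<t_N$, viewing each as a discrete-time Markov chain on $[S]^d$ with one-step kernels $\rev{q}_{t_{k+1}|t_k}$ and $p_{t_{k+1}|t_k}$ (both are genuine transition kernels, being marginals of CTMCs). The chain rule for $\KL$ applied to the joint laws of $(x_{t_0},\ldots,x_{t_N})$, together with the Markov property (so conditioning on the whole past collapses to conditioning on $x_{t_k}$), gives
\[
\KL{\rev{q}_{t_0,\ldots,t_N}}{p_{t_0,\ldots,t_N}} = \KL{\rev{q}_0}{p_0} + \sum_{k=0}^{N-1}\E_{x_{t_k}\sim\rev{q}_{t_k}}\sbrc{\KL{\rev{q}_{t_{k+1}|t_k}(\cdot|x_{t_k})}{p_{t_{k+1}|t_k}(\cdot|x_{t_k})}},
\]
and the data-processing inequality for the projection onto the last coordinate $x_{t_N}=x_{T-\delta}$ gives the first inequality (recall $t_0=0$, $t_N=T-\delta$).

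For the second inequality it suffices to bound each summand. Fix $k$ and condition on $x_{t_k}=z$; for $t\in[t_k,t_{k+1}]$ set $a_t:=\rev{q}_{t|t_k}(\cdot|z)$ and $b_t:=p_{t|t_k}(\cdot|z)$, so that $a_{t_k}=b_{t_k}=\delta_z$ and $\KL{a_{t_k}}{b_{t_k}}=0$. Both evolve by the Kolmogorov forward equations $\dot a_t(y)=\sum_x a_t(x)\rev{R}_t(x,y)$ and $\dot b_t(y)=\sum_x b_t(x)\hat{R}_t(x,y)$. Differentiating $\KL{a_t}{b_t}=\sum_y a_t(y)\log\tfrac{a_t(y)}{b_t(y)}$, using $\sum_y\dot a_t(y)=0$, substituting the forward equations, and reindexing the double sums via the zero-row-sum identities $\rev{R}_t(x,x)=-\sum_{y\neq x}\rev{R}_t(x,y)$ and $\hat{R}_t(x,x)=-\sum_{y\neq x}\hat{R}_t(x,y)$, I would obtain, with $r_t(x):=a_t(x)/b_t(x)$,
\begin{align*}
\frac{\d}{\d t}\KL{a_t}{b_t} &= \sum_{x}a_t(x)\sum_{y\neq x}\rev{R}_t(x,y)\brc{\log r_t(y)-\log r_t(x)} \\
&\quad + \sum_{x}b_t(x)\sum_{y\neq x}\hat{R}_t(x,y)\brc{r_t(x)-r_t(y)}.
\end{align*}

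The crux — and the step I expect to be the main obstacle — is the sharp term-by-term bound: for all $u,v\geq 0$ and $R,\hat{R}\geq 0$,
\[
uR\brc{\log v-\log u}+\hat{R}\brc{u-v}\ \leq\ u\brc{\hat{R}-R+R\log\tfrac{R}{\hat{R}}},
\]
whose gap equals $uR\,(w-1-\log w)\geq 0$ with $w:=\hat{R}v/(uR)$ when $u,v,R,\hat{R}>0$, so that it reduces to the elementary inequality $\log w\leq w-1$; the degenerate cases ($v=0$, $R=0$, or $\hat{R}=0$, where the right side may be $+\infty$) are handled directly. Applying this inside the double sum with $u=r_t(x)$, $v=r_t(y)$, $R=\rev{R}_t(x,y)$, $\hat{R}=\hat{R}_t(x,y)$ and using $b_t(x)r_t(x)=a_t(x)$ collapses the previous display to
\[
\frac{\d}{\d t}\KL{a_t}{b_t}\ \leq\ \sum_{x}a_t(x)\sum_{y\neq x}\sbrc{\hat{R}_t(x,y)-\rev{R}_t(x,y)+\rev{R}_t(x,y)\log\frac{\rev{R}_t(x,y)}{\hat{R}_t(x,y)}}.
\]
Integrating over $[t_k,t_{k+1}]$ (with $\KL{a_{t_k}}{b_{t_k}}=0$), then taking $\E_{x_{t_k}\sim\rev{q}_{t_k}}$ and using Fubini together with the marginal-consistency identity $\E_{x_{t_k}\sim\rev{q}_{t_k}}[a_t(x)]=\rev{q}_t(x)$ (here $a_t$ depends on $z=x_{t_k}$), turns the left side into the $k$-th summand of the theorem and the right side into $\int_{t_k}^{t_{k+1}}\E_{x_t\sim\rev{q}_t}[\,\cdot\,]\,\d t$ with exactly the stated integrand. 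Summing over $k$ and substituting into the first inequality completes the proof, modulo routine points on the finite state space $[S]^d$ — differentiating under the finite sum and the expectation, and absolute continuity of $t\mapsto\KL{a_t}{b_t}$ on $[t_k,t_{k+1}]$, for which one only needs $b_t(y)>0$ wherever $a_t(y)>0$ (automatic once $\hat{R}_t(x,y)>0$ whenever $\rev{R}_t(x,y)>0$, and otherwise the theorem's right-hand side is $+\infty$ and the bound is vacuous).
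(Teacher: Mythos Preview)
Your proof is correct and in fact takes a cleaner route than the paper's. Both arguments start identically: chain rule plus data processing for the first inequality, then differentiate the conditional $\KL$ using the Kolmogorov forward equations and the zero-row-sum identities to arrive at exactly the expression you write for $\tfrac{\d}{\d t}\KL{a_t}{b_t}$. The divergence is in how the right-hand side is controlled. The paper invokes an auxiliary identity (their Lemma~\ref{lem:thm1_campbell_ext}) expressing $\rev{R}_t(x,y)$ as $\tfrac{\rev{q}_{t|t_k}(y|z)}{\rev{q}_{t|t_k}(x|z)}\,R_{T-t|T-t_k}(y,x|z)$, where $R_{T-t|T-t_k}$ is a \emph{future-conditioned} forward rate; this substitution lets them peel off the Bregman term and leaves a residual $\mathcal{R}$, which they then show is $\leq 0$ via $\log z\leq z-1$ and a second application of the same identity to undo the substitution. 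You bypass this detour entirely: your pointwise inequality $uR(\log v-\log u)+\hat{R}(u-v)\leq u(\hat{R}-R+R\log\tfrac{R}{\hat{R}})$, which reduces to $\log w\leq w-1$ with $w=\hat{R}v/(uR)$, accomplishes in one line what the paper does with the conditional-rate machinery plus the same elementary inequality. The upshot is that the paper's Lemma~\ref{lem:thm1_campbell_ext}---which they flag as a non-trivial technical contribution---is not actually needed for Theorem~\ref{thm:gen_conv}; your argument is strictly more elementary while yielding the identical bound. The paper's identity may still have independent interest (it describes the structure of conditioned reverse CTMCs), but for this theorem your approach is preferable.
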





\Cref{thm:gen_conv} indicates that, \emph{without any regularity condition}, the final KL-divergence between the marginal distributions of the true and the mismatched process can be upper-bounded by the sum of (1) the divergence between the initial distribution and (2) that accumulated along the sampling path due to mismatched rate matrices. Also, different from continuous diffusion models \cite{chen2023improved}, the rate of accumulation at each time $t$ is characterized not by Fisher divergence but by Bregman divergence generated by the negative entropy function.

The proof is provided in \Cref{app:thm1_proof}. Our proof of \Cref{thm:gen_conv} takes a differential inequality argument, which is different from the approaches used in existing works. After invoking the chain-rule of the KL divergence (cf. \eqref{eq:thm1_main1}), the key is to provide an upper bound for the rate-of-change of the KL-divergence between the true posterior and the sampling distribution. We then convert it into the rate-of-change of the respective probabilities themselves, which can be further characterized by the corresponding CTMC rate matrices by invoking the Kolmogorov equation. Rearranging the terms, we finally obtain an upper bound in terms of only the rate matrices.

While our idea comes from \cite[Lemma~6 and Proposition~8]{chen2023improved} (which studied continuous diffusion models), there are several key differences: (1) In contrast to the continuous diffusion studied in \cite{chen2023improved}, there is no Fokker-Planck equation defined for discrete diffusion (since the space is discrete). Instead, we need to use the Kolmogorov equation specifically tailored to a CTMC, which is related to the special CTMC rate matrix. (2) Because of this, we need to characterize the reverse rate of a CTMC not in terms of the marginal probabilities (as in \cite{campbell2022discrete}) but ones that are \textit{conditioned on future observations}. This is a non-trivial extension, whose proof is given in \Cref{lem:thm1_campbell_ext}. 

\textbf{Comparison with Girsanov-based approaches:} Our differential-inequality based approach for proving \Cref{thm:gen_conv} is more advantageous than previous Girsanov-based approaches (see \cite[Corollary~3.4]{ren2025stoc-int}, \cite[Lemma~1]{zhang2025conv-disc}, and \cite[Theorem~F.3]{conforti2025markov}), in two ways. (1) First, we do not require the regularities conditions necessary to apply the Girsanov change-of-measure (see \cite[Remark~A.12]{ren2025stoc-int}, \cite[Theorem~3]{zhang2025conv-disc}, and \cite[Theorem~F.3]{conforti2025markov}). 
(2) \cite{ren2025stoc-int,zhang2025conv-disc,conforti2025markov} assume a particular parameterization of the sampling rate as $\hat{R}_t(x,y) = R_{T-t}(y,x) s_{T-t}(y,x)$, whereas our analysis allows $\hat{R}_t(x,y)$ to be generic. This might be useful, for example, when the rate is not obtained through minimizing the score-entropy, or when there is general score mismatch that arises from a mismatched target \cite{liang2025conditional}. 
Nonetheless, \Cref{thm:gen_conv} can be applied directly to the aforementioned particular parameterization $\hat{R}_t(x,y)$ to obtain the following \Cref{cor:gen_conv_se}, where the score-entropy is used. The proof is straightforward, which is provided for completeness in \Cref{app:cor_gen_conv_se_proof}. 
\begin{corollary} \label{cor:gen_conv_se}
    Under the parameterization that $\hat{R}_t(x,y) = R_{T-t}(y,x) s_{T-t}(y,x)$, we have
    \begin{equation}
        \KL{\rev{q}_{T-\delta}}{p_{T-\delta}} \leq \KL{\rev{q}_0}{p_0} + \sum_{k=0}^{N-1} \int_{t_k}^{t_{k+1}} \calL_{SE}(T-t;s_{T-t}) \d t.
    \end{equation}
    Here $\calL_{SE}$ is the score-entropy defined in \eqref{eq:def_score_ent}.
\end{corollary}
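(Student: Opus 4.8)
The plan is to substitute the specific parameterization $\hat{R}_t(x,y) = R_{T-t}(y,x)\,s_{T-t}(y,x)$ directly into the bound of \Cref{thm:gen_conv} and recognize the per-interval integrand as a time-reversed copy of the score-entropy loss $\calL_{SE}$. First I would collect the three identities linking the relevant objects: the true reverse rate $\rev{R}_t(x,y) = R_{T-t}(y,x)\,q_{T-t}(y)/q_{T-t}(x)$ for $x\neq y$ from \eqref{eq:def_rev_proc}, the marginal identity $\rev{q}_t = q_{T-t}$, and the assumed form of $\hat{R}_t$. Feeding these into the integrand appearing in \Cref{thm:gen_conv}, the outer expectation $\E_{x_t\sim\rev{q}_t}$ turns into $\E_{x_t\sim q_{T-t}}$, and each summand over $y\neq x_t$ becomes
\[
R_{T-t}(y,x_t)\,s_{T-t}(y,x_t) - R_{T-t}(y,x_t)\frac{q_{T-t}(y)}{q_{T-t}(x_t)} + R_{T-t}(y,x_t)\frac{q_{T-t}(y)}{q_{T-t}(x_t)}\log\frac{R_{T-t}(y,x_t)\,q_{T-t}(y)/q_{T-t}(x_t)}{R_{T-t}(y,x_t)\,s_{T-t}(y,x_t)}.
\]

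The one nonroutine step is the simplification inside the logarithm: the common factor $R_{T-t}(y,x_t)$ cancels, and flipping the resulting fraction introduces a minus sign, so the log term equals $-R_{T-t}(y,x_t)\frac{q_{T-t}(y)}{q_{T-t}(x_t)}\log\frac{s_{T-t}(y,x_t)}{q_{T-t}(y)/q_{T-t}(x_t)}$. After this rewriting, the summand is exactly $R_{T-t}(y,x_t)\big(s_{T-t}(y,x_t) - \tfrac{q_{T-t}(y)}{q_{T-t}(x_t)} - \tfrac{q_{T-t}(y)}{q_{T-t}(x_t)}\log\tfrac{s_{T-t}(y,x_t)}{q_{T-t}(y)/q_{T-t}(x_t)}\big)$; summing over $y\neq x_t$ and taking $\E_{x_t\sim q_{T-t}}$ reproduces $\calL_{SE}(T-t;s_{T-t})$ verbatim from \eqref{eq:def_score_ent} (with the dummy time argument of $\calL_{SE}$ set to $T-t$, so that the rate matrix and the marginal appearing in the loss are both evaluated at time $T-t$, matching the substitution). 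Substituting this identification back, interval by interval, into \Cref{thm:gen_conv} and leaving the term $\KL{\rev{q}_0}{p_0}$ untouched gives the claimed inequality.

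There is essentially no real obstacle here; the proof is bookkeeping. The only points worth a sentence are: (i) the cancellation of $R_{T-t}(y,x_t)$ inside the logarithm and the attendant sign flip, as above; and (ii) the degenerate case $s_{T-t}(y,x_t)=0$ with $q_{T-t}(y)/q_{T-t}(x_t)>0$, where both the conditional-KL term in \Cref{thm:gen_conv} and $\calL_{SE}(T-t;s_{T-t})$ equal $+\infty$, so the bound holds trivially. I would also note in passing that since $R_t(x,y)=\hat{R}_t(x,y)=0$ whenever $\Ham{x,y}\geq 2$, every sum over $y\neq x_t$ is effectively a finite sum over Hamming-$1$ neighbors, so no convergence issue arises.
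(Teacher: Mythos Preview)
The proposal is correct and follows essentially the same approach as the paper's own proof: both substitute the parameterization $\hat{R}_t(x,y) = R_{T-t}(y,x)\,s_{T-t}(y,x)$ and $\rev{R}_t(x,y) = R_{T-t}(y,x)\,q_{T-t}(y)/q_{T-t}(x)$ into the integrand of \Cref{thm:gen_conv}, cancel the common factor $R_{T-t}(y,x_t)$ inside the logarithm, and identify the result as $\calL_{SE}(T-t;s_{T-t})$ via \eqref{eq:def_score_ent}. Your additional remarks on the degenerate case $s_{T-t}(y,x_t)=0$ and the effective finiteness of the sum are not in the paper but are harmless clarifications.
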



\subsection{Improved Parameter Dependence for \texorpdfstring{$\tau$}{Tau}-leaping}


In this subsection, we characterize the convergence rate of $\tau$-leaping with explicit dependencies on $d$ and $S$. To this end, we employ the following standard assumptions.
\begin{assumption}[Score Estimation Error] 
\label{ass:score}
    Recall $\calL_{SE}$ as defined in \eqref{eq:def_score_ent}. The score estimation satisfies
    \begin{equation}
        \sum_{k=0}^{N-1} (t_{k+1}-t_k) \calL_{SE}(T-t_k;s_{T-t_k}) \leq \eps_{\text{score}}.
    \end{equation}
\end{assumption}
Note that \Cref{ass:score} captures the error that $s_{t_k}$ incurs for estimating the score function in terms of the loss value at $T-t_k$'s. We have provided a table to compare different assumptions for score estimation in \Cref{tab:literature}.
In particular, for those works using score-entropy estimation errors, \cite[Assumption~1]{chen2024uniformization} and \cite[Assumption~1]{zhang2025conv-disc} require that $s_t$ (or $s_{t_k}$) is well-estimated along the integral-path of the sampling process. This assumption is typically hard to verify in practice because of the continuity of the sampling path. \cite[Cor~2.7]{conforti2025markov} requires that the maximum error over the discrete sampling grid is well-controlled, which is stronger than our \Cref{ass:score}. Our \Cref{ass:score} is the same as \cite[Assumption~4.6]{ren2025stoc-int}, which assumes that the sum-averaged error over the discrete grid is well-controlled, which can be practically verified.


\begin{assumption}[Bounded Score Estimates] 
\label{ass:score_bound}
    The score estimates $s_{t_k}$'s satisfy $s_{t_k}(x,y) \in [M^{-1}, M]$ for all $x,y \in [S]^d$ and $k=0,\dots,N-1$.
\end{assumption}
\Cref{ass:score_bound} is commonly adopted in the previous studies such as in \cite{ren2025stoc-int,zhang2025conv-disc}. In practice, this can be satisfied with score-clipping during training \cite{zhang2025conv-disc}. Indeed, as shown in \Cref{thm:conv_disc_diff}, the convergence error bounds depend only on $\log M$.

We are now ready to present our main result below.
\begin{theorem} 
\label{thm:conv_disc_diff}
Suppose that \Cref{ass:score,ass:score_bound} hold. Using the $\tau$-leaping sampler, and choosing $t_{k+1} - t_k \leq \kappa \min\cbrc{1, T-t_k}$, we have
\begin{equation}
    \KL{q_\delta}{p_{T-\delta}} \lesssim
    d (\log S) e^{-T} + \eps_{\text{score}} + \kappa d^2 S (T + \log(M S \delta^{-1})),
\end{equation}
where $q_\delta$ satisfies $\TV{q_0}{q_\delta} \lesssim d \delta$. 

Furthermore, by 
letting $t_{k+1} - t_k = \kappa \min\cbrc{1, T-t_k}$ and choosing $T = \log (d (\log S) /\eps)$, we have that $\KL{q_\delta}{p_{T-\delta}}$ achieves $\eps$ error with $N = \Tilde{O}\brc{d^2 S / \eps}$ sampling steps.

\end{theorem}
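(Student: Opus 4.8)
The plan is to instantiate \Cref{thm:gen_conv} with $\hat R_t:=\hat R^{\tau\text{-leap}}_t$ from \eqref{eq:def_tau_leap_rate} --- legitimate since on each $[t_k,t_{k+1})$ the $\tau$-leaping iterate is a CTMC with that piecewise-constant rate --- and then to bound the three resulting contributions: the initialization error $\KL{q_T}{\mathrm{Uniform}([S]^d)}$, a score-estimation term, and a discretization term. For the first, the base generator $R_{\mathrm{base}}=\tfrac1S\bm{1}_S\bm{1}_S^\T-I_S$ (the mean-field generator) satisfies a modified log-Sobolev inequality with constant $1$, which tensorizes over the $d$ independent coordinates of the forward process, so $\KL{q_T}{\mathrm{Uniform}([S]^d)}\le e^{-2T}\KL{q_0}{\mathrm{Uniform}([S]^d)}\le e^{-2T}\,d\log S$, i.e.\ the $d(\log S)e^{-T}$ term.

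\textbf{The path term.} Write the integrand of \Cref{thm:gen_conv} at $t\in[t_k,t_{k+1})$ as the Bregman divergence $\sum_{y\neq x_t}D_\phi\!\big(\rev R_t(x_t,y),\hat R^{\tau\text{-leap}}_t(x_t,y)\big)$ of $\phi(u)=u\log u-u$ (one checks the integrand equals exactly this), and split $\E_{x_t\sim\rev q_t}$ over $\{x_t=x_{t_k}\}$ versus $\{x_t\neq x_{t_k}\}$. On $\{x_t=x_{t_k}\}$ the sampling rate collapses to $\hat R_{t_k}(x_{t_k},\cdot)=R_{T-t_k}(\cdot,x_{t_k})\,s_{T-t_k}(\cdot,x_{t_k})$; since the summand is nonnegative the indicator can be bounded by $1$, which restores the grid-time marginal $\rev q_{t_k}=q_{T-t_k}$, and the three-point identity for $D_\phi$ gives $D_\phi(\rev R_t,\hat R_{t_k})=D_\phi(\rev R_{t_k},\hat R_{t_k})+[\text{time-shift of the true rate}]$. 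Using the degree-one homogeneity $D_\phi(\lambda a,\lambda b)=\lambda D_\phi(a,b)$ to strip the common $R_{T-t_k}(\cdot,x_{t_k})$ factor, $\E_{x_{t_k}\sim q_{T-t_k}}\sum_{y}D_\phi(\rev R_{t_k},\hat R_{t_k})$ is exactly $\calL_{SE}(T-t_k;s_{T-t_k})$ by \eqref{eq:def_score_ent}, so by \Cref{ass:score} this part sums to $\lesssim\eps_{\mathrm{score}}$; the time-shift remainder is $\calO(\kappa)$ per unit time at leading order, because the time-log-derivative of each true ratio $q_{T-t}(y)/q_{T-t}(x)$ is $\calO(d/\min\{1,T-t\})$, which is precisely cancelled by the step-size rule $t_{k+1}-t_k\le\kappa\min\{1,T-t_k\}$.

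\textbf{Crude case and summation.} On $\{x_t\neq x_{t_k}\}$ I would bound the $D_\phi$-sum crudely: by \Cref{ass:score_bound} the $\tau$-leaping out-rate $\sum_y\hat R^{\tau\text{-leap}}_t(x_t,y)$ is $\calO(dM)$; by early stopping ($T-t\ge\delta$ on the reverse horizon), each neighbour ratio $q_{T-t}(y)/q_{T-t}(x_t)$ lies in $[\Omega(\delta/S),\calO(S/\delta)]$ --- using $q_{T-t}(x^i\mid x^{-i})\gtrsim\min\{1,T-t\}/S$ --- so $|\log(\rev R_t/\hat R^{\tau\text{-leap}}_t)|=\calO(\log(MS\delta^{-1}))$; and the true reverse out-rate $\sum_y\rev R_t(x_t,y)$ telescopes (the per-coordinate sum over $S$ neighbours is $1/q_{T-t}(x^i\mid x^{-i})-1$) to $\calO(d/\min\{1,T-t\})$. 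Hence the $D_\phi$-sum is $\calO\!\big(dM+d\log(MS\delta^{-1})/\min\{1,T-t\}\big)$, multiplied by $\Pr[x_t\neq x_{t_k}]\le(t-t_k)\cdot\calO(d/\min\{1,T-t\})=\calO(\kappa d)$ uniformly, again thanks to the matched schedule. Integrating over $[t_k,t_{k+1}]$ and summing, with $\sum_k(t_{k+1}-t_k)=T-\delta$ and $\sum_k\min\{1,T-t_k\}\lesssim T+\log(1/\delta)$ (the geometric tail near $t=T-\delta$; here $M=\calO(S)$), the discretization contributions collapse to $\lesssim\kappa d^2 S(T+\log(MS\delta^{-1}))$ --- two powers of $d$ from the $d$ coordinates and the $\calO(d/\min\{1,T-t\})$ out-rate (a step moves the state with probability only $\calO(\kappa d)$), the $S$ from the $\calO(S)$ clipped-score bound and the $S$ neighbours of a coordinate, the log from the ratio bounds. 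Combining the three pieces yields the displayed inequality; $\TV{q_0}{q_\delta}\lesssim d\delta$ is the standard short-time estimate for the forward CTMC (out-rate $\calO(d)$). Finally, $T=\log(d(\log S)/\eps)$ kills the first term and $\kappa\asymp\eps/(d^2 S(T+\log(MS\delta^{-1})))$ kills the last; since $N=\sum_k1\asymp(T+\log(1/\delta))/\kappa$ with $T,\log(1/\delta),\log M,\log S=\mathrm{polylog}(d,S,1/\eps)$, this is $N=\Tilde O(d^2S/\eps)$.

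\textbf{Main obstacle.} The technical heart is the uniform-in-time bookkeeping for the ``$x_t\neq x_{t_k}$'' case: one must show that the crude per-step bounds depend on $1/\delta$ only through logarithms (not polynomially) and carry only one power of $S$, and that the $\calO(\kappa d)$ move-probability holds over the entire reverse horizon --- both of which rest on the telescoping out-rate bound $\sum_y\rev R_t(x,y)=\calO(d/\min\{1,T-t\})$, the early-stopping ratio bounds $q_{T-t}(y)/q_{T-t}(x)\in[\Omega(\delta/S),\calO(S/\delta)]$, and the matched step-size schedule $t_{k+1}-t_k\le\kappa\min\{1,T-t_k\}$. Without these, the naive estimates would carry an extra $1/\delta$ or an extra $S$ and would miss the $\kappa d^2S$ rate; a secondary (but routine) nuisance is replacing the path-marginal $\rev q_t$ by the grid marginal $\rev q_{t_k}$ in order to match the definition of $\calL_{SE}$, handled above via nonnegativity of $D_\phi$.
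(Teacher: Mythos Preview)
Your route is genuinely different from the paper's and has an interesting strength, but also a real gap. For the time-shift on $\{x_t=x_{t_k}\}$, your pointwise estimate $|\partial_t\log(q_t(y)/q_t(x))|\lesssim d/\min\{1,t\}$ is correct (it follows from $\partial_t\log q_t(x)=\tfrac1S\sum_i 1/q_t(x^i\mid x^{-i})-d$ together with your conditional lower bound $q_t(x^i\mid x^{-i})\gtrsim\min\{1,t\}/S$), and combined with the telescoping out-rate it yields $\E_{x_{t_k}}\sum_y|\rev R_t-\rev R_{t_k}|\lesssim \kappa d^2/\min\{1,T-t\}$ with \emph{no} $S$ factor. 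The paper instead bounds this same quantity via an expectation-based posterior decomposition (\Cref{lem:lr-diff-new}) and picks up $d^2S$; so your time-shift argument is actually sharper and more elementary than the paper's. Consequently, in your accounting the entire linear $S$ in the final bound must come from the ``crude case'' $\{x_t\neq x_{t_k}\}$ --- and that is where the gap sits.

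Your uniform bound on the Bregman sum carries the term $\sum_y\hat R^{\tau\text{-leap}}_t(x_t,y)=\calO(dM)$, and you dispose of it by declaring ``$M=\calO(S)$''. But \Cref{ass:score_bound} imposes no relation between $M$ and $S$; indeed the true concrete score can be $\Theta(S/\min\{1,T-t\})$ (your own ratio bound), so any clipping faithful near early stopping gives $M\gtrsim S/\delta\gg S$. With general $M$ your crude case produces $\kappa d^2 M\,T$, strictly worse than the theorem's $\kappa d^2 S(T+\log(MS\delta^{-1}))$, which depends on $M$ only through a logarithm. The paper avoids this entirely by never bounding $g_t(x_t)$ on $\{x_t\neq x_{t_k}\}$ in absolute terms: instead it proves the self-bounding inequality (\Cref{lem:disc_err_vanish_term})
\[
\E_{x_t,x_{t_k}}\big[g_t(x_t)-g_t(x_{t_k})\big]\ \lesssim\ (t-t_k)\,d\cdot \E_{x_t}\big[g_t(x_t)\big],
\]
by conditioning on $x_t$ and using that the \emph{forward} transition $q_{T-t_k\mid T-t}(\cdot\mid x_t)$ has out-rate exactly $d(S-1)/S\le d$, independent of $M$, $S$, and $\delta$. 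This lets the argument-change term be absorbed at cost $1+\calO(\kappa d)$, so only $\log M$ survives. To close your gap you would need to replace the crude bound with this self-bounding step (or an equivalent device that does not pass through $\sum_y\hat R_t$); once you do, your log-derivative time-shift plus the paper's \Cref{lem:disc_err_vanish_term} would in fact give the stated bound with the $S$ coming only via $\log(MS\delta^{-1})$ inside the multiplicative log factor, matching (and arguably tightening) the paper's \Cref{lem:lr-diff-new}.

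A minor point: your summary ``the time-shift remainder is $\calO(\kappa)$ per unit time'' undersells the bookkeeping --- the integrand is $\calO\!\big(\kappa d^2\log(MS\delta^{-1})/\min\{1,T-t\}\big)$, and it is the integral $\int_0^{T-\delta}dt/\min\{1,T-t\}=T+\log(1/\delta)$ (not ``$\sum_k\min\{1,T-t_k\}$'') that produces the final $T+\log(\cdot)$ factor.
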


\Cref{thm:conv_disc_diff} indicates that it takes at most $\Tilde{O}(d^2 S / \eps)$ iterations to approximate a $\delta$-perturbed distribution of $q_0$ to $\eps$-accuracy in KL-divergence. The {\em linear} dependence on $S$ improves upon the best previously known result for $\tau$-leaping in \cite{ren2025stoc-int} by a factor of $\calO(S)$.\footnote{The error bound in \cite{ren2025stoc-int} does not explicitly characterize the dependence on $S$. However, it is straightforward to derive the quadratic dependence on $S$ from their proof steps.
}  This order-level improvement has important practical implications, because $S$ is often very large for many NLP tasks (e.g., $S = 50257$ \cite{lou2024entropy}).
We have performed a numerical study to validate such linear dependence in \Cref{fig:kl-vs-S} in \Cref{app:numer}.
The full proof of \Cref{thm:conv_disc_diff} is in \Cref{app:thm_conv_disc_diff_proof}, and we have provided a proof sketch in \Cref{sec:thm_conv_disc_diff_proof_sketch}.

\subsection{Convergence Guarantees for Euler Method and Tweedie \texorpdfstring{$\tau$}{Tau}-leaping}


A significant obstacle in establishing the guarantees for the Euler method and Tweedie $\tau$-leaping is that both samplers are defined directly on the discrete sampling grids and thus lack an intermediate rate function. Our approach to tackle this challenge includes the following steps. (i) First, we construct a non-trivial approximate sampler with explicit intermediate rate that allows for \textit{categorical} per-step sampling. 
(ii) We then show that our construction is asymptotically equivalent (in the categorical sampling probabilities) to both the Euler method and Tweedie $\tau$-leaping. (iii) Then, in order to establish convergence guarantees for these samplers, we show that asymptotically equivalent samplers would also result in the same asymptotic rate in KL-divergence. In particular, this step is based on our step-wise KL-divergence decomposition in \eqref{eq:thm2_step0}. In comparison, the Girsanov change-of-measure technique cannot be applied here directly due to the lack of a path-wise measure defined for both samplers. (iv) Finally, we show that our constructed approximate sampler enjoys the same rate as vanilla $\tau$-leaping does. 
The full proof is in \Cref{app:cor_euler_tweedie_proof}.

\begin{theorem} \label{thm:euler_tweedie}
Suppose that \Cref{ass:score,ass:score_bound} hold. For both the Euler method and Tweedie $\tau$-leaping, choosing $t_{k+1} - t_k \leq \kappa \min\cbrc{1, T-t_k}$, we have
\begin{equation}
    \KL{q_\delta}{p_{T-\delta}} \lesssim
    d (\log S) e^{-T} + \eps_{\text{score}} + \kappa d^2 S (T + \log(M S \delta^{-1})).
\end{equation}
Here $\TV{q_0}{q_\delta} \lesssim d \delta$.
Similarly, if we take $t_{k+1} - t_k = \kappa \min\cbrc{1, T-t_k}$, it suffices that $T = \log (d (\log S)/\eps)$ and $N = \Tilde{O}\brc{d^2 S / \eps}$ to reach $\eps$ KL-divergence error.

\end{theorem}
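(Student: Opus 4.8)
The plan is to sidestep the Girsanov machinery entirely---neither the Euler method nor Tweedie $\tau$-leaping has a rate function on the continuous path---and instead route everything through the step-wise KL decomposition of \Cref{thm:gen_conv}, ultimately reducing the estimate to the $\tau$-leaping bound of \Cref{thm:conv_disc_diff}. First I would introduce an auxiliary sampler $\bar p$ that \emph{is} a genuine CTMC (hence has a well-defined rate for every $t$) and is still implementable with $\calO(d)$ categorical draws per step: on each $[t_k,t_{k+1})$ give it the piecewise-constant, coordinate-factorized rate that permits at most one jump per coordinate (``jump-then-freeze''), namely for $a\neq x^i$
\[
  \bar{R}_t\brc{x,\, x^{-i}\oplus_i a} := \hat{R}_{t_k}\brc{x_{t_k},\, x_{t_k}^{-i}\oplus_i a}\,\ind{x^i = x_{t_k}^i},
\]
with intensities read off at the left endpoint exactly as in \eqref{eq:def_truncated_tau_leap_token_rate}. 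Then $\bar p_{t_{k+1}|t_k}(\cdot|x_{t_k})$ is a product of $d$ categoricals whose per-coordinate transition $x_{t_k}^i\mapsto a$ equals $\hat{R}_{t_k}(x_{t_k}, x_{t_k}^{-i}\oplus_i a)(t_{k+1}-t_k)+\calO(\kappa^2)$; expanding the exponentials $e^{\pm(t_{k+1}-t_k)R_{\text{base}}}$ and using $R_{\text{base}}=\tfrac1S\bm{1}_S\bm{1}_S^\T-I_S$, this matches---up to $\calO(\kappa^2)$, and exactly once $\kappa$ is below an $(M,S)$-dependent threshold after a mild rescaling of the exit intensities---both the Euler probability \eqref{eq:def_euler} and the Tweedie probability \eqref{eq:def_tweedie_tau_leap}. \Cref{ass:score_bound} moreover keeps all these categorical probabilities bounded below ($\gtrsim\kappa/(MS)$ on a jump, $\gtrsim1$ on a stay).

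Next I would transfer this equivalence to the KL rate. Applying the first inequality of \Cref{thm:gen_conv} (the grid-wise chain rule \eqref{eq:thm2_step0}, which needs only Markovianity on the grid) to $\bar p$, to $p^{\mathrm{Euler}}$, and to $p^{\mathrm{Tweedie}}$, and absorbing the $\calO(\kappa^2)$-per-step coordinate-wise gaps between the samplers and $\bar p$ (each bounded using the lower bounds above and summed over the $\calO(T/\kappa)$ steps) into the main term, it suffices to control
\[
  \KL{\rev{q}_0}{p_0}+\sum_{k=0}^{N-1}\E_{x_{t_k}\sim\rev{q}_{t_k}}\!\sbrc{\KL{\rev{q}_{t_{k+1}|t_k}(\cdot|x_{t_k})}{\bar p_{t_{k+1}|t_k}(\cdot|x_{t_k})}}.
\]
The essential difficulty is now visible: $\bar p_{t_{k+1}|t_k}(\cdot|x_{t_k})$ factorizes over coordinates whereas the true reverse posterior $\rev{q}_{t_{k+1}|t_k}(\cdot|x_{t_k})$ does not, and $\bar{R}_t$ vanishes on configurations that $\rev{R}_t$ still charges, so the convenient integral form in \Cref{thm:gen_conv} equals $+\infty$ here and cannot be used---the per-step KL must be bounded directly on the grid.

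To do that I would use the identity $\KL{\nu}{\bigotimes_i\mu_i}=\sum_i\KL{\nu_i}{\mu_i}+I(\nu)$, where $\nu_i$ is the $i$-th coordinate marginal of $\nu$ and $I(\nu)=\KL{\nu}{\bigotimes_i\nu_i}\ge0$ is the total correlation. Writing $\nu_k:=\rev{q}_{t_{k+1}|t_k}(\cdot|x_{t_k})$ and $\bar\nu_k:=\bar p_{t_{k+1}|t_k}(\cdot|x_{t_k})$ (a product over coordinates), this splits the per-step KL into $\sum_i\KL{(\nu_k)_i}{(\bar\nu_k)_i}$ and $I(\nu_k)$. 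For the first, a second-order expansion in $t_{k+1}-t_k$ shows each per-coordinate KL reproduces, to leading order, the corresponding piece of the score-entropy, so that $\sum_i\KL{(\nu_k)_i}{(\bar\nu_k)_i}$ equals $(t_{k+1}-t_k)\calL_{SE}(T-t_k;s_{T-t_k})$ up to an $\calO(\kappa^2)$ remainder, mirroring \Cref{cor:gen_conv_se}. For the second, because the reverse process starts from the point mass at $x_{t_k}$, inter-coordinate correlations build up only at second order, so $I(\nu_k)=\calO(\kappa^2)$. Summing the $\calO(\kappa^2)$ remainders over the $\sim T/\kappa$ steps, and bounding the relevant second-order quantities (expected multi-jump probabilities and within-step rate variations) through the structure of the true reverse chain exactly as in the proof of \Cref{thm:conv_disc_diff} (\Cref{app:thm_conv_disc_diff_proof}), these contributions are $\lesssim\kappa d^2S(T+\log(MS\delta^{-1}))$; together with $\sum_k(t_{k+1}-t_k)\calL_{SE}(T-t_k)\le\eps_{\text{score}}$ (\Cref{ass:score}), $\KL{\rev{q}_0}{p_0}=\KL{q_T}{\text{Uniform}([S]^d)}\lesssim d(\log S)e^{-T}$ (exponential ergodicity of the chosen forward chain), and $\rev{q}_{T-\delta}=q_\delta$ with $\TV{q_0}{q_\delta}\lesssim d\delta$, this yields the stated bound for both $\bullet\in\{\mathrm{Euler},\mathrm{Tweedie}\}$; the choices $T=\log(d\log S/\eps)$ and $N=\tilde{\calO}(d^2S/\eps)$ then follow exactly as in \Cref{thm:conv_disc_diff}.

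\textbf{Main obstacle.} The crux is the last step: because the samplers' one-step laws factorize over coordinates while the true reverse posterior over an interval does not, and because the ``freeze'' structure makes the infinitesimal form of \Cref{thm:gen_conv} diverge, one is forced to argue with discrete-grid KL and to control, uniformly in $k$, both the total-correlation terms $I(\nu_k)$ and the $\calO(\kappa^2)$ per-step remainders. The delicate point is that these must aggregate to the \emph{same} $\calO(\kappa\, d^2 S\cdot\mathrm{polylog})$ discretization order---with no spurious extra factor of $S$, only $\log M$ rather than $\mathrm{poly}(M)$, and only $\log\delta^{-1}$ rather than $\mathrm{poly}(\delta^{-1})$---which requires exploiting the $M$-free, $S$-linear expected total jump rate of the true reverse chain, $\E_{x_t\sim\rev{q}_t}[-\rev{R}_t(x_t,x_t)]=\tfrac{S-1}{S}d$, and the early-stopping control, rather than the crude intensity bound $M$.
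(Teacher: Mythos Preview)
Your first three steps coincide with the paper: construct the ``jump-then-freeze'' auxiliary CTMC (the paper's \emph{truncated $\tau$-leaping}, \Cref{prop:se_tau_leap_rate}, with exactly the rate you write), establish coordinate-wise $(1{+}o(1))$ equivalence of its one-step kernel with Euler and Tweedie (\Cref{lem:spls_equiv}), and transfer the step-wise KL from the practical samplers to the auxiliary one via $\log(1{+}o(1))$ (the paper's Step~3, \eqref{eq:cor_euler_tweedie_kl_helper}). You diverge at the last step. The paper's Step~4 simply checks that the truncated rate satisfies \Cref{def:approx_spl_rate} and then asserts that ``the rate of \Cref{thm:conv_disc_diff} still holds if we substitute $\tau$-leaping with truncated $\tau$-leaping''---i.e.\ it re-runs the integral-form analysis of \Cref{app:thm_conv_disc_diff_proof} with the truncated rate plugged in.

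Your observation that the integral form of \Cref{thm:gen_conv} is $+\infty$ for the truncated rate is correct: whenever $x_t\neq x_{t_k}$ there is a frozen coordinate $i^*$ on which $\hat{R}^{\text{TTL}}_t(x_t,\cdot)$ vanishes while $\rev{R}_t(x_t,\cdot)$ does not, so the Bregman integrand $g_t(x_t)$ in \eqref{eq:thm2_step0} is $+\infty$ on that event and \Cref{lem:disc_err_vanish_term} cannot be invoked as written. In this sense the paper's Step~4 is elliptical, and your alternative---decompose the per-step KL as $\sum_i\KL{(\nu_k)_i}{(\bar\nu_k)_i}+I(\nu_k)$ and control both pieces at second order---is a genuinely different, sound route. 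It is, however, heavier than what the paper's argument needs to be made rigorous: the offending event $\{x_t\neq x_{t_k}\}$ has $\rev{q}_{t|t_k}$-probability $O(d\kappa)$ in expectation over $x_{t_k}$ (precisely because $\E_{x_t\sim\rev{q}_t}[-\rev{R}_t(x_t,x_t)]=\tfrac{S-1}{S}d$, the identity you invoke in your obstacle), so one can patch the $\tau$-leaping proof by separating out that event rather than introducing the product-marginal/total-correlation machinery. Either way the rate-determining input is the same $d^2S$ score-difference control from \Cref{lem:lr-diff-new}, so both approaches land on the stated bound; yours trades a short (if underspecified) reduction for an explicit second-order expansion whose bookkeeping you correctly flag as the main obstacle.
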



Notably, this is the \emph{first} theoretical convergence guarantee characterized for the Euler method and Tweedie $\tau$-leaping.
Compared with vanilla $\tau$-leaping, since both samplers require the same number of iterations but a decreased number of samples per-step (by a factor of $\calO(S)$), our \Cref{thm:euler_tweedie} shows that the Euler method and Tweedie $\tau$-leaping enjoy less overall sampling complexity for a given target accuracy $\eps$. This benefit becomes more significant when $S$ is large, as in many practical tasks \cite{lou2024entropy}. We have also numerically compared these samplers in \Cref{fig:samplers-tv} in \Cref{app:numer}.

\section{Proof Sketch of \texorpdfstring{\Cref{thm:conv_disc_diff}}{Theorem 2}}
\label{sec:thm_conv_disc_diff_proof_sketch}


In this section, we provide a proof sketch of \Cref{thm:conv_disc_diff} to describe the main idea of our analysis approach. The full proof is in \Cref{app:thm_conv_disc_diff_proof}. Upon invoking the KL-divergence decomposition in \Cref{thm:gen_conv}, we can decompose the total error into three different errors, where the discretization error is the rate-determining term. For the two terms of the discretization error, we further identify one of the dominant term and provide an error upper bound directly in expectation.

\textbf{Comparison with the approach of \cite{ren2025stoc-int}:} Our proof is different from \cite{ren2025stoc-int} in the following ways. (i) In Step 1, we do not use the Girsanov change-of-measure framework to start with, thus eliminating the need for any regularity conditions, which restrict path-wise integrability and are typically hard to check in practice (see \cite[Corollary~3.4 and Remark~A.12]{ren2025stoc-int}). Rather, our \Cref{thm:gen_conv} provides a more general starting point for which no such regularity conditions are needed.
(ii) In Step 2, for determining parameter dependency, we do not construct a stochastic-integral framework where Ito's Lemma is needed for the analysis of discretization error (see \cite[Theorem~A.10 and Proposition~C.4]{ren2025stoc-int}). Instead, we directly identify the dominant error term by invoking the Kolmogorov equation, thus eliminating the need of such stochastic-integral formulation in the analysis.
(iii) In Step 3, we do not employ a uniform upper bound (in $x$ and $y$) for the score difference to control the discretization error as in \cite[Proposition~C.2]{ren2025stoc-int}. Differently, our upper bound is only in expectation of $x_t$, which enables us to reduce the quadratic dependency on $S$ to linear dependency.

In the following, we divide the proof into three steps. 

\textbf{Step 1: Decomposing total error (\Cref{thm:gen_conv}).} Following \Cref{thm:gen_conv}, we can decompose the total error as
\begin{align}
    &\KL{\rev{q}_T}{p_T} \leq \underbrace{\KL{\rev{q}_0}{p_0}}_{\text{initialization error}} + \underbrace{\sum_{k=0}^{N-1} (t_{k+1}-t_k) \E_{x_{t_k} \sim \rev{q}_{t_k}} \sbrc{ g_{t_k}(x_{t_k}) } }_{\text{estimation error}} + \nonumber \\
    &\qquad \underbrace{\sum_{k=0}^{N-1} \int_{t_k}^{t_{k+1}} \E_{\substack{x_t \sim \rev{q}_t \\ x_{t_k} \sim \rev{q}_{t_k}}} \sbrc{g_t(x_t) - g_t(x_{t_k})} + \E_{x_{t_k} \sim \rev{q}_{t_k}} \sbrc{g_t(x_{t_k}) - g_{t_k}(x_{t_k})} \d t }_{\text{discretization error}}, 
\end{align}
where we have defined $g_t(x_t) := \sum_{y \neq x_t} \hat{R}_t(x_t,y) - \rev{R}_t(x_t,y) + \rev{R}_t(x_t,y) \log \frac{\rev{R}_t(x_t,y)}{\hat{R}_t(x_t,y)}$. Indeed $g_t$ is a Bregman divergence generated by the negative entropy function. Here from \cite[Proposition~2]{zhang2025conv-disc} and \cite[Theorem~C.1]{ren2025stoc-int}, the initialization error satisfies $\KL{\rev{q}_0}{p_0} \lesssim (d \log S) e^{-T}$. Also, the estimation error satisfies
\begin{equation}
    \sum_{k=0}^{N-1} (t_{k+1}-t_k) \E_{x_{t_k} \sim \rev{q}_{t_k}} \sbrc{ g_{t_k}(x_{t_k}) } = \sum_{k=0}^{N-1} (t_{k+1}-t_k) \calL_{SE}(T-t_k;s_{t_k}) \leq \eps_{\text{score}}.
\end{equation}
It remains to provide an upper-bound for the two terms, which constitute the discretization error.


\textbf{Step 2: Identifying dominant term for discretization error (\Cref{lem:disc_err_vanish_term}).} As shown above, the discretization error consists of two terms: one for the time-difference in the argument of $g_t$ (in expected value), and the other for the difference in $g_t$ itself. For the former term, the expected difference in the argument can be upper-bounded using the Kolmogorov forward equation and the rate properties. Indeed, we can show that the former term is decaying faster than the other, which further implies that it does not contribute to the total error:
\begin{align}
    \int_{0}^{T-\delta} \E_{\substack{x_t \sim \rev{q}_t \\ x_{t_k} \sim \rev{q}_{t_k}}} \sbrc{g_t(x_t) - g_t(x_{t_k})} &= \kappa \cdot O\brc{\eps_{\text{score}} + \int_{0}^{T-\delta} \E_{x_{t_k} \sim \rev{q}_{t_k}} \sbrc{g_t(x_{t_k}) - g_{t_k}(x_{t_k})} \d t } \nonumber \\
    &= o(\kappa).
\end{align}

\textbf{Step 3: Bounding dominant term for discretization error (\Cref{lem:disc_err_domin_term,lem:lr-diff-new} and Equation \eqref{eq:thm2_disc_err_main}).} Now, we control the latter term in the discretization error, which is also the dominant-rate error term.
We first upper-bound this term as an expected difference of the reverse CTMC rate matrix (\Cref{lem:disc_err_domin_term} and Equation \eqref{eq:thm2_disc_err_main}):
\begin{align}
    &\E_{x_{t_k} \sim \rev{q}_{t_k}} \brc{g_t(x_{t_k}) - g_{t_k}(x_{t_k})} \lesssim (1 + \log (M S \delta^{-1}) ) \E_{x_{t_k} \sim \rev{q}_{t_k}} \sum_{y \neq x_{t_k} } \abs{\rev{R}_{t}(x_{t_k},y) - \rev{R}_{t_k}(x_{t_k},y)} \nonumber\\
    &\lesssim (1 + \log (M S \delta^{-1}) ) \E_{x_{t_k} \sim \rev{q}_{t_k}} \sum_{\substack{y \neq x_{t_k} \\ \Ham{y,x_{t_k}} = 1}} \abs{ \frac{q_{T-t_k}(y)}{q_{T-t_k}(x_{t_k})} - \frac{q_{T-t}(y)}{q_{T-t}(x_{t_k})}} R_{T-t_k}(y,x_{t_k}).
\end{align}
Thus, it is essential to deal with the time difference of the likelihood ratios (i.e., concrete scores).
One common way is to exploit the continuity of this ratio and to upper-bound its derivative for every fixed $x$ and $y$ such that $\Ham{x,y} = 1$. Indeed, this is the approach taken by \cite[Prop.~C.2]{ren2025stoc-int}, which will result in an $\calO(S^2)$ dependency, as we show in \Cref{lem:lr-diff-ren} (for purpose of comparison).
Instead, our approach here directly provides an upper bound in expectation, which enables us to reduce a factor of $\calO(S)$ (see \Cref{lem:lr-diff-new}). With this improved bound, the discretization error $\mathcal{W}_{\text{disc}}$ satisfies that
\begin{equation}
    \mathcal{W}_{\text{disc}} \lesssim \sum_{k=0}^{N-1} d^2 S \max\{1, (T-t_{k+1})^{-2}\} (t_{k+1}-t_k)^2.
\end{equation}
Note that this results in a tighter upper bound with linear $S$ dependency.

Finally, combining the steps above, and invoking \cite[Lemma~18]{chen2023improved}, we can determine the overall parameter dependencies in the above summation, which shows that
\begin{equation}
    \sum_{k=0}^{N-1} \max\{1, (T-t_{k+1})^{-2}\} (t_{k+1}-t_k)^2 \lesssim \kappa (T+\log \delta^{-1}).
\end{equation}
Also, from the last part of \cite[Theorem~6]{chen2024uniformization}, the perturbation due to early-stopping satisfies that
\begin{equation}
    \TV{q_0}{q_\delta} \lesssim d \delta,\quad \text{as}~\delta \to 0.
\end{equation}




\section{Conclusion}

In this paper, we have introduced a new analytical approach for discrete diffusion models that removes the need for any regularity assumptions required in the previous Girsanov change-of-measure techniques. For the standard $\tau$-leaping sampler, we have established convergence guarantees that scale linearly with the vocabulary size, improving upon prior results with quadratic dependence. We have also provided the first convergence guarantees for other widely used samplers, including the Euler method and Tweedie $\tau$-leaping. In the future, it might be interesting to investigate acceleration techniques that further reduce the order of $d$ and $S$ dependence. 

\section*{Acknowledgements}
    The work of Y. Liang, Y. Liang and N. Shroff was supported in part by the U.S. National Science Foundation under the grants: NSF AI Institute (AI-EDGE) 2112471, ECCS-2413528, CNS-2312836, CNS-2223452, CNS-2225561, and was sponsored by the Army Research Laboratory under Cooperative Agreement Number W911NF-23-2-0225. The work of L. Lai was supported in part by the U.S. National Science Foundation under the grants: CCF-2232907 and ECCS-2448268. The views and conclusions contained in this document are those of the authors and should not be interpreted as representing the official policies, either expressed or implied, of the Army Research Laboratory or the U.S. Government. The U.S. Government is authorized to reproduce and distribute reprints for Government purposes notwithstanding any copyright notation herein.

\bibliography{diffusion}

\newpage
\appendix
\begin{center}
    \huge \textbf{Appendix}
\end{center}


\allowdisplaybreaks
\startcontents[section]
{
\hypersetup{linkcolor=blue}
\printcontents[section]{l}{1}{\setcounter{tocdepth}{2}}
}

\crefalias{section}{appendix} 





\section{Related Works}
\label{sec:intro-works}

\textbf{Theory on Continuous Diffusion Models:} There have been many works that have explored the performance guarantees of continuous-space diffusion models. While initial studies focused on $L^\infty$ score estimation error and exponential error bounds  \cite{debortoli2021bridge,debortoli2022manifold}, subsequent studies developed polynomial error bounds under $L^2$ score estimation error (e.g., \cite{lee2022poly,chen2023sampling,lee2023general,benton2023linear,conforti2023fisher}). In particular, \cite{chen2023sampling} first employed the Girsanov change-of-measure framework for continuous diffusion models and obtained guarantees for Lipschitz-score distributions. This result was later improved under a differential-inequality-based analysis in \cite{chen2023improved}, which removed all regularity conditions and enlarged the distributional set to enforce the Lipschitz condition only for the target and to include all finite-variance targets. Their idea inspired our analysis to investigate a differential-inequality based analysis also for the discrete diffusion models.
Apart from the works mentioned above, there are other works that provide convergence guarantees on the discrete-time formulation directly \cite{li2023faster,li2024accl-prov,liang2024discrete}, on the deterministic sampler \cite{chen2023ddim,huang2024pfode,li2025unified,li2024sharpode}, on the Wasserstein distance \cite{bruno2023wass,gao2023wass}, and on the modified predictor-corrector sampling method \cite{pedrotti2023predcorr}. Recently, \cite{huang2024rev-trans-kern} provided a unified reverse-transition-kernel framework that include these stochastic samplers for continuous-space diffusion models. 

\textbf{Empirical Works on Discrete Diffusion Models:} Different from continuous-space diffusion models, discrete-space diffusion models are rising as a strong candidate in generative modeling, especially for those tasks involving discrete data such as texts \cite{lou2024entropy}, images \cite{campbell2022discrete}, and graphs \cite{diffusion-survey-graph}, having applications even in the biochemical field \cite{diffusion-survey-drug-design}. Early ideas of discrete diffusion models can be traced back to \cite{sohldickstein2015}, which was subsequently extended in \cite{hoogeboom2021argmax,austin2021discrete}. The continuous-time counterpart using CTMC was developed in \cite{campbell2022discrete}. In particular, all models in \cite{hoogeboom2021argmax,austin2021discrete,campbell2022discrete} are trained according to a variational inference objective, which maximizes an Evidence Lower-Bound (ELBO). While empirically effective, it is hard for one to characterize the estimation error, which is implicit in the Jensen bound. Recently, \cite{lou2024entropy} first proposed the score-entropy estimation error which is easy to train, and they achieved empirical success compared to autogressive models in text generation tasks. They also proposed a new discrete diffusion sampler by approximately solving the Tweedie's formula, which they call the Tweedie $\tau$-leaping sampler. Other than an improved training objective, there are other works that are focused on the conditional guidance in discrete diffusion models \cite{vignac2023digress,nisonoff2025dcfg,schiff2025guidance}, on the discrete flow models \cite{gat2024dfm,campbell2024dfm}, on the non-Markovian sampling process \cite{chen2024dndm}, on fine-tuning \cite{wang2025finetuning}, to name a few. For per-step sampling methods, note that the majority of these works (only except \cite{campbell2022discrete}) use categorical sampling methods, yielding good empirical performances.

\textbf{Theory on Discrete Diffusion Models:} While there have been flourishing results for continuous-space diffusion models, the theoretical understanding of discrete diffusion models remains limited. All convergence results are given in \Cref{tab:literature}. Among them, \cite{campbell2022discrete} provided an early convergence analysis under the TV metric using the $\tau$-leaping sampler. However, the estimation error is quite strong, and the parameter dependencies are also high. More recently, under the score-entropy estimation errors, \cite{chen2024uniformization} provided the convergence result using the uniformization sampler on a $d$-dimensional hypercube, which was subsequently extended to general $[S]^d$ space in \cite{ren2025stoc-int}. For deterministic-step-size samplers, \cite{zhang2025conv-disc} performed analyses by assuming the accessibility of a perfect per-step sampler via solving the Kolmogorov equation, \cite{conforti2025markov} analyzed an Euler-type method which differs from the standard Euler schemes \cite{lou2024entropy,nisonoff2025dcfg},\footnote{In the DMPM algorithm in \cite{conforti2025markov}, at most one coordinate is updated at each step, whereas the standard Euler sampler \cite{lou2024entropy,nisonoff2025dcfg} first constructs sampling probabilities for all coordinates and then performs a simultaneous categorical draw across all of them. Consequently, the number of flips at each step is a random variable taking values in 0 to $d$ (where $d$ is the dimension).} and \cite{ren2025stoc-int} investigated the more practical $\tau$-leaping sampler. Among these works, \cite{zhang2025conv-disc} required that the score-entropy loss is evaluated on the continuous sampling path, whereas \cite{ren2025stoc-int,conforti2025markov} only required such loss to be evaluated on the discrete sampling grid. Notably, all of these works \cite{chen2024uniformization,ren2025stoc-int,zhang2025conv-disc,conforti2025markov} employed the Girsanov change-of-measure framework, which requires such regularity conditions (that the likelihood function is a path-wise local martingale) that are hard to check in practice. Recently, \cite{ren2025fast} investigated possible acceleration schemes in discrete diffusion models, and \cite{huang2025quantized} used discrete diffusion techniques to solve for continuous diffusion problems under quantization.

\section{Full List of Notations}\label{app:notations}

For any two functions $f(d,\delta,\eps)$ and $g(d,\delta,\eps)$, we write $f(d,\delta,\eps) \lesssim g(d,\delta,\eps)$ (resp. $f(d,\delta,\eps) \gtrsim g(d,\delta,\eps)$) for some universal constant (not depending on $\delta$, $d$ or $T$) $L < \infty$ (resp. $L > 0$) if $\limsup_{\eps \to 0} | f(d,\delta,\eps)/$ $g(d,\delta,\eps) | \leq L$ (resp. $\liminf_{\eps \to 0} | f(d,\delta,\eps) / g(d,\delta,\eps) | \geq L$). We write $f(d,\delta,\eps) \asymp g(d,\delta,\eps)$ when both $f(d,\delta,\eps) \lesssim g(d,\delta,\eps)$ and $f(d,\delta,\eps) \gtrsim g(d,\delta,\eps)$ hold. 
Unless otherwise specified, we write $x^i (1\leq i \leq d)$ as the $i$-th element of a vector $x \in [S]^d$ and $[A]^{ij}$ as the $(i,j)$-th element of a matrix $A$. Also, write $x^{-i} \in [S]^{d-1}$ as the $i$-th element removed.
Define $\Ham{x,y}$ as the Hamming distance between two vectors $x$ and $y$ (which is equal to the number of non-equal elements). We also write $\Ham{x} = \Ham{x, 0}$ for brevity. 
For matrices $A,B$, $\Tr(A)$ is the trace of $A$, and $A \preceq B$ means that $B-A$ is positive semi-definite. For a positive integer $n$, $[n] := \{1,\dots,n\}$. 
Write $\bm{1}_S$ as a vector of length $S$ that contains all 1's, and $I_S$ as an identity matrix of size $S \times S$.
\correction{Write $\ind{A}$ as the indicator function of an event $A$.}



\section{Implementations of Discrete Diffusion Samplers}
\label{app:practical_spls}

In this section, we provide the implementation of two practical sampling algorithms typically used in empirical studies, namely the Euler method and Tweedie $\tau$-leaping \cite{lou2024entropy}. We also provide our construction of an approximate discrete sampler for the proof of \Cref{thm:euler_tweedie}, which we name as Truncated $\tau$-leaping. 



\begin{algorithm} \label{alg:euler}
\caption{Euler method (e.g., in \cite{lou2024entropy,nisonoff2025dcfg})}
\SetKwInput{Input}{Input}
\SetKwInput{Return}{Return}

\Input{initial sample $x_{t_0} \sim p_0$, discretization points $\cbrc{t_k}_{k=0}^N$ (with $t_0 = 0$ and $t_N = T-\delta$), estimated score on these discretized points $s_{T-t_k}$}
\For{$k = 0$ \KwTo $N-1$}{
    $\hat{R}_{t_k}(x,y) \gets R_{T-t_k}(y,x) s_{T-t_k}(y,x)$\;
    \For{$i = 1$ \KwTo $d$}{
        Recall $\hat{R}_{k}^i(z,a)$ from \eqref{eq:def_truncated_tau_leap_token_rate}. Draw $x^i_{t_{k+1}}$ as follows:
        \begin{equation*}
            x^i_{t_{k+1}} = \begin{cases}
                a~(\neq x^i_{t_{k}}), & \text{w.p.}~\hat{R}_{k}^i(x_{t_k}^i,a) (t_{k+1}-t_k)\\
                x^i_{t_{k}}, & \text{w.p.}~1 + \hat{R}_{k}^i(x_{t_k}^i,x_{t_k}^i) (t_{k+1}-t_k)
            \end{cases}
        \end{equation*}
    }
}
\Return{$x_{t_{N}}$}

\end{algorithm}

\begin{algorithm} \label{alg:tweedie-tau-leap}
\caption{Tweedie $\tau$-leaping \cite{lou2024entropy}}
\SetKwInput{Input}{Input}
\SetKwInput{Return}{Return}

\Input{initial sample $x_{t_0} \sim p_0$, discretization points $\cbrc{t_k}_{k=0}^N$ (with $t_0 = 0$ and $t_N = T-\delta$), estimated score on these discretized points $s_{T-t_k}$}
\For{$k = 0$ \KwTo $N-1$}{
    \For{$i = 1$ \KwTo $d$}{
        Draw $x^i_{t_{k+1}}$ as follows:
        \begin{equation*}
            x^i_{t_{k+1}} = \begin{cases}
                a~(\neq x^i_{t_{k}}), & \text{w.p.}~ \brc{ \sbrc{e^{-(\Bar{\beta}_{T-t_{k}}-\Bar{\beta}_{T-t_{k+1}}) R_{\text{base}}} }^{a:} s_{T-t_k}(x_{t_k}^{-i} \oplus_i \cdot, x_{t_k}) } \times \\
                & \qquad \qquad \sbrc{e^{(\Bar{\beta}_{T-t_{k}}-\Bar{\beta}_{T-t_{k+1}}) R_{\text{base}}} }^{a,x^i_{t_{k}}} \\
                x^i_{t_{k}}, & \text{otherwise}
            \end{cases}
        \end{equation*}
    }
}
\Return{$x_{t_{N}}$}

\end{algorithm}

\begin{algorithm} \label{alg:truncated_tau_leap}
\caption{Truncated $\tau$-leaping (used in \Cref{app:cor_euler_tweedie_proof})}
\SetKwInput{Input}{Input}
\SetKwInput{Return}{Return}

\Input{initial sample $x_{t_0} \sim p_0$, discretization points $\cbrc{t_k}_{k=0}^N$ (with $t_0 = 0$ and $t_N = T-\delta$), estimated score on these discretized points $s_{T-t_k}$}
\For{$k = 0$ \KwTo $N-1$}{
    $\hat{R}_{t_k}(x,y) \gets R_{T-t_k}(y,x) s_{T-t_k}(y,x)$\;
    \For{$i = 1$ \KwTo $d$}{
        For each $z,a \in [S]$, recall that the token-wise rate $\hat{R}_k^i \in \mbR^{S \times S}$ is defined in \eqref{eq:def_truncated_tau_leap_token_rate} as
        \begin{equation*}
            \hat{R}_k^i(z,a) := \hat{R}_{t_k}(x_{t_k}, x_{t_k}^{-i} \oplus_i a) \ind{z = x_{t_k}^i},\quad \forall a \neq x_{t_k}^i
        \end{equation*}
        with $\hat{R}_k^i(x_{t_k}^i,x_{t_k}^i) := -\sum_{\substack{a \in [S] \\ a \neq x_{t_k}^i}} \hat{R}_k^i(x_{t_k}^i,a)$\;
        Draw $x^i_{t_{k+1}}$ as follows:
        \begin{equation} \label{eq:def_truncated_tau_leap}
            x^i_{t_{k+1}} = \begin{cases}
                a~(\neq x^i_{t_{k}}), & \text{w.p.}~\frac{\hat{R}_k^i(x_{t_k}^i,a)}{- \hat{R}_k^i(x_{t_k}^i,x_{t_k}^i)} \brc{1 - \exp\brc{\hat{R}_k^i(x_{t_k}^i,x_{t_k}^i)(t_{k+1}-t_{k})}}\\
                x^i_{t_{k}}, & \text{w.p.}~\exp\brc{\hat{R}_k^i(x_{t_k}^i,x_{t_k}^i)(t_{k+1}-t_{k})}
            \end{cases}
        \end{equation}
    }
}
\Return{$x_{t_{N}}$}

\end{algorithm}

\section{Numerical Simulations}
\label{app:numer}

In this section, we provide some numerical simulations to validate our theoretical results. The target distribution is a synthetic autoregressive model with given coefficients.

\begin{figure}[h]
\begin{center}
\includegraphics[width=.7\textwidth]{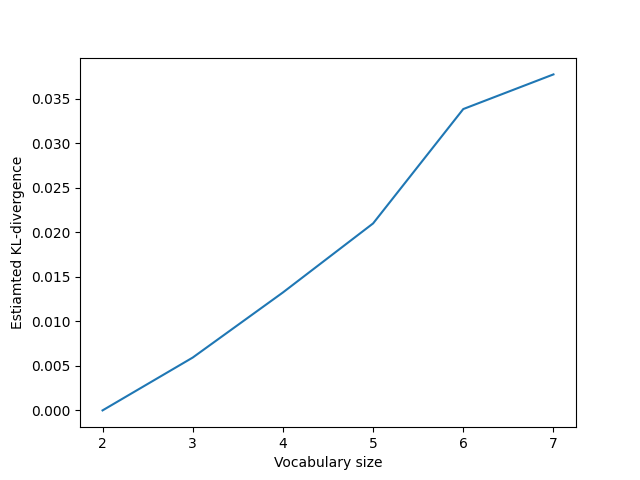}
\end{center}
\caption{
Estimated KL-divergence between the target and the sampling distribution. The target is generated autoregressively over the dimensions. Here $d=2$. We use Euler method to obtain 2000000 samples to estimate the KL divergence.}
\label{fig:kl-vs-S}
\end{figure}

\begin{figure}[h]
\begin{center}
\includegraphics[width=.7\textwidth]{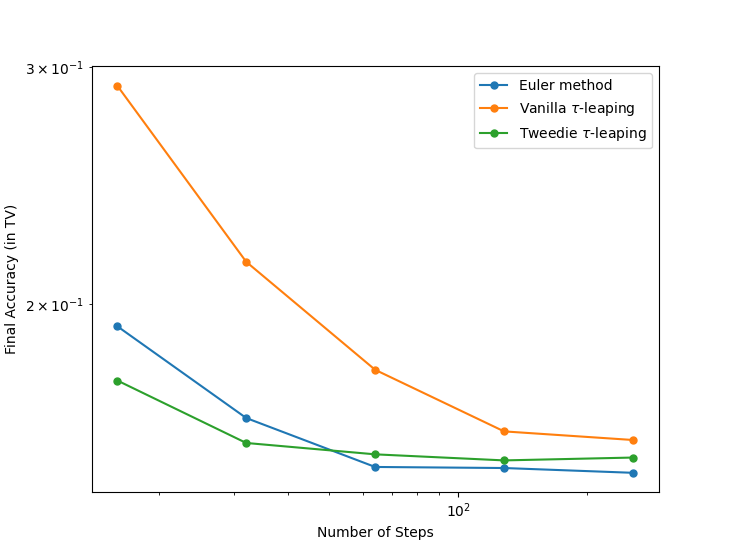}
\end{center}
\caption{
Estimated total variation distance between the target and sampling distribution of different sampling methods. Here $d=3$ and $S=8$. We use 30000 samples to estimate the TV distance.}
\label{fig:samplers-tv}
\end{figure}

\section{Proof of \texorpdfstring{\Cref{thm:gen_conv}}{Theorem 1}}
\label{app:thm1_proof}

Write $\calX := [S]^d$. To begin, we note the chain-rule of KL divergence as \cite[Theorem~1]{liang2024discrete} (cf. \cite[Theorem~1]{chen2023improved})
\begin{equation} \label{eq:thm1_main1}
    \KL{\rev{q}_T}{p_T} \leq \KL{\rev{q}_0}{p_0} + \sum_{k=0}^{N-1} \E_{x_{t_k} \sim \rev{q}_k} \sbrc{ \KL{\rev{q}_{t_{k+1}|t_k}(\cdot|x_{t_k})}{p_{t_{k+1}|t_k}(\cdot|x_{t_k})} },
\end{equation}
where, for each $x_{t_k} \in \calX$,
\begin{multline*}
    \KL{\rev{q}_{t_{k+1}|t_k}(\cdot|x_{t_k})}{p_{t_{k+1}|t_k}(\cdot|x_{t_k})} = \KL{\rev{q}_{s|t_k}(\cdot|x_{t_k})}{p_{s|t_k}(\cdot|x_{t_k})} + \\
    \int_s^{t_{k+1}} \frac{\partial}{\partial t} \KL{\rev{q}_{t|t_k}(\cdot|x_{t_k})}{p_{t|t_k}(\cdot|x_{t_k})} \d t.
\end{multline*}
By \Cref{lem:thm1_kl_lim}, we can take the limit $s \downarrow t_k$ which yields
\begin{equation} \label{eq:thm1_main2}
    \KL{\rev{q}_{t_{k+1}|t_k}(\cdot|x_{t_k})}{p_{t_{k+1}|t_k}(\cdot|x_{t_k})} = \int_{t_k}^{t_{k+1}} \frac{\partial}{\partial t} \KL{\rev{q}_{t|t_k}(\cdot|x_{t_k})}{p_{t|t_k}(\cdot|x_{t_k})} \d t.
\end{equation}
It suffices to provide an upper bound for the partial derivative of the KL divergence. Below, for notation brevity, we omit the conditional dependence on $x_{t_k}$ in notation and write $\rev{q}_{t|t_k}(x) = \rev{q}_{t|t_k}(x|x_{t_k})$ (resp. $p_{t|t_k}(x)$). We have
\begin{align*}
    &\frac{\partial}{\partial t} \KL{\rev{q}_{t|t_k}(\cdot|x_{t_k})}{p_{t|t_k}(\cdot|x_{t_k})} \\
    &= \frac{\partial}{\partial t} \sum_{x \in \calX} \rev{q}_{t|t_k}(x) \log \frac{\rev{q}_{t|t_k}(x)}{p_{t|t_k}(x)}\\
    &= \sum_{x \in \calX} \brc{ \frac{\partial}{\partial t}  \rev{q}_{t|t_k}(x) } \log \frac{\rev{q}_{t|t_k}(x)}{p_{t|t_k}(x)} + \sum_{x \in \calX} \rev{q}_{t|t_k}(x) \brc{\frac{\frac{\partial}{\partial t} \rev{q}_{t|t_k}(x)}{\rev{q}_{t|t_k}(x)} - \frac{\frac{\partial}{\partial t} p_{t|t_k}(x)}{p_{t|t_k}(x)}} \\
    &= \underbrace{\sum_{x \in \calX} \brc{ \frac{\partial}{\partial t}  \rev{q}_{t|t_k}(x) } \log \frac{\rev{q}_{t|t_k}(x)}{p_{t|t_k}(x)} }_{=: T_1} - \underbrace{\sum_{x \in \calX} \rev{q}_{t|t_k}(x) \frac{\frac{\partial}{\partial t} p_{t|t_k}(x)}{p_{t|t_k}(x)} }_{=: T_2}.
\end{align*}
Using the Kolmogorov forward equation (see \eqref{eq:def_forward}) and reversing $x$ and $y$, we have
\begin{align*}
    T_1 &= \sum_{x \in \calX} \brc{ \sum_{y \in \calX}  \rev{R}_t(y,x) \rev{q}_{t|t_k}(y) } \log \frac{\rev{q}_{t|t_k}(x)}{p_{t|t_k}(x)} = \sum_{x,y \in \calX}  \rev{R}_t(x,y) \rev{q}_{t|t_k}(x) \log \frac{\rev{q}_{t|t_k}(y)}{p_{t|t_k}(y)},\\
    T_2 &= \sum_{x \in \calX} \frac{\rev{q}_{t|t_k}(x)}{p_{t|t_k}(x)} \sum_{y \in \calX} \hat{R}_t(y,x) p_{t|t_k}(y) = \sum_{x,y \in \calX} \frac{\rev{q}_{t|t_k}(y)}{p_{t|t_k}(y)} \hat{R}_t(x,y) p_{t|t_k}(x).
\end{align*}

In order to show the desired result, we need to relate the ratio of densities with that of the rate matrices. Recall the definition of $\rev{R}_t$ in \eqref{eq:def_rev_proc}. We have
\begin{align*}
    T_1 &= \sum_{x,y \in \calX} \rev{R}_t(x,y) \rev{q}_{t|t_k}(x) \log \frac{\rev{q}_{t|t_k}(y)/\rev{q}_{t|t_k}(x)}{p_{t|t_k}(y)/p_{t|t_k}(x)} + \sum_{x \in \calX} \rev{q}_{t|t_k}(x) \log \frac{\rev{q}_{t|t_k}(x)}{p_{t|t_k}(x)} \brc{ \sum_{y \in \calX} \rev{R}_t(x,y) } \\
    &\stackrel{(i)}{=} \sum_{x,y \in \calX} \rev{R}_t(x,y) \rev{q}_{t|t_k}(x) \log \frac{\rev{q}_{t|t_k}(y)/\rev{q}_{t|t_k}(x)}{p_{t|t_k}(y)/p_{t|t_k}(x)} \\
    &\stackrel{(ii)}{=} \sum_{x \in \calX} \rev{R}_t(x,x) \rev{q}_{t|t_k}(x) + \sum_{\substack{x,y \in \calX \\ x \neq y}} \rev{R}_t(x,y) \rev{q}_{t|t_k}(x) \log \frac{\rev{R}_t(x,y)/R_{T-t|T-t_k} (y,x)}{p_{t|t_k}(y)/p_{t|t_k}(x)}
\end{align*}
where $(i)$ follows because $\sum_{y \in \calX} \rev{R}_t(x,y) = 0$, and $(ii)$ follows by \Cref{lem:thm1_campbell_ext} (where the conditioned $x_{t_k}$ is omitted for brevity) and note that $R_{T-t|T-t_k} (y,x) = \frac{q_{T-t_k|T-t}(x_{t_k}|x)}{q_{T-t_k|T-t}(x_{t_k}|y)} R_{T-t}(y,x)$.
Thus,
\begin{align*}
    &T_1 - T_2 \\
    &= \sum_{x \in \calX} \brc{\rev{R}_t(x,x) - \hat{R}_t(x,x)} \rev{q}_{t|t_k}(x) \\
    &\quad + \sum_{\substack{x,y \in \calX \\ x \neq y}} \rev{q}_{t|t_k}(x) \brc{ \rev{R}_t(x,y) \log \frac{\rev{R}_t(x,y)/R_{T-t|T-t_k} (y,x)}{p_{t|t_k}(y)/p_{t|t_k}(x)} - \frac{\rev{q}_{t|t_k}(y)/\rev{q}_{t|t_k}(x)}{p_{t|t_k}(y)/p_{t|t_k}(x)} \hat{R}_t(x,y) } \\
    &= \sum_{x \in \calX} \brc{\rev{R}_t(x,x) - \hat{R}_t(x,x) + \sum_{\substack{y \in \calX \\ y \neq x}}\rev{R}_t(x,y) \log \frac{\rev{R}_t(x,y)}{\hat{R}_t(x,y)}} \rev{q}_{t|t_k}(x) \\
    &\quad + \underbrace{\sum_{\substack{x,y \in \calX \\ x \neq y}} \rev{q}_{t|t_k}(x) \brc{ \rev{R}_t(x,y) \log \frac{\hat{R}_t(x,y)/R_{T-t|T-t_k} (y,x)}{p_{t|t_k}(y)/p_{t|t_k}(x)} - \frac{\rev{q}_{t|t_k}(y)/\rev{q}_{t|t_k}(x)}{p_{t|t_k}(y)/p_{t|t_k}(x)} \hat{R}_t(x,y) } }_{=: \mathcal{R}}.
\end{align*}
Now, since $\log z \leq z - 1$ for all $z > 0$, we have
\begin{align*}
    \mathcal{R} &\leq \sum_{\substack{x,y \in \calX \\ x \neq y}} \rev{q}_{t|t_k}(x) \brc{ \rev{R}_t(x,y) \brc{ \frac{\hat{R}_t(x,y)/R_{T-t|T-t_k} (y,x)}{p_{t|t_k}(y)/p_{t|t_k}(x)} - 1} - \frac{\rev{q}_{t|t_k}(y)/\rev{q}_{t|t_k}(x)}{p_{t|t_k}(y)/p_{t|t_k}(x)} \hat{R}_t(x,y) } \\
    &\stackrel{(iii)}{=} \sum_{\substack{x,y \in \calX \\ x \neq y}} \rev{q}_{t|t_k}(x) \brc{ \hat{R}_t(x,y) \brc{ \frac{\rev{q}_{t|t_k}(y)/\rev{q}_{t|t_k}(x)}{p_{t|t_k}(y)/p_{t|t_k}(x)} - 1} - \frac{\rev{q}_{t|t_k}(y)/\rev{q}_{t|t_k}(x)}{p_{t|t_k}(y)/p_{t|t_k}(x)} \hat{R}_t(x,y) } \\
    &\stackrel{(iv)}{=} \sum_{x \in \calX} \rev{q}_{t|t_k}(x) \hat{R}_t(x,x) + \sum_{\substack{x,y \in \calX \\ x \neq y}} \rev{q}_{t|t_k}(x) \hat{R}_t(x,y)  \brc{ \frac{\rev{q}_{t|t_k}(y)/\rev{q}_{t|t_k}(x)}{p_{t|t_k}(y)/p_{t|t_k}(x)} - \frac{\rev{q}_{t|t_k}(y)/\rev{q}_{t|t_k}(x)}{p_{t|t_k}(y)/p_{t|t_k}(x)}} \\
    & \stackrel{(v)}{\leq} 0
\end{align*}
where $(iii)$ is again by \Cref{lem:thm1_campbell_ext}, $(iv)$ follows because $\sum_{y \in \calX} \hat{R}_t(x,y) = 0$, and $(v)$ follows because $\hat{R}_t(x,x) \leq 0$. Therefore,
\begin{align*}
    &\E_{x_{t_k} \sim \rev{q}_k} \sbrc{\frac{\partial}{\partial t} \KL{\rev{q}_{t|t_k}(\cdot|x_{t_k})}{p_{t|t_k}(\cdot|x_{t_k})} }\\
    &\leq \E_{\substack{x_{t_k} \sim \rev{q}_k \\ x_t \sim \rev{q}_{t|t_k}(\cdot|x_{t_k})}} \sbrc{ \rev{R}_t(x_t,x_t) - \hat{R}_t(x_t,x_t) + \sum_{\substack{y \in \calX \\ y \neq x_t}}\rev{R}_t(x_t,y) \log \frac{\rev{R}_t(x_t,y)}{\hat{R}_t(x_t,y)} } \\
    &= \E_{x_{t} \sim \rev{q}_t} \sbrc{ \sum_{\substack{y \in \calX \\ y \neq x_t}} \hat{R}_t(x_t,y) - \rev{R}_t(x_t,y) + \rev{R}_t(x_t,y) \log \frac{\rev{R}_t(x_t,y)}{\hat{R}_t(x_t,y)} }.
\end{align*}
The proof is now complete by combining this with \eqref{eq:thm1_main1} and \eqref{eq:thm1_main2}.

\section{Proof of \texorpdfstring{\Cref{cor:gen_conv_se}}{Corollary 1}}
\label{app:cor_gen_conv_se_proof}

The proof is straight-forward by noting that
\begin{align*}
    & \E_{x_t \sim \rev{q}_{t}} \Bigg[ \sum_{y \neq x_t} \hat{R}_t(x_t,y) - \rev{R}_t(x_t,y) + \rev{R}_t(x_t,y) \log \frac{\rev{R}_t(x_t,y)}{\hat{R}_t(x_t,y)} \Bigg] \\
    &= \E_{x_t \sim \rev{q}_{t}} \Bigg[ \sum_{y \neq x_t} R_{T-t}(y,x_t) s_{T-t}(y,x_t) - R_{T-t}(y,x_t) \frac{q_{T-t}(y)}{q_{T-t}(x_t)} \\
    &\qquad + R_{T-t}(y,x_t) \frac{q_{T-t}(y)}{q_{T-t}(x_t)} \log \frac{R_{T-t}(y,x_t) (q_{T-t}(y) / q_{T-t}(x_t)) }{R_{T-t}(y,x_t) s_{T-t}(y,x_t)} \Bigg] \\
    &= \E_{x_t \sim \rev{q}_{t}} \Bigg[ \sum_{y \neq x_t} R_{T-t}(y,x_t) \brc{ s_{T-t}(y,x_t) - \frac{q_{T-t}(y)}{q_{T-t}(x_t)} + \frac{q_{T-t}(y)}{q_{T-t}(x_t)} \log \frac{ q_{T-t}(y) / q_{T-t}(x_t) }{ s_{T-t}(y,x_t)} } \Bigg] \\
    &= \calL_{SE}(T-t;s_{T-t})
\end{align*}
where the last line follows from the definition of score-entropy in \eqref{eq:def_score_ent}.

\section{Proof of \texorpdfstring{\Cref{thm:conv_disc_diff}}{Theorem 2}}
\label{app:thm_conv_disc_diff_proof}

In this section, we provide the proof of \Cref{thm:conv_disc_diff}. Before we start, the following assumption characterizes those general approximate deterministic-step-size samplers (i.e., approximation to the Kolmogorov samplers) that can sample efficiently.

\begin{definition}[Approximate Sampling Method] 
\label{def:approx_spl_rate}
    The sampling rate $\hat{R}_{t}$ is piecewise constant, i.e., constant within $t \in [t_k, t_{k+1})$. Also, given $x_{t_k} \in [S]^d$, we have $\hat{R}_{t}(x_{t_k},\cdot) = \hat{R}_{t_k}(x_{t_k},\cdot)$.
\end{definition}


\Cref{def:approx_spl_rate} is especially useful for discrete diffusion models where the exact solution of the Kolmogorov equation of the sampling CTMC is computationally hard to obtain. In particular, \Cref{def:approx_spl_rate} is satisfied for the rate of $\tau$-leaping (see \eqref{eq:def_tau_leap_rate}). It will also be satisfied for the truncated $\tau$-leaping method later (see \eqref{eq:def_truncated_tau_leap_rate}).

\subsection{Step 1: Decomposing total error}

To begin, we can employ the general result of \Cref{thm:gen_conv} and get that
\begin{align} \label{eq:thm2_step0}
    &\KL{\rev{q}_{t_N}}{p_{t_N}} \nonumber\\
    &\leq \KL{\rev{q}_0}{p_0} + \sum_{k=0}^{N-1} \E_{x_{t_k} \sim \rev{q}_k} \sbrc{ \KL{\rev{q}_{t_{k+1}|t_k}(\cdot|x_{t_k})}{p_{t_{k+1}|t_k}(\cdot|x_{t_k})} } \nonumber\\
    &\leq \KL{\rev{q}_0}{p_0} + \sum_{k=0}^{N-1} \int_{t_k}^{t_{k+1}} \E_{x_t \sim \rev{q}_t} \underbrace{\sbrc{ \sum_{y \neq x_t} \hat{R}_t(x_t,y) - \rev{R}_t(x_t,y) + \rev{R}_t(x_t,y) \log \frac{\rev{R}_t(x_t,y)}{\hat{R}_t(x_t,y)} } }_{=: g_t(x_t)} \d t.
\end{align}
Note that $g_t$ is a Bregman divergence (generated by the negative entropy function) for each $x$, and thus $g_t(x) \geq 0$ for all $x \in [S]^d$. \correction{To see this, we fix $x_t$ and consider two vectors $p$ and $q$ such that $p_y := \rev{R}_t(x_t,y)$ and $q_y := \hat{R}_t(x_t,y)$ such that $y \neq x_t$. Also, let $\phi(p) := \sum_{y \neq x_t} p_y \log p_y$ (i.e., the negative entropy function, which is convex), and the corresponding Bregman divergence is $D_\phi (p, q) = \phi(p) - \phi(q) - \langle y-x,\phi(y) \rangle = \sum_{y \neq x_t} p_y \log p_y - q_y \log q_y - (p_y - q_y) (1 + \log q_y) = \sum_{y \neq x_t} p_y \log(p_y / q_y) - p_y + q_y$, which is exactly $g_t$.} We further decompose $g_t$ into three different terms:
\begin{align} \label{eq:thm2_main}
    &\KL{\rev{q}_T}{p_T} \leq \KL{\rev{q}_0}{p_0} + \sum_{k=0}^{N-1} \int_{t_k}^{t_{k+1}} \E_{x_t \sim \rev{q}_t} \sbrc{g_t(x_t)} \d t \nonumber\\
    &= \underbrace{\KL{\rev{q}_0}{p_0}}_{\text{initialization error}} + \underbrace{\sum_{k=0}^{N-1} (t_{k+1}-t_k) \E_{x_{t_k} \sim \rev{q}_{t_k}} \sbrc{ g_{t_k}(x_{t_k}) } }_{\text{estimation error}} + \nonumber \\
    &\qquad \underbrace{\sum_{k=0}^{N-1} \int_{t_k}^{t_{k+1}} \E_{\substack{x_t \sim \rev{q}_t \\ x_{t_k} \sim \rev{q}_{t_k}}} \sbrc{g_t(x_t) - g_t(x_{t_k})} + \E_{x_{t_k} \sim \rev{q}_{t_k}} \sbrc{g_t(x_{t_k}) - g_{t_k}(x_{t_k})} \d t }_{\text{discretization error}}. 
\end{align}
From \cite[Proposition~2]{zhang2025conv-disc} and \cite[Theorem~C.1]{ren2025stoc-int}, the initialization error satisfies that
\[ \KL{\rev{q}_0}{p_0} \lesssim (d \log S) e^{-T}. \]
Recall \eqref{eq:def_rev_proc} and \eqref{eq:def_est_reverse_rate}. Note that the estimation error term can be upper-bounded as
\begin{align*}
    &\sum_{k=0}^{N-1} (t_{k+1}-t_k) \E_{x_{t_k} \sim \rev{q}_{t_k}} \sbrc{ g_{t_k}(x_{t_k}) } \\
    &= \sum_{k=0}^{N-1} (t_{k+1}-t_k) \E_{x_{t_k} \sim q_{T-t_k}} \sbrc{ \sum_{y \neq x_{t_k}} \hat{R}_{t_k}(x_{t_k},y) - \rev{R}_{t_k}(x_{t_k},y) + \rev{R}_{t_k}(x_{t_k},y) \log \frac{\rev{R}_{t_k}(x_{t_k},y)}{\hat{R}_{t_k}(x_{t_k},y)} } \\
    &= \sum_{k=0}^{N-1} (t_{k+1}-t_k) \E_{x_{t_k} \sim q_{T-t_k}} \sum_{y \neq x_{t_k}} R_{T-t_k}(y,x_{t_k}) \times \\
    &\qquad \brc{ s_{T-t_k}(y,x_{t_k}) - \frac{q_{T-t_k}(y)}{q_{T-t_k}(x_{t_k})} + \frac{q_{T-t_k}(y)}{q_{T-t_k}(x_{t_k})} \log \frac{q_{T-t_k}(y) / q_{T-t_k}(x_{t_k})}{s_{T-t_k}(y,x_{t_k})} } \\
    &= \sum_{k=0}^{N-1} (t_{k+1}-t_k) \calL_{SE}(T-t_k;s_{T-t_k})\\
    &\leq \eps_{\text{score}},
\end{align*}
where the last line follows from \Cref{ass:score}. As follows, the goal is to provide an upper bound for the discretization error.

\subsection{Step 2: Identifying dominant term for discretization error}

As shown in \eqref{eq:thm2_main}, the discretization error can be decomposed into two terms, one for the time-difference in the argument of $g_t$ (in expected value), and the other for the difference in $g_t$ itself. In the following lemma, we show that the former term decays faster than the other, which further implies that the latter term is the dominant error term for the discretization error.

\begin{lemma} \label{lem:disc_err_vanish_term}
    For each $k=0,\dots,N-1$ and $t \in [t_k, t_{k+1})$, We have
    \[ \E_{\substack{x_t \sim \rev{q}_t \\ x_{t_k} \sim \rev{q}_{t_k}}} \sbrc{g_t(x_t) - g_t(x_{t_k})} \lesssim (t-t_k) d \cdot \E_{x_{t} \sim \rev{q}_{t}} \sbrc{ g_{t}(x_{t}) }. \]
\end{lemma}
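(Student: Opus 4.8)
\textbf{Proof plan for \Cref{lem:disc_err_vanish_term}.}

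The plan is to write $g_t(x_t) - g_t(x_{t_k})$ as a telescoping difference over the CTMC transitions along $[t_k,t]$ and then control it by the rate of change of the marginal distribution $\rev{q}_s$ on that interval. Concretely, since $\rev{q}_{t} = \rev{q}_{t_k}$ evolved by the forward-in-reverse Kolmogorov equation $\frac{\d}{\d s}\rev{q}_s(x) = \sum_{y} \rev{R}_s(y,x)\rev{q}_s(y)$, I would first write, for the fixed test function $g_t(\cdot)$ (note: here the \emph{time index} on $g$ is frozen at $t$, only the argument is being averaged under two different laws),
\begin{align*}
    \E_{x_t \sim \rev{q}_t}[g_t(x_t)] - \E_{x_{t_k} \sim \rev{q}_{t_k}}[g_t(x_{t_k})] = \int_{t_k}^{t} \frac{\d}{\d s}\sum_{x} \rev{q}_s(x) g_t(x) \, \d s = \int_{t_k}^{t} \sum_{x} \sum_{y \neq x} \rev{R}_s(x,y)\,\rev{q}_s(x)\,\brc{g_t(y) - g_t(x)} \, \d s,
\end{align*}
after using $\sum_y \rev R_s(x,y) = 0$ to rewrite the generator action. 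Since the forward rate $R_t$ only connects Hamming-distance-one states (cf. \eqref{eq:def_forward_rate}), the same holds for $\rev R_s$, so the inner sum runs over the $(S-1)d$ neighbours of $x$, and the total outgoing rate from any $x$ is $\sum_{y\neq x}\rev R_s(x,y) = -\rev R_s(x,x)$, which under the chosen $R_{\text{base}}$ and \Cref{ass:score_bound} is bounded by $\calO(Md)$ — but more usefully is of order $d$ up to the score bound. This is where the factor $d$ in the statement comes from: each coordinate contributes one "slot" of transitions.

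The main work is to bound $|g_t(y) - g_t(x)|$ for neighbours $x,y$ and glue it to the target quantity $\E_{x_t\sim\rev q_t}[g_t(x_t)]$. The cleanest route is \emph{not} to bound $|g_t(y)-g_t(x)|$ pointwise (which would reintroduce bad $M$/$S$ dependence), but to observe that $g_t(\cdot) \ge 0$ is a Bregman divergence and to use the crude bound $g_t(y) - g_t(x) \le g_t(y)$, then re-index the double sum: $\sum_x \sum_{y\sim x} \rev R_s(x,y)\rev q_s(x) g_t(y) = \sum_y g_t(y) \sum_{x \sim y} \rev R_s(x,y) \rev q_s(x)$. The inner sum is exactly $\big(\frac{\d}{\d s}\rev q_s(y)\big) + (\text{self-rate term})$, i.e. it equals $\sum_x \rev R_s(x,y)\rev q_s(x)$, which I can bound by (total incoming rate) $\times \max_x \rev q_s(x)$ or, better, relate to $\rev q_s(y)$ itself via the detailed structure $\rev R_s(x,y) = R_{T-s}(y,x) q_{T-s}(y)/q_{T-s}(x)$: then $\sum_x \rev R_s(x,y)\rev q_s(x) = \sum_x R_{T-s}(y,x) q_{T-s}(y) = q_{T-s}(y)\,(-R_{T-s}(y,y)) = \rev q_s(y)\cdot \frac{S-1}{S}d \le d\,\rev q_s(y)$, using the explicit value $R_{T-s}(y,y) = -\frac{S-1}{S}d$ from the preliminaries. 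Wait — I must be careful: this uses $\rev q_s = q_{T-s}$ exactly, which holds, and $\sum_x R_{T-s}(y,x) = 0$ gives $\sum_{x\ne y} R_{T-s}(y,x) = -R_{T-s}(y,y)$; so indeed $\sum_{x\neq y}\rev R_s(x,y)\rev q_s(x) \le d\,\rev q_s(y)$. Substituting back yields $\E_{x_t}[g_t(x_t)] - \E_{x_{t_k}}[g_t(x_{t_k})] \lesssim \int_{t_k}^t d\,\E_{x_s\sim \rev q_s}[g_t(x_s)]\,\d s$, and then I replace $\rev q_s$ by $\rev q_t$ up to higher-order terms (or, since $s\le t$ with $t - t_k = \calO(\kappa)$ small, absorb the difference), giving the claimed bound $\lesssim (t-t_k)\,d\cdot \E_{x_t\sim\rev q_t}[g_t(x_t)]$.

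The step I expect to be the main obstacle is the last replacement: the integrand on the right is $\E_{x_s\sim\rev q_s}[g_t(x_s)]$ for $s\in[t_k,t]$, whereas the lemma asks for $\E_{x_t\sim\rev q_t}[g_t(x_t)]$. Closing this requires either (a) a Grönwall-type argument showing $\E_{x_s\sim\rev q_s}[g_t(x_s)] \le e^{\calO(d(t-s))}\,\E_{x_{t_k}\sim\rev q_{t_k}}[g_t(x_{t_k})]$ and then noting $\E_{x_{t_k}}[g_t(x_{t_k})] \le \E_{x_t}[g_t(x_t)] + (\text{the difference we are bounding})$, yielding a self-referential inequality that one solves for the difference (valid since $d(t-t_k) = \calO(d\kappa)$ is small, so the prefactor $1 - \calO(d\kappa)$ stays positive); or (b) a monotonicity/absolute-continuity comparison showing $\E_{x_s\sim\rev q_s}[g_t(x_s)] \lesssim \E_{x_t\sim\rev q_t}[g_t(x_t)]$ directly. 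I would pursue route (a): rearranging the self-referential bound $\E_{x_t}[g_t] - \E_{x_{t_k}}[g_t] \lesssim (t-t_k)d\cdot(\E_{x_t}[g_t] + |\E_{x_t}[g_t]-\E_{x_{t_k}}[g_t]|)$ and using smallness of $(t-t_k)d$ isolates $\big|\E_{x_t}[g_t] - \E_{x_{t_k}}[g_t]\big| \lesssim (t-t_k)d\cdot \E_{x_t\sim\rev q_t}[g_t(x_t)]$, which is exactly the assertion. A secondary technical point to check carefully is the non-negativity and finiteness needed to justify interchanging $\frac{\d}{\d s}$ with the (finite) sum over $\calX$ and the use of $g_t(y)-g_t(x)\le g_t(y)$ in a signed sum — since $g_t\ge0$ but the difference can be negative, one must split into positive/negative parts of $\rev R_s(x,y)$, but since $\rev R_s(x,y)\ge 0$ for $x\ne y$ this is automatic after the self-rate term is pulled out.
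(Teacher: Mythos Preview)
Your approach is correct and the core inequality is the same as the paper's (drop the nonnegative $-g_t(x)$ contribution since the off-diagonal weights are nonnegative), but the route you take is more roundabout than necessary. The paper does not differentiate the marginal $\rev q_s$ in $s$; instead it conditions on $x_t$ and writes
\[
\E\sbrc{g_t(x_t)-g_t(x_{t_k})} \;=\; \E_{x_t\sim\rev q_t}\Bigl[\,g_t(x_t)\;-\!\!\sum_{x_{t_k}} q_{T-t_k\mid T-t}(x_{t_k}\mid x_t)\,g_t(x_{t_k})\Bigr],
\]
then expands the \emph{forward} transition kernel to first order, $q_{T-t_k\mid T-t}(x_{t_k}\mid x_t)=\ind{x_{t_k}=x_t}+R_t(x_t,x_{t_k})(t-t_k)+o(t-t_k)$, and drops the $x_{t_k}\neq x_t$ terms using $R_t(x_t,x_{t_k})\ge0$ and $g_t\ge0$. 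This leaves $(t-t_k)(-R_t(x_t,x_t))\,g_t(x_t)=(t-t_k)\tfrac{S-1}{S}d\,g_t(x_t)$, already under the law of $x_t$, so no Gr\"onwall or self-referential closure is needed. Your identity $\sum_{x\neq y}\rev R_s(x,y)\rev q_s(x)=\tfrac{S-1}{S}d\,\rev q_s(y)$ is exactly the same computation seen from the reverse side; the paper just avoids it by working with the forward kernel from the start. Your route~(a) to close the gap between $\E_{x_s}[g_t(x_s)]$ and $\E_{x_t}[g_t(x_t)]$ does work (and can be shortened: if $\phi(t)-\phi(t_k)\le0$ the bound is trivial, and if $\phi(t)\ge\phi(t_k)$ then $\phi(t)-\phi(t_k)\lesssim (t-t_k)d\,\phi(t_k)\le(t-t_k)d\,\phi(t)$), but conditioning on the endpoint makes the whole step unnecessary.
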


\begin{proof}
    See \Cref{app:lem_disc_err_vanish_term_proof}.
\end{proof}

As a result of \Cref{lem:disc_err_vanish_term}, suppose that $t_{k+1} - t_k \leq \kappa$, we further have
\begin{align} \label{eq:thm2_disc_err_vanish_term_res}
    \int_{0}^{T-\delta} \E_{\substack{x_t \sim \rev{q}_t \\ x_{t_k} \sim \rev{q}_{t_k}}} \sbrc{g_t(x_t) - g_t(x_{t_k})} &\leq \kappa \cdot O\brc{\int_{0}^{T-\delta} \E_{x_{t} \sim \rev{q}_{t}} \sbrc{ g_{t}(x_{t})} \d t } \nonumber\\
    &= \kappa \cdot O\brc{\eps_{\text{score}} + \int_{0}^{T-\delta} \E_{x_{t_k} \sim \rev{q}_{t_k}} \sbrc{g_t(x_{t_k}) - g_{t_k}(x_{t_k})} \d t }.
\end{align}
Here the last line follows from the decomposition of $g_t$ as in \eqref{eq:thm2_main}.
Thus, this term (corresponding to the difference in $x_t$ in expectation) does not contribute to the overall rate as long as $\kappa \to 0$.

\subsection{Step 3: Bounding dominant term for discretization error}

Now, we control the second term in the discretization error in \eqref{eq:thm2_main}, which is also the dominant error term as shown in Step 2. We also explicitly express its parameter dependencies. The following lemma provides a useful score bound for further analysis, which is similar to \cite[Remark~B.3]{ren2025stoc-int} and \cite[Lemma~2]{zhang2025conv-disc} and provided here for completeness.

\begin{lemma}\label{lem:rate-property}
    Fix $t > 0$ and $x \neq y$ such that $\Ham{x,y} = 1$. Given the forward process in \eqref{eq:def_forward} with a rate given in \eqref{eq:def_forward_rate}, we have
    \[ \frac{q_{t}(y)}{q_{t}(x)} \lesssim S \cdot \max\{1, t^{-1}\}. \]
\end{lemma}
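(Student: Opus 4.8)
The plan is to exploit the explicit form of the forward transition kernel under the uniform base rate matrix $R_{\text{base}} = \tfrac{1}{S}\bm{1}_S\bm{1}_S^\T - I_S$. Since tokens propagate independently and homogeneously, and $\Ham{x,y}=1$, the ratio $q_t(y)/q_t(x)$ factorizes nicely: writing $y = x^{-i}\oplus_i b$ and $x^i = a$, only the $i$-th coordinate differs, so after marginalizing over the data $x_0$,
\[
\frac{q_t(y)}{q_t(x)} = \frac{\sum_{x_0} q_0(x_0)\,\prod_j q^j_{t|0}(y^j|x_0^j)}{\sum_{x_0} q_0(x_0)\,\prod_j q^j_{t|0}(x^j|x_0^j)}.
\]
The key point is a uniform two-sided bound on the single-token conditional $q^i_{t|0}(\cdot|\cdot)$. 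For the chosen $R_{\text{base}}$, the token forward kernel is $e^{t R_{\text{base}}} = e^{-t} I_S + (1-e^{-t})\tfrac{1}{S}\bm{1}_S\bm{1}_S^\T$, so each entry lies in $[\tfrac{1-e^{-t}}{S},\, e^{-t} + \tfrac{1-e^{-t}}{S}]$. Hence for any $a,b\in[S]$ we have $q^i_{t|0}(b|a) \geq \tfrac{1-e^{-t}}{S}$ and $q^i_{t|0}(b|a) \leq e^{-t}+\tfrac{1-e^{-t}}{S} \leq 1$.

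First I would reduce the ratio to a single-coordinate ratio: because $x$ and $y$ agree in all coordinates $j\neq i$, every product term $\prod_{j\neq i} q^j_{t|0}(\cdot|x_0^j)$ is common to numerator and denominator within each summand, but they are weighted differently across $x_0$, so I cannot cancel directly. Instead I would bound numerator and denominator separately: the numerator is $\leq \max_{a,b} q^i_{t|0}(b|a) \cdot \sum_{x_0}q_0(x_0)\prod_{j\neq i}q^j_{t|0}(x^j|x_0^j) \leq 1 \cdot (\text{common factor})$, while the denominator is $\geq \min_{a,b} q^i_{t|0}(b|a)\cdot \sum_{x_0}q_0(x_0)\prod_{j\neq i}q^j_{t|0}(x^j|x_0^j) \geq \tfrac{1-e^{-t}}{S}\cdot(\text{common factor})$, where the common factor $\sum_{x_0}q_0(x_0)\prod_{j\neq i}q^j_{t|0}(x^j|x_0^j)$ is identical (same coordinates $x^j$, $j\neq i$) in both bounds and cancels. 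This yields
\[
\frac{q_t(y)}{q_t(x)} \leq \frac{1}{(1-e^{-t})/S} = \frac{S}{1-e^{-t}}.
\]
Finally I would observe $\tfrac{1}{1-e^{-t}} \lesssim \max\{1, t^{-1}\}$: for $t\geq 1$ it is bounded by a constant, and for $t\leq 1$ we have $1-e^{-t}\geq t/2$, giving $\tfrac{1}{1-e^{-t}}\leq 2/t$. Combining gives $q_t(y)/q_t(x) \lesssim S\max\{1,t^{-1}\}$, as claimed.

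The main obstacle — really the only subtlety — is the cancellation step: one must be careful that the ``common factor'' $\sum_{x_0}q_0(x_0)\prod_{j\neq i}q^j_{t|0}(x^j|x_0^j)$ appearing in the bound for the numerator is exactly the same expression as the one in the bound for the denominator, which holds precisely because $x$ and $y$ differ only in coordinate $i$. If instead $\Ham{x,y}\geq 2$ the common-factor argument would lose powers of $S$ per differing coordinate, which is why the hypothesis $\Ham{x,y}=1$ is essential. Everything else (the explicit matrix exponential, the elementary bound $1-e^{-t}\geq t/2$ on $[0,1]$) is routine.
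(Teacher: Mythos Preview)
Your argument is correct. Both your route and the paper's arrive at essentially the same bound, but the mechanics differ in a useful way.

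The paper first rewrites the ratio as a \emph{posterior expectation} of the single-token ratio:
\[
\frac{q_t(y)}{q_t(x)} \;=\; \E_{x_0 \sim q_{0|t}(\cdot\mid x)}\!\left[\frac{q^j_{t|0}(y^j\mid x_0^j)}{q^j_{t|0}(x^j\mid x_0^j)}\right],
\]
where $j$ is the unique differing coordinate (this is \eqref{eq:lr-posterior} in the paper), and then bounds the integrand case-by-case in $x_0^j$, obtaining the slightly sharper constant $\tfrac{1+(S-1)e^{-t}}{1-e^{-t}}$ in the worst case. You instead bound numerator and denominator separately by pulling out uniform-in-$x_0^i$ bounds on the single-token kernel, which makes the common factor $\sum_{x_0}q_0(x_0)\prod_{j\neq i}q^j_{t|0}(x^j\mid x_0^j)$ cancel exactly; this gives $\tfrac{S}{1-e^{-t}}$. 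The two bounds agree up to a universal constant, so for the $\lesssim$ statement there is no loss.

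Your approach is more elementary and self-contained for this lemma alone. The paper's approach has the side benefit that the posterior-expectation identity \eqref{eq:lr-posterior} is established along the way and then reused verbatim in the proof of \Cref{lem:lr-diff-new}, where the decomposition into ``change in the single-token ratio'' plus ``change in the posterior'' is the crux of the improved $S$-dependence. So if you go on to prove \Cref{lem:lr-diff-new}, you would end up deriving that identity anyway.
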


\begin{proof}
    See \Cref{app:lem_rate_property_proof}.
\end{proof}

Now, we can upper-bound the error due to difference in $g_t$ as a difference in the likelihood ratio, as shown in the following lemma.

\begin{lemma}\label{lem:disc_err_domin_term}
    Fix $t \in [t_k, t_{k+1})$. Under \Cref{ass:score_bound,def:approx_spl_rate}, we have
    \[ \E_{x_{t_k} \sim \rev{q}_{t_k}} \brc{g_t(x_{t_k}) - g_{t_k}(x_{t_k})} \lesssim (1 + \log (M S \delta^{-1}) )
        \E_{x_{t_k} \sim \rev{q}_{t_k}} \sum_{y \neq x_{t_k} } \abs{\rev{R}_{t}(x_{t_k},y) - \rev{R}_{t_k}(x_{t_k},y)}.  \]
\end{lemma}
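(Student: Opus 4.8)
\textbf{Proof proposal for Lemma~\ref{lem:disc_err_domin_term}.}

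The plan is to bound the pointwise difference $g_t(x_{t_k}) - g_{t_k}(x_{t_k})$ by exploiting the special structure of the two rate matrices that enter $g_t$. First I would recall that, for a fixed state $z := x_{t_k}$, we have $g_t(z) = \sum_{y \neq z} \hat R_t(z,y) - \rev R_t(z,y) + \rev R_t(z,y)\log\frac{\rev R_t(z,y)}{\hat R_t(z,y)}$. The crucial observation, coming from \Cref{def:approx_spl_rate}, is that the sampling rate is \emph{frozen} on $[t_k,t_{k+1})$: $\hat R_t(z,\cdot) = \hat R_{t_k}(z,\cdot)$. Therefore in the difference $g_t(z) - g_{t_k}(z)$ all the $\hat R$-only terms ($\sum_y \hat R_t(z,y)$, which equals $0$ anyway, and the $\log\hat R_t(z,y)$ inside the last term) are identical at $t$ and $t_k$ and cancel. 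What remains is
\[
g_t(z) - g_{t_k}(z) = \sum_{y\neq z}\Big[ \big(\rev R_{t_k}(z,y) - \rev R_t(z,y)\big) + \big(\rev R_t(z,y) - \rev R_{t_k}(z,y)\big)\log\hat R_{t_k}(z,y) + \rev R_t(z,y)\log\rev R_t(z,y) - \rev R_{t_k}(z,y)\log\rev R_{t_k}(z,y)\Big].
\]
So the whole expression is controlled by (a) $\sum_{y\neq z}|\rev R_t(z,y) - \rev R_{t_k}(z,y)|$ times a bound on $|\log\hat R_{t_k}(z,y)|$, plus (b) the difference of the $x\log x$-type terms $\rev R_t(z,y)\log\rev R_t(z,y) - \rev R_{t_k}(z,y)\log\rev R_{t_k}(z,y)$.

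Next I would handle the logarithmic prefactors. By \Cref{ass:score_bound} and the parameterization $\hat R_{t_k}(z,y) = R_{T-t_k}(y,z)\,s_{T-t_k}(y,z)$, on the event $\Ham{z,y}=1$ we have $R_{T-t_k}(y,z) = \beta\,R_{\text{base}}(\cdot,\cdot) = \Theta(1)$ (a fixed constant from the base rate matrix) and $s_{T-t_k}\in[M^{-1},M]$, so $|\log\hat R_{t_k}(z,y)| \lesssim \log M + O(1)$. For the reverse-rate magnitudes $\rev R_t(z,y) = R_{T-t}(z,y)\,q_{T-t}(y)/q_{T-t}(z)$, the forward rate is again $\Theta(1)$ and the likelihood ratio is controlled by \Cref{lem:rate-property}: since the reverse process runs up to $t = T-\delta$, the relevant forward time is $T - t \ge \delta$, giving $q_{T-t}(y)/q_{T-t}(z) \lesssim S\max\{1,(T-t)^{-1}\} \lesssim S\,\delta^{-1}$; hence $\rev R_t(z,y) \lesssim S\delta^{-1}$ and $|\log\rev R_t(z,y)| \lesssim \log(S\delta^{-1}) + O(1)$. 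For term (b) I would use the elementary inequality $|a\log a - b\log b| \le |a-b|\,\big(1 + \max\{|\log a|,|\log b|\}\big)$ (valid for $a,b>0$, e.g. by the mean value theorem applied to $u\mapsto u\log u$ on the interval between $a$ and $b$), which converts the $x\log x$-difference into $|\rev R_t(z,y) - \rev R_{t_k}(z,y)|$ times a factor $\lesssim 1 + \log(S\delta^{-1})$. Collecting (a) and (b), every term is bounded by $|\rev R_t(z,y) - \rev R_{t_k}(z,y)|$ times a common factor $\lesssim 1 + \log(MS\delta^{-1})$; summing over $y\neq z$ and taking $\E_{x_{t_k}\sim\rev q_{t_k}}$ yields exactly the claimed bound.

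The main obstacle I anticipate is not the algebra of the cancellation but the careful bookkeeping of the logarithmic factors near the edges of the interval: one must be sure that all the quantities $\hat R_{t_k}(z,y)$ and $\rev R_t(z,y)$ that appear inside logarithms are bounded \emph{both above and below} away from $0$ uniformly in $k$ and in the relevant $y$ (those with $\Ham{z,y}=1$), which is where \Cref{ass:score_bound} (lower bound $M^{-1}$) and the constant-noise-schedule structure of $R_{\text{base}}$ are essential — without the two-sided score bound the $\log$ terms could blow up. A secondary point to be careful about is that the bound must be stated in expectation over $x_{t_k}\sim\rev q_{t_k}$ only, so I should avoid taking worst-case bounds over $y$ that would reintroduce an extra factor of $S$; fortunately the $\log$ prefactors above are genuinely uniform in $y$ (they depend only on $M$, $S$, $\delta$), so pulling them out of the sum over $y$ and out of the expectation is legitimate, and the residual $\sum_{y\neq z}|\rev R_t(z,y) - \rev R_{t_k}(z,y)|$ is left intact to be estimated (sharply, in expectation) in the subsequent \Cref{lem:lr-diff-new}.
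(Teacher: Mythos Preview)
Your proposal is correct and follows essentially the same route as the paper: use \Cref{def:approx_spl_rate} to freeze $\hat R_t(z,\cdot) = \hat R_{t_k}(z,\cdot)$ so that only $\rev R$-differences survive, bound $|\log\hat R_{t_k}|$ via \Cref{ass:score_bound}, and handle the $u\log u$ difference by the mean-value theorem combined with two-sided bounds on $\rev R_t$ coming from \Cref{lem:rate-property}. Two small corrections: the off-diagonal entry $R_{\text{base}}(y,z) = 1/S$, not $\Theta(1)$, so in fact $\hat R_{t_k}(z,y)\in[M^{-1}S^{-1},MS^{-1}]$ and $|\log\hat R_{t_k}(z,y)|\le\log(MS)$ (this only helps, since the target prefactor already contains $\log S$); and for the lower bound on $\rev R_t$ that you rightly flag as necessary, the paper obtains it by applying \Cref{lem:rate-property} with $x,y$ swapped, yielding $\rev R_t(z,y)\gtrsim \delta/S^2$ and hence $|\log\rev R^*|\lesssim \log(S\delta^{-1})$.
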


\begin{proof}
    See \Cref{app:lem_disc_err_domin_term_proof}.
\end{proof}

Now, for any $x_{t_k} \in [S]^d$, the sum difference in the reverse rate can be further calculated using \eqref{eq:def_rev_proc} as
\begin{align} \label{eq:thm2_disc_err_main}
    &\E_{x_{t_k} \sim \rev{q}_{t_k}} \sum_{y \neq x_{t_k}} \abs{\rev{R}_{t_k}(x_{t_k},y) - \rev{R}_{t}(x_{t_k},y) } \nonumber\\
    &= \E_{x_{t_k} \sim \rev{q}_{t_k}} \sum_{y \neq x_{t_k}} \abs{ \frac{q_{T-t_k}(y)}{q_{T-t_k}(x_{t_k})} R_{T-t_k}(y,x_{t_k}) - \frac{q_{T-t}(y)}{q_{T-t}(x_{t_k})} R_{T-t}(y,x_{t_k}) } \nonumber\\
    &= \E_{x_{t_k} \sim \rev{q}_{t_k}} \sum_{\substack{y \neq x_{t_k} \\ \Ham{y,x_{t_k}} = 1}} \abs{ \frac{q_{T-t_k}(y)}{q_{T-t_k}(x_{t_k})} - \frac{q_{T-t}(y)}{q_{T-t}(x_{t_k})}} R_{T-t_k}(y,x_{t_k}) \nonumber\\
\end{align}
where the last line follows because $R_{T-t_k}(y,x_{t_k}) = R_{T-t}(y,x_{t_k})$ whenever $y \neq x_{t_k}$ since $\beta_t \equiv 1$.

Due to continuity of this ratio (i.e., the concrete score), one common way is to upper bound its derivative uniformly for every fixed $x$ and $y$ such that $\Ham{x,y} = 1$. Indeed, this is the approach taken by \cite[Proposition~C.2]{ren2025stoc-int} (cf. \cite[Proposition~6]{campbell2022discrete}). For reasons of comparison, the following upper-bound adopts the derivative-based method as in \cite[Proposition~C.2]{ren2025stoc-int}.

\begin{lemma}[Following the idea in {\cite{ren2025stoc-int}}]
\label{lem:lr-diff-ren}
    Fix $s < t$ such that $t-s$ is small. Fix $x$ and $y$ such that $\Ham{x,y} = 1$. Given the forward process in \eqref{eq:def_forward} with a rate given in \eqref{eq:def_forward_rate}, we have
    \[ \abs{ \frac{q_{t}(y)}{q_{t}(x)} - \frac{q_{s}(y)}{q_{s}(x)}} \lesssim d S^2 \max\cbrc{1, s^{-2} } (t-s). \]
    This further implies that
    \begin{multline*}
    \E_{x_{t_k} \sim \rev{q}_{t_k}} \sum_{\substack{y \neq x_{t_k} \\ \Ham{y,x_{t_k}} = 1}} \abs{ \frac{q_{T-t_k}(y)}{q_{T-t_k}(x_{t_k})} - \frac{q_{T-t}(y)}{q_{T-t}(x_{t_k})}} R_{T-t_k}(y,x_{t_k}) \\
    \lesssim d^2 S^2 \max\cbrc{1, (T-t_{k+1})^{-2} } (t-t_k).
\end{multline*}
\end{lemma}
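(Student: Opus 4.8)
The plan is to prove the pointwise-in-time estimate by differentiating the likelihood ratio, controlling its logarithmic derivative via the Kolmogorov forward equation and \Cref{lem:rate-property}, and then integrating in time; the summation statement then follows by counting neighbors and using that $R_{T-t_k}(y,x_{t_k}) = 1/S$ on the Hamming-$1$ sphere.

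\textbf{Step 1 (logarithmic derivative of the marginals).} First I would use the Kolmogorov forward equation $\frac{\d}{\d t}q_t(z) = \sum_{w} q_t(w) R_t(w,z)$ together with the structure of $R_t$ coming from \eqref{eq:def_forward_rate} with the chosen $R_{\text{base}}$ and $\beta_t\equiv 1$: the diagonal entry is $R_t(y,y) = -\frac{S-1}{S}d$, the off-diagonal entry is $R_t(w,y) = \frac1S$ whenever $\Ham{w,y}=1$, and it is $0$ otherwise. This gives
\[
\frac{\frac{\d}{\d t}q_t(y)}{q_t(y)} = -\frac{S-1}{S}d + \frac1S\sum_{w:\,\Ham{w,y}=1}\frac{q_t(w)}{q_t(y)}.
\]
Applying \Cref{lem:rate-property} to each of the $d(S-1)$ ratios $q_t(w)/q_t(y)$ (each $\lesssim S\max\cbrc{1,t^{-1}}$), the second term is $\lesssim d S\max\cbrc{1,t^{-1}}$, which dominates the first; hence $\bigl|\frac{\d}{\d t}q_t(y)/q_t(y)\bigr|\lesssim dS\max\cbrc{1,t^{-1}}$, and the same bound holds with $y$ replaced by $x$.

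\textbf{Step 2 (derivative of the ratio and integration).} Next I would write $\frac{\d}{\d t}\frac{q_t(y)}{q_t(x)} = \frac{q_t(y)}{q_t(x)}\bigl(\frac{\frac{\d}{\d t}q_t(y)}{q_t(y)} - \frac{\frac{\d}{\d t}q_t(x)}{q_t(x)}\bigr)$, bound $\frac{q_t(y)}{q_t(x)}\lesssim S\max\cbrc{1,t^{-1}}$ again by \Cref{lem:rate-property}, and combine with Step 1 to get $\bigl|\frac{\d}{\d t}\frac{q_t(y)}{q_t(x)}\bigr| \lesssim dS^2\max\cbrc{1,t^{-2}}$. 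Integrating over $[s,t]$ and using that $u\mapsto\max\cbrc{1,u^{-2}}$ is nonincreasing yields $\int_s^t\max\cbrc{1,u^{-2}}\,\d u \le (t-s)\max\cbrc{1,s^{-2}}$, which gives the first claim $\bigl|\frac{q_t(y)}{q_t(x)} - \frac{q_s(y)}{q_s(x)}\bigr|\lesssim dS^2\max\cbrc{1,s^{-2}}(t-s)$.

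\textbf{Step 3 (the implied expectation bound).} Finally, for $t\in[t_k,t_{k+1})$ I would apply the first claim with the time pair $(s,t)\leftarrow(T-t,\,T-t_k)$, so the gap is $(T-t_k)-(T-t)=t-t_k$ and, since $T-t > T-t_{k+1}$, $\max\cbrc{1,(T-t)^{-2}}\le\max\cbrc{1,(T-t_{k+1})^{-2}}$; the resulting bound $dS^2\max\cbrc{1,(T-t_{k+1})^{-2}}(t-t_k)$ is uniform in $x_{t_k}$ and $y$. There are $d(S-1)$ states $y$ with $\Ham{y,x_{t_k}}=1$ and each carries $R_{T-t_k}(y,x_{t_k})=\frac1S$, so the inner sum is $\lesssim d(S-1)\cdot\frac1S\cdot dS^2\max\cbrc{1,(T-t_{k+1})^{-2}}(t-t_k)\lesssim d^2S^2\max\cbrc{1,(T-t_{k+1})^{-2}}(t-t_k)$, and taking $\E_{x_{t_k}\sim\rev q_{t_k}}$ leaves this unchanged. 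The main obstacle is Step 1: obtaining the clean $dS\max\cbrc{1,t^{-1}}$ control of the logarithmic derivative requires correctly invoking the CTMC structure and \Cref{lem:rate-property} while carefully tracking the $d$, $S$, and $\max\cbrc{1,t^{-1}}$ factors; the remaining steps are routine integration and neighbor counting.
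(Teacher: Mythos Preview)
Your proposal is correct and follows essentially the same approach as the paper's proof: differentiate the ratio $q_t(y)/q_t(x)$, expand via the Kolmogorov forward equation, bound the resulting score ratios pointwise using \Cref{lem:rate-property}, and then integrate. The only cosmetic difference is that you first factor out $q_t(y)/q_t(x)$ and bound the logarithmic derivative, whereas the paper applies the quotient rule directly and bounds $\frac{\partial_t q_t(y)}{q_t(x)}$ and $\frac{q_t(y)\,\partial_t q_t(x)}{q_t(x)^2}$ separately; the two decompositions are algebraically identical and yield the same $dS^2\max\{1,t^{-2}\}$ derivative bound.
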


\begin{proof}
    See \Cref{app:lem-lr-diff-ren-proof}.
\end{proof}

Then, we present our novel approach below that directly provides an upper bound in expectation. This will finally result in a tighter upper bound with linear $S$ dependency.

\begin{lemma} \label{lem:lr-diff-new}
    Fix $s < t$ such that $t-s$ is small. Given the forward process in \eqref{eq:def_forward} with a rate given in \eqref{eq:def_forward_rate}, we have
    \begin{align*}
        \E_{x_t \sim q_t} \sum_{\substack{y \neq x_t \\ \Ham{y,x_t} = 1}} \abs{ \frac{q_{t}(y)}{q_{t}(x_t)} - \frac{q_{s}(y)}{q_{s}(x_t)}} & R_t(y,x_t) \\
        &\lesssim d S \max\{1, s^{-2}\} (t-s) + d^2 S \max\{1, s^{-1}\} (t-s) \\
        &\lesssim d^2 S \max\{1, s^{-2}\} (t-s).
    \end{align*}
\end{lemma}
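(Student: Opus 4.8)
The plan is to avoid differentiating the concrete score $q_u(y)/q_u(x)$ directly through the Kolmogorov equation (which would force one to control a mismatch between the outer law $q_t$ and the intermediate laws $q_u$, $u\in[s,t]$) and instead work from an \emph{explicit} formula for it. Since the forward process factorizes over tokens and $e^{t R_{\text{base}}} = e^{-t} I_S + \frac{1-e^{-t}}{S}\bm 1_S\bm 1_S^\T$, one has $q_{t|0}(x|x_0) = \gamma_t^{\,d}\,(1+c_t)^{\,d-\Ham{x,x_0}}$ with $\gamma_t := \tfrac{1-e^{-t}}{S}$ and $c_t := \tfrac{S e^{-t}}{1-e^{-t}}$. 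Summing against $q_0$ and dividing, for $y = x^{-i}\oplus_i b$ with $b\neq x^i$ the factor $\gamma_t^{\,d}$ cancels and a short computation gives
\[
\frac{q_t(y)}{q_t(x)} \;=\; 1 + c_t\,\mu^{(t)}_b(x) \;-\; \frac{c_t}{1+c_t}\,\mu^{(t)}_{x^i}(x), \qquad \mu^{(t)}_a(x) := \Pr\!\left[x_0^i = a \,\middle|\, x_t = x\right].
\]
I would then write $\frac{q_t(y)}{q_t(x)}-\frac{q_s(y)}{q_s(x)}$ as a \emph{coefficient} part (the differences $c_t-c_s$ and $\tfrac{c_t}{1+c_t}-\tfrac{c_s}{1+c_s}$, with the posterior marginals frozen) plus a \emph{posterior} part (the differences $\mu^{(t)}_a(x)-\mu^{(s)}_a(x)$, with the coefficients frozen), and bound each after summing over the $d(S-1)$ neighbors $y$ of $x$ and multiplying by $R_t(y,x)=\tfrac1S$.

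For the coefficient part, the key estimate is $|c_t - c_s| \le \int_s^t \tfrac{S e^{-u}}{(1-e^{-u})^2}\,\d u \lesssim S\max\{1,s^{-2}\}(t-s)$, together with the easy $\bigl|\tfrac{c_t}{1+c_t}-\tfrac{c_s}{1+c_s}\bigr| \lesssim (t-s)$. After the sum over neighbors and the factor $\tfrac1S$, the posterior marginals enter only through $\sum_{i=1}^d \mu^{(t)}_{x^i}(x) \le d$ and $\sum_{i=1}^d\sum_{b\neq x^i}\mu^{(t)}_b(x) \le d$ (each $\sum_a \mu^{(t)}_a(x) = 1$), and likewise at time $s$; hence this part is $\lesssim d S\max\{1,s^{-2}\}(t-s)$ \emph{uniformly in $x$}, so the outer expectation over $q_t$ costs nothing.

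For the posterior part, I would compute by the quotient rule that, writing $n(x_0) := d - \Ham{x,x_0} = \sum_{j=1}^d \ind{x^j = x_0^j}$,
\[
\partial_u \mu^{(u)}_a(x) \;=\; \frac{\partial_u c_u}{1+c_u}\;\Cov_{\,x_0\sim q_{0|u}(\cdot|x)}\!\bigl(\,n(x_0),\ \ind{x_0^i = a}\,\bigr),
\]
and then use $\bigl|\tfrac{\partial_u c_u}{1+c_u}\bigr| \lesssim \max\{1,u^{-1}\}$ together with $\bigl|\Cov(n(x_0),\ind{x_0^i=a})\bigr| \le \sum_{j=1}^d \bigl|\Cov(\ind{x^j=x_0^j},\ind{x_0^i=a})\bigr| \le \tfrac{d}{4}$. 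Integrating over $u\in[s,t]$ yields $|\mu^{(t)}_a(x)-\mu^{(s)}_a(x)| \lesssim d\max\{1,s^{-1}\}(t-s)$, again uniformly in $x$; multiplying by the frozen coefficients ($c_t \lesssim S\max\{1,s^{-1}\}$ on the $b$-marginal, $\tfrac{c_t}{1+c_t}\le 1$ on the $x^i$-marginal), summing over the $d(S-1)$ neighbors and weighting by $\tfrac1S$ gives $\lesssim d^2 S\max\{1,s^{-1}\}(t-s)$. Adding the two parts gives the two displayed terms, and $\max\{1,s^{-1}\}\le\max\{1,s^{-2}\}$ collapses them to $d^2 S\max\{1,s^{-2}\}(t-s)$.

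The main work is the posterior-sensitivity estimate: one has to (i) derive the covariance identity for $\partial_u\mu^{(u)}_a(x)$ cleanly and (ii) bound it by a quantity that is uniform in $x$ and, crucially, free of an extra factor of $S$. It is exactly this uniform-in-$x$ control — rather than a worst-case pointwise bound on the concrete score, as in \Cref{lem:lr-diff-ren} — that simultaneously removes any measure mismatch between $q_t$ and the intermediate $q_u$ and cuts the vocabulary-size dependence from $S^2$ down to $S$.
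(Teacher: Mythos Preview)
Your route is genuinely different from the paper's. The paper writes $\frac{q_u(y)}{q_u(x)}=\E_{x_0\sim q_{0|u}(\cdot|x)}\bigl[q^i_{u|0}(y^i|x_0^i)/q^i_{u|0}(x^i|x_0^i)\bigr]$ and splits the time difference into (i) a time-difference of the integrand at fixed $x_0$ and (ii) a change-of-posterior term $\E_{q_{0|t}}f_s-\E_{q_{0|s}}f_s$. Part~(ii) is then handled by a Bayes--Kolmogorov computation that leaves a factor $1/q_t(x_t)$ which is \emph{only} controlled after taking the outer expectation $\E_{x_t\sim q_t}$ --- this cancellation is precisely where the paper's saving over \Cref{lem:lr-diff-ren} occurs. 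Your explicit formula $\frac{q_u(y)}{q_u(x)}=1+c_u\,\mu^{(u)}_b-\tfrac{c_u}{1+c_u}\,\mu^{(u)}_{x^i}$ together with the exponential-family identity $\partial_u\mu^{(u)}_a=\tfrac{\partial_u c_u}{1+c_u}\,\Cov_{q_{0|u}(\cdot|x)}\bigl(n(x_0),\ind{x_0^i=a}\bigr)$ instead yields bounds that are uniform in $x$; the outer expectation then plays no role, which is conceptually cleaner and in fact slightly stronger than what the paper obtains.

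One arithmetic slip to flag: following your own steps, the posterior part contributes $d^2S\max\{1,s^{-2}\}(t-s)$, not $d^2S\max\{1,s^{-1}\}(t-s)$. Indeed, the frozen coefficient obeys $c_t\lesssim S\max\{1,s^{-1}\}$ and your posterior sensitivity gives $|\mu^{(t)}_b-\mu^{(s)}_b|\lesssim d\max\{1,s^{-1}\}(t-s)$; after the neighbor sum and the $1/S$ weight these two $\max\{1,s^{-1}\}$ factors multiply to $\max\{1,s^{-2}\}$. So you correctly recover the final bound $d^2S\max\{1,s^{-2}\}(t-s)$ of the lemma, but not the sharper intermediate two-term display $dS\max\{1,s^{-2}\}+d^2S\max\{1,s^{-1}\}$ exactly as stated (the paper gets the $s^{-1}$ on the second term because its change-of-posterior estimate, after the $\E_{x_t}$-cancellation, is $\lesssim d(t-s)$ with no $s^{-1}$ at all).
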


\begin{proof}
    See \Cref{app:lem-lr-diff-new-proof}.
\end{proof}

Thus, considering the two terms in \eqref{eq:thm2_disc_err_main}, we have the following bound for the expected difference in the reverse rate matrix: 
\begin{equation*}
    \E_{x_{t_k} \sim \rev{q}_{t_k}} \sum_{y \neq x_{t_k}} \abs{\rev{R}_{t_k}(x_{t_k},y) - \rev{R}_{t}(x_{t_k},y) } \lesssim d^2 S \max\{1, (T-t_{k+1})^{-2}\} (t-t_k). 
\end{equation*}
Collecting the results from Steps 1--3, we would arrive at
\begin{align*}
    &\KL{\rev{q}_{t_N}}{p_{t_N}} \nonumber \\
    &\lesssim d (\log S) e^{-T} + \eps_{\text{score}} + d^2 S (1 + \log(M S \delta^{-1})) \sum_{k=0}^{N-1} \max\{1, (T-t_{k+1})^{-2}\} (t_{k+1}-t_k)^2.
\end{align*}

Finally, to determine the overall parameter dependencies in the above summation, we can consider the particular step-size: $t_{k+1} - t_k = \kappa \min\cbrc{1, T-t_k}$ and invoke \cite[Lemma~18]{chen2023improved}, which shows that
\[ \sum_{k=0}^{N-1} \max\{1, (T-t_{k+1})^{-2}\} (t_{k+1}-t_k)^2 \lesssim \kappa (T+\log \delta^{-1}). \]
Also, from the last part of \cite[Theorem~6]{chen2024uniformization}, the perturbation due to early-stopping is
\[ \TV{q_0}{q_\delta} \lesssim d \delta,\quad \text{as}~\delta \to 0. \]
The proof for \Cref{thm:conv_disc_diff} is complete.

\section{Proof of \texorpdfstring{\Cref{thm:euler_tweedie}}{Theorem 3}}
\label{app:cor_euler_tweedie_proof}

The proof of \Cref{thm:euler_tweedie} consists of three parts. First, we construct a non-trivial approximate discrete sampler, the truncated $\tau$-leaping algorithm, with explicit intermediate rate that allows for categorical per-step sampling. Then, we show that our truncated $\tau$-leaping is asymptotically equivalent to both the Euler method and Tweedie $\tau$-leaping in terms of the categorical sampling probabilities. Finally, we show that our proof of \Cref{thm:conv_disc_diff} is applicable even for such asymptotically equivalent samplers, which is further evidence of the generality of our approach.

\subsection{Step 1: Constructing an approximate sampler}

To start, we propose an approximate discrete sampler that modifies the vanilla $\tau$-leaping algorithm and enables categorical sampling. We call this sampler the \textit{truncated $\tau$-leaping} algorithm (see \Cref{app:practical_spls}). The intuition is that we only allow the first state change (a.k.a. truncated) for each dimension according to \eqref{eq:def_tau_leap_rate}. 
This intuition is made solid by the following proposition, which shows explicitly the rate of truncated $\tau$-leaping.

\begin{lemma} \label{prop:se_tau_leap_rate}
    Fix $k \in \{0,\dots,N-1\}$ and $x_{t_k} \in [S]^d$. The truncated $\tau$-leaping algorithm corresponds to the following rate matrix: $\forall t \in [t_k, t_{k+1})$ and $\forall (x,y): x \neq y$,
    \begin{equation} \label{eq:def_truncated_tau_leap_rate}
        \hat{R}^{\text{TTL}}_t(x,y) := \hat{R}_{t_k}(x_{t_k}, y-x+x_{t_k}) \ind{\mathrm{nzind}(y-x) \in \mathrm{zeros}(x-x_{t_k})},
    \end{equation}
    where $\mathrm{nzind}(y-x)$ is the only index $i^*$ such that $x^{i^*} \neq y^{i^*}$, and $\mathrm{zeros}(v) := \{i: v^i = 0\}$ is the set of indices having zeros in a vector $v \in [S]^d$. 
\end{lemma}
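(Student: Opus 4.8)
The truncated $\tau$-leaping algorithm (\Cref{alg:truncated_tau_leap}) produces $x_{t_{k+1}}$ from $x_{t_k}$ by drawing each coordinate $x^i_{t_{k+1}}$ \emph{independently} from the categorical law in \eqref{eq:def_truncated_tau_leap}. My plan is to recognize this per-coordinate law as the time-$(t_{k+1}-t_k)$ transition kernel of a CTMC on $[S]$ driven by the constant, ``absorption-type'' rate matrix $\hat{R}^i_k$ of \eqref{eq:def_truncated_tau_leap_token_rate} (whose only nonzero row is the one indexed by $x^i_{t_k}$), then to use independence across the $d$ coordinates to glue these single-coordinate chains into one CTMC on $[S]^d$, and finally to check that the generator of the glued chain is exactly $\hat{R}^{\text{TTL}}_t$ from \eqref{eq:def_truncated_tau_leap_rate}. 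Since a CTMC whose rate is constant on $[t_k,t_{k+1})$ (and depends only on the left-endpoint value $x_{t_k}$) is pinned down by its transition kernel over that interval --- the same reasoning \cite[Appendix~B.5]{campbell2022discrete} uses for vanilla $\tau$-leaping --- this suffices.

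\textbf{Step 1 (single-coordinate kernel).} Fix $i$ and abbreviate $z := x^i_{t_k}$, $R := \hat{R}^i_k$, $\Delta := t_{k+1}-t_k$, and $\lambda := -R(z,z) = \sum_{a\neq z} R(z,a)$. The key structural fact from \eqref{eq:def_truncated_tau_leap_token_rate} is that every row of $R$ other than row $z$ is identically zero; I would use this to show $(R^2)(z,b) = R(z,z)R(z,b) = -\lambda R(z,b)$ and $(R^2)(z',\cdot)=0$ for $z'\neq z$, hence $R^2 = -\lambda R$ and inductively $R^n = (-\lambda)^{n-1} R$ for $n\geq 1$, so that
\begin{equation*}
    e^{\Delta R} = I + \frac{1 - e^{-\lambda\Delta}}{\lambda}\, R
\end{equation*}
(read as $I + \Delta R$ when $\lambda = 0$). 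Reading off the $z$-th row recovers $[e^{\Delta R}](z,z) = \exp(R(z,z)\Delta)$ and $[e^{\Delta R}](z,a) = \frac{R(z,a)}{-R(z,z)}\,(1 - \exp(R(z,z)\Delta))$ for $a\neq z$, which are precisely the weights in \eqref{eq:def_truncated_tau_leap}. (Under \Cref{ass:score_bound} and the choice of $R_{\text{base}}$ one always has $\lambda>0$, so the degenerate case never actually arises.)

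\textbf{Step 2 (assemble the coordinates and identify the rate).} Independence of the $d$ draws makes the one-step law of truncated $\tau$-leaping equal to $\prod_{i=1}^d [e^{\Delta \hat{R}^i_k}](x^i_{t_k},\cdot)$, which by the standard Kronecker-sum/product identity for matrix exponentials is the time-$\Delta$ kernel of the CTMC on $[S]^d$ with generator $G := \bigoplus_{i=1}^d \hat{R}^i_k = \sum_{i=1}^d I\otimes\cdots\otimes\hat{R}^i_k\otimes\cdots\otimes I$. I would then unwind $G(x,y)$ for $x\neq y$: the $i$-th summand evaluates to $\hat{R}^i_k(x^i,y^i)\prod_{j\neq i}\ind{x^j=y^j}$, so $G(x,y)=0$ when $\Ham{x,y}\geq 2$, and when $\Ham{x,y}=1$ with unique differing index $i^*$ only that summand survives, giving $G(x,y)=\hat{R}^{i^*}_k(x^{i^*},y^{i^*}) = \hat{R}_{t_k}(x_{t_k}, x_{t_k}^{-i^*}\oplus_{i^*} y^{i^*})\,\ind{x^{i^*}=x^{i^*}_{t_k}}$ by \eqref{eq:def_truncated_tau_leap_token_rate}. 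It remains to match notation: since $i^* = \mathrm{nzind}(y-x)$, the event $\{x^{i^*}=x^{i^*}_{t_k}\}$ is exactly $\{\mathrm{nzind}(y-x)\in\mathrm{zeros}(x-x_{t_k})\}$, and on this event the vectors $x_{t_k}^{-i^*}\oplus_{i^*} y^{i^*}$ and $y - x + x_{t_k}$ agree coordinatewise (both equal $x_{t_k}$ off $i^*$ and equal $y^{i^*}$ at $i^*$, using $x^{i^*}=x^{i^*}_{t_k}$), so $G(x,y)=\hat{R}^{\text{TTL}}_t(x,y)$ for all $x\neq y$ and $t\in[t_k,t_{k+1})$.

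\textbf{Anticipated obstacle.} The only genuinely computational point is the closed-form matrix exponential in Step 1 for the degenerate absorption-type generator $\hat{R}^i_k$ (plus the harmless $\lambda=0$ edge case); everything else, including the coordinate-bookkeeping identity $y-x+x_{t_k} = x_{t_k}^{-i^*}\oplus_{i^*} y^{i^*}$ and the passage from ``one-step kernels'' to ``piecewise-constant-rate CTMC'', is definitional unwinding.
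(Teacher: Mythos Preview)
Your proposal is correct and follows essentially the same approach as the paper: decompose into $d$ independent single-coordinate CTMCs, identify the transition kernel of each as that of the absorption-type rate $\hat{R}^i_k$, and match the assembled generator to $\hat{R}^{\text{TTL}}_t$. The only cosmetic differences are that the paper argues in the reverse direction (rate $\to$ kernel, citing \cite{norris1997markov-chain} for the exponential) and verifies the decomposition by a short case analysis on $x$ versus $x_{t_k}$, whereas you go kernel $\to$ rate with an explicit $R^2=-\lambda R$ computation and the Kronecker-sum identity.
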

\begin{proof}
    See \Cref{app:prop_se_tau_leap_rate_proof}.
\end{proof}

\subsection{Step 2: Establishing asymptotic equivalency}

Then, we show that both the Euler method and Tweedie $\tau$-leaping are (first-order) asymptotically equivalent to truncated $\tau$-leaping. This is summarized in the following proposition.

\begin{lemma} \label{lem:spls_equiv}
Fix $t_k \in [0,T-\delta]$, $x_{t_k} \in [S]^d$, and $i \in [d]$. With some abuse of notation, write $P_{\text{truncated}}^i(a)$, $P_{\text{tweedie}}^i(a)$, and $P_{\text{euler}}^i(a)$ for the conditional probability of $x_{t_{k+1}}^i = a$ given $x_{t_k}$ for these three algorithms, respectively. Then, as $t_{k+1}-t_k \to 0$,
\[ P_{\text{truncated}}^i(a) = P_{\text{euler}}^i(a) (1+o(1)) = P_{\text{tweedie}}^i(a) (1+o(1)) ,~\forall a \in [S]. \]
\end{lemma}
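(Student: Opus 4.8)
The plan is to write down the three per-token transition kernels as explicit functions of the step size $h := t_{k+1}-t_k$, Taylor-expand each to first order in $h$, and verify that the leading terms coincide. Throughout, $d$, $S$, $t_k$, $x_{t_k}$ and the score estimates are held fixed and only $h\to 0^+$, so all ``$O(h)$'' and ``$o(1)$'' are pointwise in $h$.

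Two preliminary facts do most of the work. First, the relevant off-diagonal rates are bounded: for $a\neq x_{t_k}^i$, combining \eqref{eq:def_est_reverse_rate}, \eqref{eq:def_forward_rate} and \eqref{eq:def_truncated_tau_leap_token_rate} with $R_{\text{base}}=\tfrac1S\bm{1}_S\bm{1}_S^\T-I_S$ gives $\hat R_k^i(x_{t_k}^i,a)=\hat R_{t_k}(x_{t_k},x_{t_k}^{-i}\oplus_i a)=\tfrac1S s_{T-t_k}(x_{t_k}^{-i}\oplus_i a,x_{t_k})\in[\tfrac1{MS},\tfrac MS]$ by \Cref{ass:score_bound}; hence $\lambda_i:=-\hat R_k^i(x_{t_k}^i,x_{t_k}^i)=\sum_{a\neq x_{t_k}^i}\hat R_k^i(x_{t_k}^i,a)\le M$, and for $h$ small all three displayed formulas are genuine (strictly positive) probability vectors. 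Second, since $R_{\text{base}}$ has eigenvalue $0$ on $\mathrm{span}(\bm{1}_S)$ and $-1$ on its orthogonal complement, for every $\theta\in\mbR$ one has the closed form $e^{\theta R_{\text{base}}}=e^{-\theta}I_S+(1-e^{-\theta})\tfrac1S\bm{1}_S\bm{1}_S^\T$.

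For truncated $\tau$-leaping versus the Euler method: for $a\neq x_{t_k}^i$, \eqref{eq:def_truncated_tau_leap} reads $P_{\text{truncated}}^i(a)=\tfrac{\hat R_k^i(x_{t_k}^i,a)}{\lambda_i}(1-e^{-\lambda_i h})$, and since $1-e^{-\lambda_i h}=\lambda_i h(1+O(h))$ this equals $\hat R_k^i(x_{t_k}^i,a)\,h\,(1+O(h))=P_{\text{euler}}^i(a)(1+O(h))$; for $a=x_{t_k}^i$ both ``stay'' probabilities are $1-\lambda_i h(1+O(h))\to1$, so their ratio is $1+O(h^2)$. For truncated $\tau$-leaping versus Tweedie $\tau$-leaping, I substitute the closed form with $\theta=\pm h$ into \eqref{eq:def_tweedie_tau_leap}: for $a\neq x_{t_k}^i$ one gets $[e^{hR_{\text{base}}}]^{a,x_{t_k}^i}=\tfrac{1-e^{-h}}{S}=\tfrac hS(1+O(h))$ and, writing $v_j:=s_{T-t_k}(x_{t_k}^{-i}\oplus_i j,x_{t_k})$, $[e^{-hR_{\text{base}}}]^{a:}v=e^{h}v_a+\tfrac{1-e^h}{S}\sum_{j\in[S]}v_j=v_a+O(h)$; multiplying, $P_{\text{tweedie}}^i(a)=\tfrac hS v_a(1+O(h))$, which matches $P_{\text{truncated}}^i(a)=\hat R_k^i(x_{t_k}^i,a)h(1+O(h))=\tfrac hS v_a(1+O(h))$ by the first preliminary fact. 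For $a=x_{t_k}^i$, the ``otherwise'' branch makes the stay probability $1-\sum_{a\neq x_{t_k}^i}P^i_{\bullet}(a)$, and since the deleted sums are $\big(\sum_{a\neq x_{t_k}^i}\tfrac hS v_a\big)(1+O(h))$ for all three samplers, they agree up to a $1+O(h)$ factor. Combining the two comparisons — a finite product of $(1+o(1))$ factors is again $1+o(1)$ — yields the stated equivalence.

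The main obstacle is the Tweedie computation: one must track the $1/S$ factors and the aggregate term $\sum_{j}v_j$ through the matrix-exponential algebra and confirm that it only enters the $O(h)$ remainder, not the leading $\tfrac hS v_a$ term. Everything else is first-order Taylor expansion, legitimate because \Cref{ass:score_bound} bounds all quantities involved uniformly in $h$.
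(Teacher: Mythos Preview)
The proposal is correct and follows essentially the same approach as the paper: both Taylor-expand each per-token kernel to first order in $h=t_{k+1}-t_k$ and match the leading terms, with the only cosmetic difference being that you use the exact eigendecomposition formula $e^{\theta R_{\text{base}}}=e^{-\theta}I_S+(1-e^{-\theta})\tfrac1S\bm{1}_S\bm{1}_S^\T$ for the Tweedie computation while the paper expands $e^{\pm\theta R_{\text{base}}}\approx I\pm\theta R_{\text{base}}$ directly. Your treatment of the ``stay'' probabilities via complement is also slightly more explicit than the paper's, but the substance is identical.
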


\begin{proof}
    See \Cref{app:lem_spls_equiv_proof}.
\end{proof}


In the context of \Cref{thm:euler_tweedie}, note that $t_{k+1}-t_k \leq \kappa$. Thus, \Cref{lem:spls_equiv} shows that the constructed truncated $\tau$-leaping is asymptotically equivalent to both the Euler method and Tweedie $\tau$-leaping when $\kappa \to 0$ (or equivalently, when $N \to \infty$).

\subsection{Step 3: Examining the error induced by asymptotically equivalent samplers}

Let $p_{t_{k+1}|t_k}$ denote the (exact) conditional probability from truncated $\tau$-leaping, and let $p'_{t_{k+1}|t_k}$ be any conditional probability such that $p'_{t_{k+1}|t_k}(\cdot|x_{t_k}) = p_{t_{k+1}|t_k}(\cdot|x_{t_k}) (1+o(1))$ for fixed $x_{t_k} \in [S]^d$, as $\kappa \to 0$. Previously, from \Cref{lem:spls_equiv}, we have shown that both Euler method and Tweedie $\tau$-leaping are special cases of such $p'_{t_{k+1}|t_k}$.

A useful property for such $p'_{t_{k+1}|t_k}$ is that, for fixed $x_{t_k} \in [S]^d$, as $t_{k+1} - t_k \to 0$,
\begin{align} \label{eq:cor_euler_tweedie_kl_helper}
    &\KL{\rev{q}_{t_{k+1}|t_k}(\cdot|x_{t_k})}{p'_{t_{k+1}|t_k}(\cdot|x_{t_k})} \nonumber \\
    &= \sum_{\Tilde{x} \in [S]^d} \rev{q}_{t_{k+1}|t_k}(\Tilde{x}|x_{t_k}) \log \frac{\rev{q}_{t_{k+1}|t_k}(\Tilde{x}|x_{t_k})}{p'_{t_{k+1}|t_k}(\Tilde{x}|x_{t_k})} \nonumber \\
    &= \sum_{\Tilde{x} \in [S]^d} \rev{q}_{t_{k+1}|t_k}(\Tilde{x}|x_{t_k}) \log \frac{\rev{q}_{t_{k+1}|t_k}(\Tilde{x}|x_{t_k})}{p_{t_{k+1}|t_k}(\Tilde{x}|x_{t_k})} + \sum_{\Tilde{x} \in [S]^d} \rev{q}_{t_{k+1}|t_k}(\Tilde{x}|x_{t_k}) \log \frac{1}{1+o(1)} \nonumber\\
    &= \sum_{\Tilde{x} \in [S]^d} \rev{q}_{t_{k+1}|t_k}(\Tilde{x}|x_{t_k}) \log \frac{\rev{q}_{t_{k+1}|t_k}(\Tilde{x}|x_{t_k})}{p_{t_{k+1}|t_k}(\Tilde{x}|x_{t_k})} - \sum_{\Tilde{x} \in [S]^d} \rev{q}_{t_{k+1}|t_k}(\Tilde{x}|x_{t_k}) o(1) \nonumber\\
    &= \sum_{\Tilde{x} \in [S]^d} \rev{q}_{t_{k+1}|t_k}(\Tilde{x}|x_{t_k}) \log \frac{\rev{q}_{t_{k+1}|t_k}(\Tilde{x}|x_{t_k})}{p_{t_{k+1}|t_k}(\Tilde{x}|x_{t_k})} + o(1).
\end{align}

Now we consider the decomposition in \eqref{eq:thm2_step0}. We have
\begin{align*}
    &\KL{\rev{q}_{t_N}}{p'_{t_N}} \\
    &\leq \KL{\rev{q}_0}{p_0} + \sum_{k=0}^{N-1} \E_{x_{t_k} \sim \rev{q}_k} \sbrc{ \KL{\rev{q}_{t_{k+1}|t_k}(\cdot|x_{t_k})}{p'_{t_{k+1}|t_k}(\cdot|x_{t_k})} } \\
    &\lesssim \KL{\rev{q}_0}{p_0} + \sum_{k=0}^{N-1} \E_{x_{t_k} \sim \rev{q}_k} \sbrc{ \KL{\rev{q}_{t_{k+1}|t_k}(\cdot|x_{t_k})}{p_{t_{k+1}|t_k}(\cdot|x_{t_k})} }
\end{align*}
where the last line follows from \eqref{eq:cor_euler_tweedie_kl_helper}. Thus we have recovered the result in \eqref{eq:thm2_step0}. 

\subsection{Step 4: Examining the rate of truncated \texorpdfstring{$\tau$}{tau}-leaping}

Now, we can verify that the rate matrix in \eqref{eq:def_truncated_tau_leap_rate} satisfies \Cref{def:approx_spl_rate}, just as vanilla $\tau$-leaping. Thus, the rate of \Cref{thm:conv_disc_diff} still holds if we substitute $\tau$-leaping with truncated $\tau$-leaping. The proof of \Cref{thm:euler_tweedie} is now complete.

\section{Proofs of Auxiliary Lemmas}
\label{app:aux}

\subsection{Proof of \texorpdfstring{\Cref{lem:disc_err_vanish_term}}{Lemma 2}}
\label{app:lem_disc_err_vanish_term_proof}

With the forward process in \eqref{eq:def_forward}, we have
\begin{align*}
    &\E_{\substack{x_t \sim \rev{q}_t \\ x_{t_k} \sim \rev{q}_{t_k}}} \sbrc{g_t(x_t) - g_t(x_{t_k})}\\
    &= \E_{x_t \sim \rev{q}_t} \sbrc{g_t(x_t) - \sum_{x_{t_k} \in [S]^d} q_{T-t_k|T-t}(x_{t_k}|x_t) g_t(x_{t_k}) } \\
    &= \E_{x_t \sim \rev{q}_t} \sbrc{g_t(x_t) - \sum_{x_{t_k} \in [S]^d} \brc{\ind{x_{t_k}=x_t} + R_t(x_t,x_{t_k}) (t-t_k)} g_t(x_{t_k}) } + o(t-t_k) \\
    &= (t-t_k) \E_{x_t \sim \rev{q}_t} \sbrc{ - \sum_{x_{t_k} \in [S]^d} R_t(x_t,x_{t_k}) g_t(x_{t_k}) } + o(t-t_k) \\
    &\stackrel{(i)}{\leq} (t-t_k) \E_{x_t \sim \rev{q}_t} \sbrc{ (- R_t(x_t,x_t)) g_t(x_t) } + o(t-t_k) \\
    &= (t-t_k) \frac{S-1}{S} d \cdot \E_{x_t \sim \rev{q}_t} \sbrc{ g_t(x_t) } + o(t-t_k)
\end{align*}
where $(i)$ follows because $g_t(x) \geq 0$ and $R_t(x,y) \geq 0$ if $x \neq y$. Also, for the last line, note that $R_t(x,x) = -\sum_{y \neq x} R_t(x,y) = - \frac{S-1}{S} d$ when $\beta_t \equiv 1$. The proof is now complete.

\subsection{Proof of \texorpdfstring{\Cref{lem:rate-property}}{Lemma 3}}
\label{app:lem_rate_property_proof}

Let $j$ be the only index such that $x^j \neq y^j$. First, we note that
\begin{align} \label{eq:lr-posterior}
    \frac{q_{t}(y)}{q_{t}(x)} &= \frac{1}{q_t(x)} \sum_{x_0 \in [S]^d} q_0(x_0) q_{t|0}(y|x_0) \nonumber\\
    &\stackrel{(i)}{=} \frac{1}{q_t(x)} \sum_{x_0 \in [S]^d} q_0(x_0) \prod_{i \in [d]} q_{t|0}^i(y^i|x_0^i) \nonumber\\
    &= \frac{1}{q_t(x)} \sum_{x_0 \in [S]^d} q_0(x_0) \brc{\prod_{i \in [d]} q_{t|0}^i(x^i|x_0^i)} \brc{\frac{q_{t|0}^j(y^j|x_0^j)}{q_{t|0}^j(x^j|x_0^j)} } \nonumber\\
    &\stackrel{(ii)}{=} \sum_{x_0 \in [S]^d} \frac{q_0(x_0) q_{t|0}(x|x_0)}{q_t(x)} \brc{\frac{q_{t|0}^j(y^j|x_0^j)}{q_{t|0}^j(x^j|x_0^j)} } \nonumber\\
    &= \E_{x_0 \sim q_{0|t}(\cdot|x)} \frac{q^j_{t|0}(y^j|x_0^j)}{q^j_{t|0}(x^j|x_0^j)}
\end{align}
where both $(i)$ and $(ii)$ follow because with the chosen $R_t$ in \eqref{eq:def_forward_rate} each dimension propagates independently in the forward process (cf. \cite[Prop. 3]{campbell2022discrete}). Note that the reverse process does not propagate independently.

To obtain an analytical solution for the conditional probability, we can solve the Kolmogorov forward equation for the $i$-th dimension ($\forall i \in [d]$) (cf. \cite[Proposition~1]{zhang2025conv-disc}): 
\[ \frac{\d}{\d t} q^i_{t|0}(z|a) = \sum_{\Tilde{z} \in [S]} q^i_{t|0}(\Tilde{z}|a) R^\text{tok}_t(\Tilde{z},z), \] 
whose solution is
\begin{align} \label{eq:forward_cond_prob}
    q^i_{t|0}(z|a) &= \sbrc{\exp\brc{\int_{0}^t R^\text{tok}_s \d s}}(a,z) = \sbrc{\exp\brc{t R_{\text{base}}}}(a,z) = \sbrc{P \exp\brc{t \Lambda } P^{-1} }(a,z) \nonumber\\
    &= \begin{cases}
        S^{-1} (1-e^{-t}) & \text{if}~z \neq a \\
        S^{-1} (1 + (S-1) e^{-t}) & \text{if}~z = a
    \end{cases}
\end{align}
where we recall that $R^\text{tok}_t = R_{\text{base}}$ when $\beta_t \equiv 1$ and we denote the eigendecomposition of $R_{\text{base}} = S^{-1} \bm{1}_S \bm{1}_S^\T - I_S$ as $R_{\text{base}} = P \Lambda P^{-1}$. 
Thus, plugging back into \eqref{eq:lr-posterior}, we have
\begin{equation} \label{eq:lr-expr-cond}
    \frac{q^j_{t|0}(y^j|x_0^j)}{q^j_{t|0}(x^j|x_0^j)} = \begin{cases}
     1 & \text{if}~x^j \neq x_0^j~\text{and}~y^j \neq x_0^j \\
    \frac{1-e^{-t}}{1 + (S-1) e^{-t}} & \text{if}~x^j = x_0^j~\text{but}~y^j \neq x_0^j \\
    \frac{1 + (S-1) e^{-t}}{1-e^{-t}} & \text{if}~x^j \neq x_0^j~\text{but}~y^j = x_0^j 
\end{cases}.
\end{equation}
Among the three cases above, since $e^{-t} \geq 0$, the second case satisfies that $\frac{1-e^{-t}}{1 + (S-1) e^{-t}} \leq 1$. Also, the third case satisfies that
\begin{align*}
    \frac{1 + (S-1) e^{-t}}{1-e^{-t}} &= 1 + S \cdot \frac{1}{e^{t}-1} \leq \begin{cases}
    S+1 & \text{if}~e^{t} \geq 2\\
    \frac{S}{t} & \text{otherwise}
    \end{cases} \\
    &\lesssim S \cdot \max\{1, t^{-1}\}.
\end{align*}
Therefore, the bound is as desired.

\subsection{Proof of \texorpdfstring{\Cref{lem:disc_err_domin_term}}{Lemma 4}}
\label{app:lem_disc_err_domin_term_proof}

By definition of $g_t$, we have
\begin{align*}
    &\E_{x_{t_k} \sim \rev{q}_{t_k}} \brc{g_t(x_{t_k}) - g_{t_k}(x_{t_k})} \\
    &= \E_{x_{t_k} \sim \rev{q}_{t_k}} \sum_{y \neq x_{t_k}} \brc{\hat{R}_t(x_{t_k},y) - \hat{R}_{t_k}(x_{t_k},y)} - \brc{\rev{R}_t(x_{t_k},y) - \rev{R}_{t_k}(x_{t_k},y)} \\
    &\qquad + \brc{ \rev{R}_t(x_{t_k},y) \log \frac{\rev{R}_t(x_{t_k},y)}{\hat{R}_t(x_{t_k},y)} - \rev{R}_{t_k}(x_{t_k},y) \log \frac{\rev{R}_{t_k}(x_{t_k},y)}{\hat{R}_{t_k}(x_{t_k},y)} }\\
    &\stackrel{(i)}{=} \E_{x_{t_k} \sim \rev{q}_{t_k}} \sum_{y \neq x_{t_k}} \brc{ \rev{R}_t(x_{t_k},y) \log \frac{\rev{R}_{t}(x_{t_k},y)}{\hat{R}_{t_k}(x_{t_k},y)} - \rev{R}_{t_k}(x_{t_k},y) \log \frac{\rev{R}_{t_k}(x_{t_k},y)}{\hat{R}_{t_k}(x_{t_k},y)} } \\
    &\qquad -\brc{\rev{R}_t(x_{t_k},y) - \rev{R}_{t_k}(x_{t_k},y)} \\
    &\leq \E_{x_{t_k} \sim \rev{q}_{t_k}} \sum_{y \neq x_{t_k}} \abs{\rev{R}_t(x_{t_k},y) - \rev{R}_{t_k}(x_{t_k},y)} \brc{ 1 + \abs{\log \hat{R}_{t_k}(x_{t_k},y) } } \\
    &\qquad + \abs{ \rev{R}_t(x_{t_k},y) \log \rev{R}_{t}(x_{t_k},y) - \rev{R}_{t_k}(x_{t_k},y) \log \rev{R}_{t_k}(x_{t_k},y) } \\
    &\stackrel{(ii)}{\lesssim} (1 + \log (M S \delta^{-1}) ) \cdot \E_{x_{t_k} \sim \rev{q}_{t_k}} \sum_{y \neq x_{t_k}} \abs{\rev{R}_t(x_{t_k},y) - \rev{R}_{t_k}(x_{t_k},y)}
\end{align*}
where $(i)$ follows by \Cref{def:approx_spl_rate}. We explain $(ii)$ as follows. First, from \eqref{eq:def_est_reverse_rate}, if $x \neq y$,
\[ \hat{R}_{t_k}(x,y) = R_{T-t_k}(y,x) s_{T-t_k}(y,x) \in [M^{-1} S^{-1}, M S^{-1}]  \]
under \Cref{ass:score_bound}, which further implies that $\abs{\log \hat{R}_{t_k}(x,y)} \leq \log (MS)$. Also,
\begin{align*}
    &\abs{\rev{R}_t(x_{t_k},y) \log \rev{R}_{t}(x_{t_k},y) - \rev{R}_{t_k}(x_{t_k},y) \log \rev{R}_{t_k}(x_{t_k},y) } \\
    &\stackrel{(iii)}{\leq} \brc{1 + \abs{ \log \rev{R}^*} } \abs{\rev{R}_{t}(x_{t_k},y) - \rev{R}_{t_k}(x_{t_k},y)} \\
    &\stackrel{(iv)}{\lesssim} \brc{1 + \log (S \delta^{-1})} \abs{\rev{R}_{t}(x_{t_k},y) - \rev{R}_{t_k}(x_{t_k},y)} 
\end{align*}
where $(iii)$ follows from the intermediate-value theorem for $f(z) = z \log z$ and $\rev{R}^*$ is a number between $\rev{R}_{t_k}(x_{t_k},y)$ and $\rev{R}_{t}(x_{t_k},y)$, and $(iv)$ follows because, by \Cref{lem:rate-property}, $\rev{R}_t(x,y) = R_{T-t}(y,x) \frac{q_{T-t}(y)}{q_{T-t}(x)} \lesssim \max\{1, (T-t)^{-1}\} \leq \delta^{-1}$ for all $t > 0$ if $x \neq y$. Meanwhile, by symmetry, $\rev{R}_t(x,y) \gtrsim \frac{1}{S^2 \max\{1, (T-t)^{-1}\}} \geq \frac{\delta}{S^2}$. The proof is now complete.

\subsection{Proof of \texorpdfstring{\Cref{lem:lr-diff-ren}}{Lemma 5}}
\label{app:lem-lr-diff-ren-proof}

The idea comes from \cite[Proposition~C.2]{ren2025stoc-int}. When $t-s$ is small, we note that the derivative of the likelihood ratio w.r.t. $t$ is equal to
\[ \abs{ \frac{\partial}{\partial t} \brc{\frac{q_{t}(y)}{q_{t}(x)}} } = \abs{ \frac{\frac{\partial}{\partial t} q_{t}(y)}{q_{t}(x)} - \frac{q_{t}(y) \frac{\partial}{\partial t} q_{t}(x)}{q_{t}(x)^2} } \leq \frac{\abs{ \frac{\partial}{\partial t} q_{t}(y) }}{q_{t}(x)} + \frac{q_{t}(y) \abs{ \frac{\partial}{\partial t} q_{t}(x) }}{q_{t}(x)^2}. \]
Now, by Kolmogorov forward equation,
\begin{align*}
    \abs{ \frac{\frac{\partial}{\partial t} q_{t}(y)}{q_{t}(x)} } &= \abs{\sum_{y' \in [S]^d} \frac{q_{t}(y')}{q_{t}(x)} R_t(y',y) } \\
    &= \frac{1}{S} \cdot \frac{q_{t}(y)}{q_{t}(x)} \sum_{\substack{y' \in [S]^d \\ \Ham{y,y'}=1}} \frac{q_{t}(y')}{q_{t}(y)} + d \frac{S-1}{S} \frac{q_{t}(y)}{q_{t}(x)}  \\
    &\stackrel{(i)}{\lesssim} d S^2 \max\{1, t^{-1}\}^2 + d S \max\{1, t^{-1}\} \\
    &\lesssim d S^2 \max\{1, t^{-1}\}^2
\end{align*}
where $(i)$ follows from \Cref{lem:rate-property} and note that the summation has $d(S-1)$ terms in total. With a similar argument, we have
\[ \abs{ \frac{\frac{\partial}{\partial t} q_{t}(x)}{q_{t}(x)} } = \frac{1}{S} \cdot \sum_{\substack{x' \in [S]^d \\ \Ham{x,x'}=1}} \frac{q_{t}(x')}{q_{t}(x)} + d \frac{S-1}{S} \lesssim d S \max\{1, t^{-1}\}. \]
Therefore,
\[ \abs{ \frac{\partial}{\partial t} \brc{\frac{q_{t}(y)}{q_{t}(x)}} } \lesssim d S^2 \max\{1, t^{-2}\},  \]
and thus
\[ \abs{ \frac{q_{t}(y)}{q_{t}(x)} - \frac{q_{s}(y)}{q_{s}(x)}} \lesssim \abs{ \frac{\partial}{\partial t} \brc{ \frac{q_{t}(y)}{q_{t}(x)} } } (t-s) \lesssim d S^2 \max\{1, t^{-2}\} (t-s), \]
as claimed. Especially note the quadratic dependency on $S$.

\subsection{Proof of \texorpdfstring{\Cref{lem:lr-diff-new}}{Lemma 6}}
\label{app:lem-lr-diff-new-proof}

First, fix $x_t$ and $y$ and let $i$ be the index such that $x_t^i \neq y^i$. From \eqref{eq:lr-posterior}, we have,
\begin{align} \label{eq:lem-lr-diff-new-main}
    \abs{ \frac{q_{t}(y)}{q_{t}(x_t)} - \frac{q_{s}(y)}{q_{s}(x_t)}} &= \abs{ \E_{x_0 \sim q_{0|t}(\cdot|x_t)} \sbrc{\frac{q^i_{t|0}(y^i|x_0^i)}{q^i_{t|0}(x_t^i|x_0^i)}} - \E_{\Tilde{x}_0 \sim q_{0|s}(\cdot|x_t)} \sbrc{\frac{q_{s|0}^i(y^i|\Tilde{x}_0^i)}{q_{s|0}^i(x_t^i|\Tilde{x}_0^i)} } } \nonumber \\
    &\leq \E_{x_0 \sim q_{0|t}(\cdot|x_t)} \abs{\frac{q^i_{t|0}(y^i|x_0^i)}{q^i_{t|0}(x_t^i|x_0^i)} - \frac{q_{s|0}^i(y^i|x_0^i)}{q_{s|0}^i(x_t^i|x_0^i)}} \nonumber \\
    &\qquad + \abs{ \E_{\substack{x_0 \sim q_{0|t}(\cdot|x_t) \\ \Tilde{x}_0 \sim q_{0|s}(\cdot|x_t) } } \sbrc{\frac{q_{s|0}^i(y^i|x_0^i)}{q_{s|0}^i(x_t^i|x_0^i)} - \frac{q_{s|0}^i(y^i|\Tilde{x}_0^i)}{q_{s|0}^i(x_t^i|\Tilde{x}_0^i)} } }.
\end{align}

For the first term in \eqref{eq:lem-lr-diff-new-main}, we note the expression of likelihood ratio in \eqref{eq:lr-expr-cond} and thus, for any fixed $x_0$, $x_t$, and $y$,
\[ \abs{\frac{q^i_{t|0}(y^i|x_0^i)}{q^i_{t|0}(x_t^i|x_0^i)} - \frac{q_{s|0}^i(y^i|x_0^i)}{q_{s|0}^i(x_t^i|x_0^i)}} = \begin{cases}
     0 & \text{if}~x^i \neq x_0^i~\text{and}~y^i \neq x_0^i \\
    \frac{1-e^{-t}}{1 + (S-1) e^{-t}} - \frac{1-e^{-s}}{1 + (S-1) e^{-s}} & \text{if}~x^i = x_0^i~\text{but}~y^i \neq x_0^i \\
    \frac{1 + (S-1) e^{-t}}{1-e^{-t}} - \frac{1 + (S-1) e^{-s}}{1-e^{-s}} & \text{if}~x^i \neq x_0^i~\text{but}~y^i = x_0^i 
\end{cases}. \]
Now, since
\begin{align*}
    \abs{\frac{\partial}{\partial t} \brc{\frac{1-e^{-t}}{1 + (S-1) e^{-t}}}} &= \frac{S e^{t}}{(S + e^{t} - 1)^2} \lesssim 1 \\
    \abs{\frac{\partial}{\partial t} \brc{\frac{1 + (S-1) e^{-t}}{1-e^{-t}}}} &= \frac{S e^{t}}{(e^{t} - 1)^2} \lesssim \frac{S}{\min\cbrc{1, t}^2},
\end{align*}
we have
\[ \abs{\frac{q^i_{t|0}(y^i|x_0^i)}{q^i_{t|0}(x_t^i|x_0^i)} - \frac{q_{s|0}^i(y^i|x_0^i)}{q_{s|0}^i(x_t^i|x_0^i)}} \lesssim \frac{S}{\min\cbrc{1, t}^2} (t-s). \]
Note that this term does not depend on $d$. Thus,
\begin{multline*}
    \E_{x_t \sim q_t} \sum_{\substack{y \neq x_t \\ \Ham{y,x_t} = 1}} \E_{x_0 \sim q_{0|t}(\cdot|x_t)} \abs{\frac{q^i_{t|0}(y^i|x_0^i)}{q^i_{t|0}(x_t^i|x_0^i)} - \frac{q_{s|0}^i(y^i|x_0^i)}{q_{s|0}^i(x_t^i|x_0^i)}} R_{t}(y,x_t) \\
    \lesssim d S \max\cbrc{1, t^{-2}} (t-s).
\end{multline*}

Now we turn to the second term in \eqref{eq:lem-lr-diff-new-main}. Write $f(z) := \frac{q_{s|0}^i(y^i|z)}{q_{s|0}^i(x_t^i|z)}$ for brevity (recall that $x_t$ and $y$ are fixed and thus omitted in this expression). Note that from \eqref{eq:lr-expr-cond}, an upper bound on $f(z)$ is
\[ \sup_{y^i,x_t^i,z \in [S]} f(z) = \sup_{y^i,x_t^i,z \in [S]} \frac{q_{s|0}^i(y^i|z)}{q_{s|0}^i(x_t^i|z)} \lesssim S \cdot \max\{1, s^{-1}\}. \]
Thus, the second term in \eqref{eq:lem-lr-diff-new-main} can be upper-bounded (for each $y^i$) as
\begin{align}\label{eq:lem-lr-diff-new-1}
    \abs{ \E_{\substack{x_0 \sim q_{0|t}(\cdot|x_t) \\ \Tilde{x}_0 \sim q_{0|s}(\cdot|x_t) } } \sbrc{f(x_0^i) - f(\Tilde{x}_0^i)} } &= \abs{ \sum_{x_0 \in [S]^d} f(x_0^i) (q_{0|t}(x_0|x_t) - q_{0|s}(x_0|x_t)) } \nonumber\\
    &\lesssim S \max\{1, s^{-1}\} \sum_{x_0 \in [S]^d} \abs{ q_{0|t}(x_0|x_t) - q_{0|s}(x_0|x_t) }.
\end{align}
Using Bayes' rule, we have
\begin{align} \label{eq:lem-lr-diff-new-2}
    &\sum_{x_0 \in [S]^d} \abs{ q_{0|t}(x_0|x_t) - q_{0|s}(x_0|x_t) } = \sum_{x_0 \in [S]^d} q_{0}(x_0) \abs{\frac{q_{t|0}(x_t|x_0)}{q_t(x_t)} - \frac{q_{s|0}(x_t|x_0)}{q_s(x_t)}} \nonumber\\
    &\leq \frac{1}{q_t(x_t) \cdot q_s(x_t)} \sum_{x_0, y_0 \in [S]^d} q_{0}(x_0) q_{0}(y_0) \abs{q_{t|0}(x_t|x_0) q_{s|0}(x_t|y_0) - q_{s|0}(x_t|x_0) q_{t|0}(x_t|y_0)} \nonumber\\
    &\leq \frac{1}{q_t(x_t) \cdot q_s(x_t)} \E_{x_0, y_0 \sim q_0} \bigg[ \abs{q_{t|0}(x_t|x_0) - q_{s|0}(x_t|x_0)} q_{s|0}(x_t|y_0) \nonumber\\
    &\qquad \qquad + \abs{q_{s|0}(x_t|y_0) - q_{t|0}(x_t|y_0)} q_{s|0}(x_t|x_0) \bigg] \nonumber\\
    &= \frac{1}{q_t(x_t)} \E_{x_0 \sim q_0} \abs{q_{t|0}(x_t|x_0) - q_{s|0}(x_t|x_0)} + \frac{1}{q_t(x_t)} \E_{y_0 \sim q_0} \abs{q_{s|0}(x_t|y_0) - q_{t|0}(x_t|y_0)} \nonumber\\
    &= \frac{2}{q_t(x_t)} \E_{x_0 \sim q_0} \abs{q_{t|0}(x_t|x_0) - q_{s|0}(x_t|x_0)}.
\end{align}
Now, this term (without the constant factor 2) can be upper-bounded as
\begin{align} \label{eq:lem-lr-diff-new-3}
    &\frac{1}{q_t(x_t)} \E_{x_0 \sim q_0} \abs{q_{t|0}(x_t|x_0) - q_{s|0}(x_t|x_0)} \nonumber \\
    &\lesssim \frac{1}{q_t(x_t)} (t-s) \E_{x_0 \sim q_0} \abs{\frac{\partial}{\partial t} q_{t|0}(x_t|x_0)} \nonumber \\
    &\stackrel{(i)}{=} \frac{1}{q_t(x_t)} (t-s) \E_{x_0 \sim q_0} \abs{\sum_{\Tilde{x}_t \in [S]^d} q_{t|0}(\Tilde{x}_t|x_0) R_t(\Tilde{x}_t,x_t) } \nonumber\\
    &\leq \frac{1}{q_t(x_t)} (t-s) \E_{x_0 \sim q_0} \sum_{\Tilde{x}_t \in [S]^d} q_{t|0}(\Tilde{x}_t|x_0) \abs{ R_t(\Tilde{x}_t,x_t) } \nonumber\\
    &= (t-s) \sum_{\Tilde{x}_t \in [S]^d} \frac{q_{t}(\Tilde{x}_t)}{q_t(x_t)} \abs{R_t(\Tilde{x}_t,x_t)},
\end{align}
where $(i)$ follows from Kolmogorov forward equation. Thus, combining these intermediate results, we have
\begin{align*}
    &\E_{x_t \sim q_t} \sum_{\substack{y \neq x_t \\ \Ham{y,x_t} = 1}} \abs{ \E_{\substack{x_0 \sim q_{0|t}(\cdot|x_t) \\ \Tilde{x}_0 \sim q_{0|s}(\cdot|x_t) } } \sbrc{f(x_0^i) - f(\Tilde{x}_0^i) } } R_t(y,x_t) \\
    &\stackrel{(ii)}{\lesssim} d S \max\{1, s^{-1}\} \cdot \E_{x_t \sim q_t} \sbrc{ \frac{1}{q_t(x_t)} \E_{x_0 \sim q_0} \abs{q_{t|0}(x_t|x_0) - q_{s|0}(x_t|x_0)} }\\
    &\stackrel{(iii)}{\lesssim} (t-s) d S \max\{1, s^{-1}\} \E_{x_t \sim q_t} \sbrc{ \sum_{\Tilde{x}_t \in [S]^d} \frac{q_{t}(\Tilde{x}_t)}{q_t(x_t)} \abs{R_t(\Tilde{x}_t,x_t)} } \\
    &= (t-s) d S \max\{1, s^{-1}\} \E_{\Tilde{x}_t \sim q_t} \sum_{x_t \in [S]^d} \abs{R_t(\Tilde{x}_t,x_t)} \\
    &\stackrel{(iv)}{\asymp} (t-s) d^2 S \max\{1, s^{-1}\},
\end{align*}
where $(ii)$ follows from \eqref{eq:lem-lr-diff-new-1} and \eqref{eq:lem-lr-diff-new-2}, $(iii)$ follows from \eqref{eq:lem-lr-diff-new-3}, and $(iv)$ follows because $-R_t(x,x) = \sum_{y \neq x} R_t(x,y) = \frac{S-1}{S} d$ for all $x,y \in [S]^d$. The proof is now complete.

\subsection{Proof of \texorpdfstring{\Cref{prop:se_tau_leap_rate}}{Lemma 9}}
\label{app:prop_se_tau_leap_rate_proof}

Note that $\hat{R}^{\text{TTL}}_t(x,y) \equiv 0$ whenever $\Ham{x,y} \geq 2$ by definition of $\hat{R}_{t_k}$. Also recall the definition of the token-wise rate $\hat{R}_k^i$ in \eqref{eq:def_truncated_tau_leap_token_rate}. Note that $\hat{R}_k^{i}$ is a valid rate matrix since $\sum_{a \in [S]} \hat{R}_k^{i}(z,a) = 0$ for all $z \in [S]$ and $\hat{R}_k^{i}(z,a) \geq 0$ if $z \neq a$.

Fix $x$ and $y$ such that $\Ham{x,y} = 1$. Let $i^* \in [d]$ be the (only) index such that $y^{i^*} \neq x^{i^*}$. We now divide into the following three cases:
\begin{enumerate}
    \item Case 1: $x = x_{t_k}$. Then, $\mathrm{zeros}(x-x_{t_k}) = [S]$, and $\hat{R}^{\text{TTL}}_t(x,y) = \hat{R}_{t_k}(x_{t_k}, y) = \hat{R}_k^{i^*}(x_{t_k}^{i^*},y^{i^*})$.
    \item Case 2: $x \neq x_{t_k}$, but $x^{i^*} = x_{t_k}^{i^*}$. Thus, $i^* \in \mathrm{zeros}(x-x_{t_k})$, and $\hat{R}^{\text{TTL}}_t(x,y) = \hat{R}_{t_k}(x_{t_k}, x_{t_k}^{-i^*} \oplus_{i^*} y^{i^*}) = \hat{R}_k^{i^*}(x_{t_k}^{i^*},y^{i^*})$.
    \item Case 3: $x^{i^*} \neq x_{t_k}^{i^*}$ (and also $x \neq x_{t_k}$). Then, $i^* \notin \mathrm{zeros}(x-x_{t_k})$, and $\hat{R}^{\text{TTL}}_t(x,y) = 0$.
\end{enumerate}

Thus, the overall CTMC is equivalent to $S$ CTMC's, one for each dimension. The transition rate matrix on the $i$-th dimension is $\hat{R}_k^{i}(z, a) \ind{z = x_{t_k}^{i}}$. Notably, at most one state transition can happen during $t \in [t_k, t_{k+1})$ (where the CTMC stops after its first transition). We can also calculate the corresponding state transition probability by solving the Kolmogorov forward equation, which is equal to (for example, see \cite[Chap.~2]{norris1997markov-chain})
\[ P^i_{t_{k+1} | t_{k}}\cbrc{x_{t_{k+1}}^i = a | x_{t_{k}}} = \begin{cases}
    \exp\brc{\hat{R}_k^{i}(x_{t_k}^{i},x_{t_k}^{i})(t_{k+1}-t_k)} & \text{if}~ a = x_{t_{k}}^i\\
    \frac{\hat{R}_k^{i}(x_{t_k}^{i},a)}{- \hat{R}_k^{i}(x_{t_k}^{i},x_{t_k}^{i})} \brc{1 - \exp\brc{\hat{R}_k^{i}(x_{t_k}^{i},x_{t_k}^{i})(t_{k+1}-t_k)} } & \text{if}~ a \neq x_{t_{k}}^i
\end{cases}. \]
This is the same as the transition in \eqref{eq:def_truncated_tau_leap}. 
The proof is now complete.

\subsection{Proof of \texorpdfstring{\Cref{lem:spls_equiv}}{Lemma 10}}
\label{app:lem_spls_equiv_proof}

Recall the conditional probabilities from \eqref{eq:def_truncated_tau_leap}, \eqref{eq:def_euler}, and \eqref{eq:def_tweedie_tau_leap}. Also there we have defined $\hat{R}_k^i$ such that
\[ \hat{R}_k^i(x_{t_k}^i,a) = \hat{R}_{t_k}(x_{t_k}, x_{t_k}^{-i} \oplus_i a) ,\quad \forall a \neq x_{t_k}^i. \]
As follows, we only consider all $a \in [S]$ such that $a \neq x_{t_k}^i$, since all probability vectors need to sum up as 1.

We first focus on the Euler method. From \eqref{eq:def_truncated_tau_leap}, 
\begin{align*}
    P_{\text{truncated}}^i(a) &= \frac{\hat{R}_k^i(x_{t_k}^i,a)}{- \hat{R}_k^i(x_{t_k}^i,x_{t_k}^i)} \brc{1 - \exp\brc{\hat{R}_k^i(x_{t_k}^i,x_{t_k}^i) \cdot (t_{k+1}-t_{k})}} \\
    &= \frac{\hat{R}_k^i(x_{t_k}^i,a)}{- \hat{R}_k^i(x_{t_k}^i,x_{t_k}^i)} (-\hat{R}_k^i(x_{t_k}^i,x_{t_k}^i) \cdot ((t_{k+1}-t_{k}) + o(t_{k+1}-t_{k}))) \\
    &= \hat{R}_k^i(x_{t_k}^i,a) (t_{k+1}-t_{k}) (1 + o(1)) \\
    &= P_{\text{euler}}^i(a) (1+o(1)).
\end{align*}

Also, for Tweedie $\tau$-leaping, from \eqref{eq:def_tweedie_tau_leap}, we have
\begin{align*}
    &P_{\text{tweedie}}^i(a) \\
    &= \brc{ \sbrc{e^{-(\Bar{\beta}_{T-t_{k}}-\Bar{\beta}_{T-t_{k+1}}) R_{\text{base}}} }^{a:} s_{T-t_k}(x_{t_k}^{-i} \oplus_i \cdot, x_{t_k}) } \sbrc{e^{(\Bar{\beta}_{T-t_{k}}-\Bar{\beta}_{T-t_{k+1}}) R_{\text{base}}} }^{a,x^i_{t_{k}}} \\
    &= \sbrc{ \delta_a - (\Bar{\beta}_{T-t_{k}}-\Bar{\beta}_{T-t_{k+1}}) R_{\text{base}}(a,\cdot) } s_{T-t_k}(x_{t_k}^{-i} \oplus_i \cdot, x_{t_k}) (\Bar{\beta}_{T-t_{k}}-\Bar{\beta}_{T-t_{k+1}}) R_{\text{base}}(a,x^i_{t_{k}}) (1 + o(1)) \\
    &= s_{T-t_k}(x_{t_k}^{-i} \oplus_i a, x_{t_k}) (\Bar{\beta}_{T-t_{k}}-\Bar{\beta}_{T-t_{k+1}}) R_{\text{base}}(a,x^i_{t_{k}}) (1+o(1))\\
    &= s_{T-t_k}(x_{t_k}^{-i} \oplus_i a, x_{t_k}) (t_{k+1}-t_k) \beta_{T-t_k} R_{\text{base}}(a,x^i_{t_{k}}) (1+o(1))\\
    &= s_{T-t_k}(x_{t_k}^{-i} \oplus_i a, x_{t_k}) R_{T-t_k}(x_{t_k}^{-i} \oplus_i a, x_{t_k}) (t_{k+1}-t_k) (1+o(1))\\
    &= \hat{R}_k^i(x_{t_k}^i,a) (t_{k+1}-t_k) (1+o(1))\\
    &= P_{\text{euler}}^i(a) (1+o(1)),
\end{align*}
where $\delta_a$ is such that $\Ham{\delta_a, 0} = 1$ and $[\delta_a]^a = 1$. The proof is now complete.

\subsection{\texorpdfstring{\Cref{lem:thm1_kl_lim}}{Lemma 8} and its proof}

\begin{lemma} \label{lem:thm1_kl_lim}
    Fix $a \in [S]^d$. We have
    \[ \lim_{s\downarrow t_k} \KL{\rev{q}_{s|t_k}(\cdot|a)}{p_{s|t_k}(\cdot|a)} = 0. \]
\end{lemma}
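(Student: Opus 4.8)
The plan is to exploit that both conditional laws collapse to the point mass $\delta_a$ as $s\downarrow t_k$ and that, on the \emph{finite} state space $[S]^d$, convergence can be checked summand by summand. Write $h:=s-t_k$. Applying the CTMC defining relation \eqref{eq:def_forward} to the reverse rate $\rev{R}_t$ and to the sampling rate $\hat{R}_t$ gives, as $h\downarrow0$ and for every $y\in[S]^d$,
\[
\rev{q}_{s|t_k}(y|a)=\ind{y=a}+\rev{R}_{t_k}(a,y)\,h+o(h),\qquad p_{s|t_k}(y|a)=\ind{y=a}+\hat{R}_{t_k}(a,y)\,h+o(h);
\]
in particular $\rev{q}_{s|t_k}(a|a),\,p_{s|t_k}(a|a)\to1$ and $\rev{q}_{s|t_k}(y|a),\,p_{s|t_k}(y|a)\to0$ for $y\neq a$. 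Using $\log z\le z-1$, I would first bound the KL by the $\chi^2$-divergence,
\[
\KL{\rev{q}_{s|t_k}(\cdot|a)}{p_{s|t_k}(\cdot|a)}\;\le\;\sum_{y\in[S]^d}\frac{\rev{q}_{s|t_k}(y|a)^2}{p_{s|t_k}(y|a)}-1,
\]
so it suffices to show the right-hand side tends to $0$.

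Since $[S]^d$ is finite, I would argue term by term. The $y=a$ summand tends to $1^2/1=1$. For $y\neq a$ I would write $\rev{q}_{s|t_k}(y|a)^2/p_{s|t_k}(y|a)=\rev{q}_{s|t_k}(y|a)\cdot\big(\rev{q}_{s|t_k}(y|a)/p_{s|t_k}(y|a)\big)$ and show the ratio stays bounded as $h\downarrow0$: letting $m$ (resp.\ $m'$) be the graph distance from $a$ to $y$ in the support graph of the off-diagonal entries of $\rev{R}_{t_k}$ (resp.\ $\hat{R}_{t_k}$), one has $\rev{q}_{s|t_k}(y|a)=\Theta(h^m)$ and $p_{s|t_k}(y|a)=\Theta(h^{m'})$, and the absolute-continuity property under which \Cref{thm:gen_conv} is meaningful (every off-diagonal support entry of $\rev{R}_{t_k}$ is one of $\hat{R}_{t_k}$) forces $m'\le m$, hence $\rev{q}_{s|t_k}(y|a)/p_{s|t_k}(y|a)=O(h^{m-m'})=O(1)$. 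Thus $\rev{q}_{s|t_k}(y|a)^2/p_{s|t_k}(y|a)\lesssim\rev{q}_{s|t_k}(y|a)\to0$, and summing the finitely many terms yields $\chi^2\to1+0-1=0$; together with $\KL{\cdot}{\cdot}\ge0$ this gives $\lim_{s\downarrow t_k}\KL{\rev{q}_{s|t_k}(\cdot|a)}{p_{s|t_k}(\cdot|a)}=0$. The same conclusion also follows directly from the first expansion: for $y\neq a$ with $\rev{R}_{t_k}(a,y)>0$, $\rev{q}_{s|t_k}(y|a)\log\frac{\rev{q}_{s|t_k}(y|a)}{p_{s|t_k}(y|a)}=\big(\rev{R}_{t_k}(a,y)h+o(h)\big)\log\frac{\rev{R}_{t_k}(a,y)+o(1)}{\hat{R}_{t_k}(a,y)+o(1)}\to0$, and for the Hamming-$1$ neighbours of $a$ --- the only $y\neq a$ with $\rev{q}_{s|t_k}(y|a)=\Theta(h)$ --- this positivity, together with $\hat{R}_{t_k}(a,y)>0$, holds by \eqref{eq:def_rev_proc}, \eqref{eq:def_forward_rate} and \Cref{ass:score_bound}.

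The step I expect to be the main obstacle is precisely the control of the off-diagonal ratios: KL is not continuous in general, and a priori $p_{s|t_k}(y|a)$ could decay strictly faster than $\rev{q}_{s|t_k}(y|a)$, making a summand blow up or even equal $+\infty$. The resolution is the support-matching of the two rate matrices, which is exactly the condition needed for the bound in \Cref{thm:gen_conv} to be finite in the first place; a fully rigorous write-up would justify $\rev{q}_{s|t_k}(y|a)=\Theta(h^m)$ and $p_{s|t_k}(y|a)=\Theta(h^{m'})$ by expanding the time-ordered exponential of the rate generator and noting that any length-$m$ walk reaching a distance-$m$ state uses only off-diagonal (hence strictly positive) rates, so no cancellation occurs at leading order.
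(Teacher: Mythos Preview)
Your proposal is correct. The paper's own proof shares your starting point---exploit finiteness of $[S]^d$ to pass the limit inside the sum, and use the CTMC expansion to see both conditional laws collapse to $\delta_a$---but is considerably terser. After the interchange of limit and sum, the paper simply factors each summand as $\bigl(\lim_{s\downarrow t_k}\rev{q}_{s|t_k}(x|a)\bigr)\bigl(\lim_{s\downarrow t_k}\log\tfrac{\rev{q}_{s|t_k}(x|a)}{p_{s|t_k}(x|a)}\bigr)$, records that both kernels tend to $\ind{x=a}$, and stops there; it never addresses the $0\cdot\log(0/0)$ indeterminacy at $x\neq a$ that you correctly single out as the main obstacle.

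Your $\chi^2$ bound together with the graph-distance/support-matching argument is therefore more careful than the paper's treatment, not a detour from it. The support-matching condition you invoke is indeed implicit in the paper's setup---it is what makes the second limit above finite for Hamming-$1$ neighbours and what keeps the integrand in \Cref{thm:gen_conv} from being $+\infty$---but the paper does not spell this out. Your alternative direct computation is closest in spirit to what the paper actually writes; as you note, it handles the Hamming-$1$ terms cleanly, while for Hamming distance $\geq 2$ one still needs the order comparison $m'\le m$ that your $\chi^2$ route supplies.
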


\begin{proof}[Proof of \Cref{lem:thm1_kl_lim}]
    First, we have
    \begin{align*}
        &\lim_{s\downarrow t_k} \KL{\rev{q}_{s|t_k}(\cdot|a)}{p_{s|t_k}(\cdot|a)} \\
        &= \lim_{s\downarrow t_k} \sum_{x \in \calX} \rev{q}_{s|t_k}(x|a) \log \frac{\rev{q}_{s|t_k}(x|a)}{p_{s|t_k}(x|a)} \\
        &\stackrel{(*)}{=} \sum_{x \in \calX} \lim_{s\downarrow t_k} \brc{ \rev{q}_{s|t_k}(x|a) \log \frac{\rev{q}_{s|t_k}(x|a)}{p_{s|t_k}(x|a)} }\\
        &= \sum_{x \in \calX} \brc{\lim_{s\downarrow t_k} \rev{q}_{s|t_k}(x|a) } \brc{\lim_{s\downarrow t_k} \log \frac{\rev{q}_{s|t_k}(x|a)}{p_{s|t_k}(x|a)} }.
    \end{align*}
    Here $(*)$ follows because, different from the case where $\calX = \mbR^d$, we can safely interchange the limit and summation because $\calX$ has finite cardinality. Now, by definition of the CTMC process (see \eqref{eq:def_forward}),
    \[ \lim_{s\downarrow t_k} \rev{q}_{s|t_k}(x|a) = \lim_{s\downarrow t_k} p_{s|t_k}(x|a) = \ind{x=a}, \]
    which implies the desired result.
\end{proof}

\subsection{\texorpdfstring{\Cref{lem:thm1_campbell_ext}}{Lemma 9} and its proof}

\begin{lemma} \label{lem:thm1_campbell_ext}
    Fix $s < t$ (thus $T-s > T-t$) and $a \in [S]^d$. We have
    \[ \rev{R}_t(x,y) = \frac{\rev{q}_{t|s}(y|a)}{\rev{q}_{t|s}(x|a)} R_{T-t|T-s} (y,x|a),\quad \forall x \neq y. \]
    Here $R_{T-t|T-s}(\cdot,\cdot|a)$ is defined as the rate matrix for the forward CTMC at time $T-t$ conditioned on the future observation at time $T-s$ being $a$. Indeed, we further have
    \[ R_{T-t|T-s}(y,x|a) = \frac{q_{T-s|T-t}(a|x)}{q_{T-s|T-t}(a|y)} R_{T-t}(y,x) \in [0,\infty). \]
\end{lemma}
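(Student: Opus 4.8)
The plan is to establish the two displayed identities in the reverse of the order stated: first derive the closed form for the conditioned forward rate $R_{T-t|T-s}(\cdot,\cdot\mid a)$, and then obtain the first identity from it by a short computation with Bayes' rule. For the first step, fix $a\in[S]^d$ and consider the forward chain $(X_u)_{u\in[0,T]}$. Conditioning on the future event $\{X_{T-s}=a\}$ keeps the process Markov in forward time (the standard bridge / $h$-transform fact): for $u<u'\le T-s$, Bayes' rule gives
\[
    q_{u'|u,T-s}(y\mid x,a)=\frac{q_{u'|u}(y|x)\,q_{T-s|u'}(a|y)}{q_{T-s|u}(a|x)},
\]
which depends on the past only through $X_u=x$. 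Dividing by $u'-u$ and letting $u'\downarrow u$, the factor $q_{u'|u}(y|x)/(u'-u)\to R_u(x,y)$ by \eqref{eq:def_forward} (for $y\ne x$), while $q_{T-s|u'}(a|y)\to q_{T-s|u}(a|y)$ by continuity, so the conditioned rate is $R_{u|T-s}(x,y\mid a)=R_u(x,y)\,q_{T-s|u}(a|y)/q_{T-s|u}(a|x)$. Taking $u=T-t$ and relabelling $x\leftrightarrow y$ yields $R_{T-t|T-s}(y,x\mid a)=\tfrac{q_{T-s|T-t}(a|x)}{q_{T-s|T-t}(a|y)}R_{T-t}(y,x)$, which is finite and nonnegative because $R_{T-t}(y,x)\ge 0$ for $y\ne x$ and, for the $R_{\mathrm{base}}$ used here, the backward-gap kernel $q_{T-s|T-t}(a\mid\cdot)$ is a product of strictly positive token kernels over a positive time gap $t-s>0$, hence bounded away from $0$; this is the second claim.

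For the first identity, note that the reverse CTMC is the exact time-reversal of the forward one (Prop.~1 of \cite{campbell2022discrete}, \cite{kelly2011reverse}), so its transition kernel satisfies $\rev{q}_{t|s}(y|a)=q_{T-t|T-s}(y|a)$ for $s<t$, and Bayes' rule gives $q_{T-t|T-s}(y|a)=q_{T-s|T-t}(a|y)\,q_{T-t}(y)/q_{T-s}(a)$. Therefore
\[
    \frac{\rev{q}_{t|s}(y|a)}{\rev{q}_{t|s}(x|a)}=\frac{q_{T-s|T-t}(a|y)\,q_{T-t}(y)}{q_{T-s|T-t}(a|x)\,q_{T-t}(x)},
\]
and multiplying by the Step~1 formula for $R_{T-t|T-s}(y,x\mid a)$ the $q_{T-s|T-t}(a\mid\cdot)$ factors cancel, leaving
\[
    \frac{\rev{q}_{t|s}(y|a)}{\rev{q}_{t|s}(x|a)}\,R_{T-t|T-s}(y,x\mid a)=\frac{q_{T-t}(y)}{q_{T-t}(x)}\,R_{T-t}(y,x)=\rev{R}_t(x,y),
\]
the last equality being the definition \eqref{eq:def_rev_proc}. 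When $R_{T-t}(y,x)=0$ both sides vanish, so the identity holds there as well.

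The main obstacle is the rigor of Step~1: justifying that conditioning on a future observation preserves the forward Markov property and that the conditioned generator is obtained by passing to the limit $u'\downarrow u$ inside the $h$-transform ratio, together with ruling out a vanishing denominator $q_{T-s|T-t}(a\mid\cdot)$. Both are manageable here because the state space $[S]^d$ is finite (so all quantities are finite sums of smooth functions of $u'$, and limit and ratio interchange freely) and because the explicit product form of the forward token kernels gives the needed strict positivity; for a general forward rate one would invoke the analogue under a mild irreducibility/positivity condition such as \cite[Assumption~4.3]{ren2025stoc-int}. Everything after Step~1 is bookkeeping with Bayes' rule.
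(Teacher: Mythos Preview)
Your proposal is correct, and it takes a genuinely more direct route than the paper for the first identity. The paper argues by inserting an intermediate reverse time $\tilde t\in(s,t)$, writing $\rev{q}_{t|\tilde t,s}(y|x,a)=q_{T-\tilde t|T-t,T-s}(x|y,a)\cdot q_{T-t|T-s}(y|a)/q_{T-\tilde t|T-s}(x|a)$, differentiating in $t$, invoking the Kolmogorov \emph{backward} equation for the conditioned chain, and then passing $\tilde t\to t$. You instead first establish the $h$-transform formula $R_{T-t|T-s}(y,x|a)=\tfrac{q_{T-s|T-t}(a|x)}{q_{T-s|T-t}(a|y)}R_{T-t}(y,x)$ (essentially the same limiting computation as the paper's second part), and then observe that the exact path-level time-reversal gives $\rev{q}_{t|s}(\cdot|a)=q_{T-t|T-s}(\cdot|a)$, so Bayes' rule plus cancellation yields the first identity purely algebraically. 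Your route avoids the auxiliary time $\tilde t$ and the backward Kolmogorov equation altogether; the paper's route, by contrast, keeps the argument ``local in $t$'' and does not explicitly appeal to the two-point time-reversal identity (though it uses the three-point one). Both rely on the same ingredients --- finiteness of the state space for interchanging limits and positivity of the token kernel --- so neither buys extra generality, but yours is shorter.
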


\begin{proof}[Proof of \Cref{lem:thm1_campbell_ext}]
    Fix $x \neq y$. First, by Bayes' rule, $\forall \Tilde{t} \in (s,t)$ (and thus $T-s > T-\Tilde{t} > T-t$),
    \[ \rev{q}_{t|\Tilde{t},s}(y|x,a) = q_{T-t | T-\Tilde{t}, T-s} (y|x,a) = q_{T-\Tilde{t} | T-t, T-s} (x|y,a) \cdot \frac{q_{T-t| T-s} (y|a)}{q_{T-\Tilde{t} | T-s} (x|a)}. \]
    For the left-hand side, by the Markov property of the reverse process, we have that
    \[ \lim_{\Tilde{t} \to t} \frac{\partial}{\partial t} \rev{q}_{t|\Tilde{t},s}(y|x,a) = \lim_{\Tilde{t} \to t} \frac{\partial}{\partial t} \rev{q}_{t|\Tilde{t}}(y|x) = \rev{R}_t(x,y). \]
    For the right-hand side, we note that
    \begin{align*}
        &\lim_{\Tilde{t} \to t} \frac{\partial}{\partial t} \brc{ q_{T-\Tilde{t} | T-t, T-s} (x|y,a) \cdot \frac{q_{T-t| T-s} (y|a)}{q_{T-\Tilde{t} | T-s} (x|a)} } \\
        &= \lim_{\Tilde{t} \to t} \brc{ \frac{\partial}{\partial t} q_{T-\Tilde{t} | T-t, T-s} (x|y,a) } \frac{q_{T-t| T-s} (y|a)}{q_{T-\Tilde{t} | T-s} (x|a)} + \lim_{\Tilde{t} \to t}  q_{T-\Tilde{t} | T-t, T-s} (x|y,a) \frac{\frac{\partial}{\partial t} q_{T-t| T-s} (y|a)}{q_{T-\Tilde{t} | T-s} (x|a)} \\
        &\stackrel{(i)}{=} \lim_{\Tilde{t} \to t} \brc{ \frac{\partial}{\partial t} q_{T-\Tilde{t} | T-t, T-s} (x|y,a) } \frac{q_{T-t| T-s} (y|a)}{q_{T-\Tilde{t} | T-s} (x|a)} \\
        &= \brc{ \lim_{\Tilde{t} \to t}  \frac{\partial}{\partial t} q_{T-\Tilde{t} | T-t, T-s} (x|y,a) } \frac{q_{T-t| T-s} (y|a)}{q_{T-t | T-s} (x|a)}
    \end{align*}
    where $(i)$ follows because $\lim_{\Tilde{t} \to t}  q_{T-\Tilde{t} | T-t, T-s} (x|y,a) = 0$ for $x \neq y$. Also, by Kolmogorov backward equation, 
    \begin{align*}
        \frac{\partial}{\partial t} q_{T-\Tilde{t} | T-t, T-s} (x|y,a) &= - \frac{\partial}{\partial (T-t)} q_{T-\Tilde{t} | T-t, T-s} (x|y,a) \\
        &= \sum_{\Tilde{x} \in [S]^d} q_{T-\Tilde{t} | T-t, T-s} (\Tilde{x}|y,a) R_{T-t|T-s}(\Tilde{x}, x|a),
    \end{align*}
    where $R_{T-t|T-s}(\cdot,\cdot|a)$ is the rate matrix for the forward CTMC at time $T-t$ conditioned on the future event that the observation at time $T-s$ is $a$. Then, combining both sides, we would get
    \[ \rev{R}_t(x,y) = \frac{\rev{q}_{t|s}(y|a)}{\rev{q}_{t|s}(x|a)} R_{T-t|T-s} (y,x|a). \]

    Lastly, we ensure that the conditioned rate is well-defined. Fix $s' < t' < T'$. Note that
    \[ R_{t'|T'}(y, x|a) = \lim_{s' \to t'} \frac{\partial}{\partial t'} q_{t'|s',T'}(x|y,a). \]
    Thus, we obviously have $R_{t'|T'}(y,x|a) = 0$ for all $x \neq y$ if $q_{T'|t'}(a|y) = 0$. Otherwise, we have
    \begin{align*}
        &R_{t'|T'}(y, x|a) \\
        &= \lim_{s' \to t'} \frac{\partial}{\partial t'} q_{t'|s',T'}(x|y,a) \\
        &= \lim_{s' \to t'} \frac{1}{q_{T'|s'}(a|y)} \frac{\partial}{\partial t'} \brc{ q_{T'|s',t'}(a|y,x) q_{t'|s'}(x|y) } \\
        &= \lim_{s' \to t'} \frac{1}{q_{T'|s'}(a|y)} \frac{\partial}{\partial t'} \brc{ q_{T'|t'}(a|x) q_{t'|s'}(x|y) } \\
        &= \lim_{s' \to t'} \frac{1}{q_{T'|s'}(a|y)} \brc{ - q_{t'|s'}(x|y) \sum_{\Tilde{x} \in [S]^d} R_{t'}(x,\Tilde{x}) q_{T'|t'}(a|\Tilde{x})  + q_{T'|t'}(a|x) \sum_{\Tilde{x} \in [S]^d} q_{t'|s'}(\Tilde{x}|y) R_{t'}(\Tilde{x},x) } \\
        &= \frac{q_{T'|t'}(a|x)}{q_{T'|t'}(a|y)}  R_{t'}(y,x)
    \end{align*}
    which is finite and non-negative when $x \neq y$. The proof is now complete.
\end{proof}

\end{document}